\let\Ginclude@graphics\@org@Ginclude@graphics
\newtheorem{setting}{Setting}
\definecolor{darkcyan}{rgb}{0.0, 0.55, 0.55}
\definecolor{MidnightBlue}{RGB}{25,25,112}
\definecolor{MidnightBlueComplementingGreen}{RGB}{25,112,25}
\definecolor{MidnightBlueComplementingPurple}{RGB}{112,25,112}
\definecolor{MidnightBlueComplementingRed}{RGB}{112,25,69}
\definecolor{WowColor}{rgb}{.75,0,.75}
\definecolor{MildlyAlarming}{rgb}{0.85,0.25,0.1}
\definecolor{SubtleColor}{rgb}{0,0,.50}
\definecolor{antiquefuchsia}{rgb}{0.57, 0.36, 0.51}
\definecolor{fashionfuchsia}{rgb}{0.96, 0.0, 0.63}
\definecolor{jade}{rgb}{0.0, 0.66, 0.42}
\definecolor{caribbeangreen}{rgb}{0.0, 0.8, 0.6}
\definecolor{aquamarine}{rgb}{0.5, 0.8, 0.85}
\definecolor{lightseagreen}{rgb}{0.13, 0.7, 0.67}
\definecolor{darkgreen}{rgb}{0.0, 0.2, 0.13}
\definecolor{darkspringgreen}{rgb}{0.09, 0.45, 0.27}
\definecolor{attentioncolor}{RGB}{152,90,81}
\definecolor{burgred}{RGB}{40,3,22}
\definecolor{AnnieGreen}{RGB}{17,123,92}
\definecolor{Turquoise}{RGB}{64,224,208}
\definecolor{darkjade}{RGB}{0,122,84}
\definecolor{Window1}{RGB}{92,150,31}%
    \definecolor{Window1dark}{RGB}{41,67,13}%
\definecolor{Window2}{RGB}{255,168,28}
    \definecolor{Window2dark}{RGB}{114,75,12}
\definecolor{Window3}{RGB}{255,96,33}
    \definecolor{Window3dark}{RGB}{97,36,12}
\definecolor{InputColor}{RGB}{20,255,177}
    \definecolor{InputColorlight}{RGB}{222,237,229}
\definecolor{RedAlizarin}{rgb}{0.82, 0.1, 0.26}
\definecolor{darkcerulean}{rgb}{0.03, 0.27, 0.49}
    \definecolor{smokyblack}{rgb}{0.06, 0.05, 0.03}
    \definecolor{warmblack}{rgb}{0.0, 0.26, 0.26}
    \definecolor{cobalt}{rgb}{0.0, 0.28, 0.67}
    \definecolor{darkcobalt}{rgb}{0.1, 0.38, 0.77}
\definecolor{MidnightBlue}{RGB}{25,25,112}
\definecolor{MidnightBlueComplementingGreen}{RGB}{25,112,25}
\definecolor{MidnightBlueComplementingPurple}{RGB}{112,25,112}
\definecolor{MidnightBlueComplementingRed}{RGB}{112,25,69}
\definecolor{coolblack}{rgb}{0.0, 0.18, 0.39}
\definecolor{deepjunglegreen}{rgb}{0.0, 0.29, 0.29}
\definecolor{applegreen}{rgb}{0.55, 0.71, 0.0}
\definecolor{WowColor}{rgb}{.75,0,.75}
\definecolor{MildlyAlarming}{rgb}{0.85,0.25,0.1}
\definecolor{SubtleColor}{rgb}{0,0,.50}
\definecolor{SubtleColor2}{rgb}{0.6,0.21,.50}
\definecolor{lasallegreen}{rgb}{0.03, 0.47, 0.19}
\newcounter{margincounter}
\NewDocumentCommand{\AK}{moo}{
    \IfValueF{#2}{
                        {{\scriptsize
                            \textcolor{deepjunglegreen}{
                                \textbf{A:}
                                {#1}
                            }
                        }}
        }
    \IfValueT{#2}{\IfValueF{#3}{
                        {{\scriptsize
                            \textcolor{deepjunglegreen}{
                            \hfill\\
                                \noindent 
                                \textbf{A:}
                                \textit{{#1}}
                            \hfill\\
                            }
                        }}
        }}
    \IfValueT{#3}{
                        \marginnote{{\scriptsize
                            \textcolor{deepjunglegreen}{ 
                            \textbf{A:}
                            \textit{{#1}}
                            }
                        }}
        }
                    }
\newcommand{\Takashi}[1]{{\color{blue}[\textit{#1} Takashi]}}
\newcommand{\eqdef}{\ensuremath{\stackrel{\mbox{\upshape\tiny def.}}{=}}}
\newcommand{\xxx}{{\mathcal{X}}}
\newcommand\numberthis{\addtocounter{equation}{1}\tag{\theequation}}
\newcounter{termcounter}
\renewcommand{\thetermcounter}{\Roman{termcounter}}
\crefname{term}{term}{terms}
\def\term{\@ifnextchar[\term@optarg\term@noarg}
\def\term@optarg[#1]#2{%
  \textup{#1}%
  \def\@currentlabel{#1}%
  \def\cref@currentlabel{[][2147483647][]#1}%
  \cref@label[term]{#2}}
\def\term@noarg#1{%
  \refstepcounter{termcounter}%
  \textup{(\thetermcounter)}%
  \cref@label[term]{#1}}
\title[Is In-Context Universality Enough?]{Is In-Context Universality Enough? \hfill\\
MLPs are Also Universal In-Context}
\begin{document}

\maketitle

\begin{abstract}
The success of transformers is often linked to their ability to perform \textit{in-context} learning. Recent work shows that transformers are universal in context, capable of approximating any real-valued continuous function of a context (a probability measure over $\mathcal{X}\subseteq \mathbb{R}^d$) and a query $x\in \mathcal{X}$. This raises the question: \textit{Does in-context universality explain their advantage over classical models}?
We answer this in the \textbf{negative} by proving that MLPs with trainable activation functions are also \textit{universal in-context}. This suggests the transformer’s success is likely due to other factors like inductive bias or training stability.

\end{abstract}


\section{Introduction}
\label{s:Introduction}

The undeniable success of transformers is attributed to their ability to learn \textit{in-context}, unlike traditional multilayer perceptions (MLPs). This means that transformers can process sequences of tokens and predict the next relevant token. They often exhibit in-context learning (ICL) when trained on large, diverse datasets. This means that given a short sequence of input-output pairs (a prompt) from a specific task, the model can make predictions on new examples without updating its parameters.  

There are three primary pillars in which ICL can be studied: approximation theoretic, statistical, and optimization lenses. We focus on the former of these lenses by contrasting the approximation capacity of the transformer with the classical MLP model by asking 
\begin{equation}
\label{eq:Main_Q}
\tag{Q}
\mbox{\textit{``Does the transformer have an approximation-theoretic advantage over the MLP?''}}
\end{equation}
Due to the work of~\cite{hornik1989multilayer,yarotsky18a_verydeep_COLT,suzukiadaptivity,bolcskei2019optimal,kidger2020universal,kratsios2022universal,ZuoweiHaizhaoZhang_2022_JMPA}, 
and several others, it is by now well-known that MLPs are universal approximators in the classical ``out-of-context'' sense; i.e., meaning that any continuous functions from $\mathbb{R}^d$ to $\mathbb{R}^D$ can be uniformly approximated on compact sets to arbitrary precision by MLPs with enough neurons.  
These classical ``out-of-context'' universal approximation guarantees have since been established for the transformer model~\cite{kim2024transformers,fang2022attention} with matching optimal rates, implying that the transformer is at least as expressive as the MLP ``out-of-context''. 

More recently~\cite{petrov2024universal} showed that recurrent transformers can approximate any function in-context by leveraging prompt engineering.  It was subsequently established by~\cite{furuya2024transformers} that transformers are universal approximators \textit{in-context}, meaning that they can approximate any function which continuously maps \textit{context} and \textit{queries} to predictions uniformly on compact sets to arbitrary precision; again given enough neurons.  These results \textit{suggest} that the transformer may indeed have an advantage in expressivity over the vintage MLP model, hinting that~\eqref{eq:Main_Q} can be hoped to be answered positively since the latter is currently \textit{not known} to be universal in-context.  

Our paper is in the \textbf{negative} direction of~\eqref{eq:Main_Q}.  Our main result (\textit{Theorem}~\ref{thrm:Main__SimpleVersion}) matches the \textit{in-context universality} of the MLP model in the setting of \textit{permutation invariant contexts (PICs)} of~\cite{castin2024smooth,furuya2024transformers}.  We conclude that if the transformer's superior empirical performance over the MLP model cannot be explained by \textit{in-context universality}.  
This suggests that the empirically well-documented advantage of transformers over MLPs must stem from a statistical or optimization phenomenon unique to transformers rather than from in-context universality.

Our main result is complemented by Corollary~\ref{cor:Main_TransformerVersion__SimpleVersion}, which a quantitative version of~\cite{furuya2024transformers}, showing that for any target function, compact set of PICs, and approximation error, a transformer with multiple attention heads per block.  Thus, the in-context approximation power of the transformer seems to match that of the MLP. 

\subsection{Secondary Contributions}
Our second main result (Corollary~\ref{cor:Main_TransformerVersion__SimpleVersion}) is deduced from our main result for ReLU MLPs, using our \textit{transformerification} procedure (Proposition~\ref{prop:transformerification__SparseVersion}), which converts any MLP to a multi-head transformer which exactly preserves its depth and width while only doubling its trainable (non-zero) parameters (thus has the same order of trainable parameters) and with a fixed number of attention heads per block.  This type of ``conversion'' map was also recently obtained to prove quantitative universal approximation guarantees for convolutional neural networks in~\cite{petersen2020equivalence} and spiking neural networks in~\cite{singh2023expressivity}.

A key step in our analysis shows that ReLU MLPs can \textit{exactly} implement the $1$-Wasserstein distance on a broad class of finite probability measures, containing all empirical measures (Proposition~\ref{prop:Computation_W1__relative_verison}).  This auxiliary result can be of independent interest to the neural optimal transport community, e.g.~\cite{korotin2019wasserstein,korotinneural,gazdieva2024robust}.

\subsection{Related Literature}
\label{s:Introduction__ss:RelatedLiterature}

\paragraph{Permutation-Invariant Contexts and their Geometry}
Transformer models generally cannot inherently detect the order of input tokens without using \textit{positional encodings}~\cite{vaswani2017attention,chuconditional,li2021learnable}. These encodings capture the sequence in which tokens appear, addressing the transformer's invariance to row permutations in the input context matrix. 
Since we are studying the transformer architecture and not a specific positional encoding scheme; then, we will adopt the \textit{permutation-invariant} formulation of context of~\cite{castin2024smooth,furuya2024transformers}, rather than the \textit{sequential} formulation of~\cite{garg2022can,zhang2024trained,akyurek2022learning,von2023transformers}, which does not capture the transformer's permutation invariance.

In this paper, we work with a refined version of the permutation-invariant setting introduced by~\cite{furuya2024transformers}. Their framework formalizes contexts as probability measures on a space of tokens within a \textit{dictionary} $\mathcal{X}\subseteq \mathbb{R}^n$. The motivation is that any finite set of tokens $x_1,\dots,x_N\in \mathcal{X}$ can be represented as an empirical measure $\mu=\sum_{n=1}^N w_n \delta_{x_n}$, where $0\le w_1,\dots,w_N\le 1$ (summing to $1$), reflecting the \textit{relative frequency} of each token in the \textit{permutation-invariant context (PIC)} $\mu$, which is unaffected by the indexing of these tokens. While their permutation-invariant setting is well-suited for real-world transformers, it overlooks the fact that real-world \textit{context windows} ($C$), even when tokens are drawn from an infinite dictionary $\mathcal{X}$, still impose constraints.

Our analysis rests on a quantitative refinement of the mathematical idealization in the setting of~\cite{furuya2024transformers}, which assumed that context can be arbitrarily large since any real-world LLM has a finite context window.  Instead, our analysis operates on the realistic subspace $\mathcal{P}_{C,N}(\mathcal{X})$ (formalized in Section~\ref{s:Prelims__ss:PICL}) comprised of probability measures of the form $\sum_{n=1}^N\, w_n\delta_{x_n}$ where the weights $\{w_n\}_{n=1}^N$ are all positive and \textit{divisible by the context window $C$}.  Thus, each distinctly observed token $x_n$ cannot be observed more times than the context window $C$ allows. I.e.\ we prohibit mathematical artifacts such as a relative frequency of $1/\sqrt{2}$.  Under this realism restriction, we can identify any PIC in $\mu\in \mathcal{P}_{C,N}(\mathbb{R}^d)$ an equivalence class $X^{\mu}$ of $N\times (d+1)$ real-matrix
\begin{equation}
\label{eq:Identification}
            \mu
        =
            \sum_{n=1}^N\, w_n\delta_{x_n} 
    \leftrightarrow
            X^{\mu}
        \eqdef 
            \big[(x_n,w_n)_{n=1}^N\big]
\end{equation} 
here, $[A]$ denotes the equivalence class of matrices formed by permuting the rows of the $N \times (d+1)$ matrix $A$. We equip $\mathcal{P}_{C,N}(\mathbb{R}^d)$ with the $1$-Wasserstein metric, which, when restricted to the space of empirical measures $\mathcal{P}_{1,N}(\mathbb{R}^d)$, is shown to be equivalent to the natural quotient metric (see~\cite{BridsonHaefliger_1999NPCBook}) on the corresponding subspace of matrices, quotiented by row permutations. Details are provided in Section~\ref{s:Prelims__ss:PICL}.

\paragraph{Approximation Guarantees}
We highlight that there are alternative, weaker, approximation guarantees for transformers than the one in~\cite{furuya2023globally}, some of which consider contexts; e.g.~\cite{petrov2024universal}, without permutation-invariance, and some which are context-free; e.g.~\cite{kim2024transformers,fang2022attention}. The conclusion remains the same when juxtaposed against Theorem~\ref{thrm:Main__SimpleVersion}; in-context universality is not enough to explain the advantage of the transformer model over the classical MLP model.

Although not directly related to our findings, there is a body of work that examines the expressivity of transformers when applied to discrete token sets as formal systems~\cite{chiang2023tighter,merrill2023expresssive,strobl2024formal,olsson2022context}. Another line of research also explores how positional encoding affects transformer expressivity~\cite{luo2022your}. Nevertheless, the transformer's advantage over the classical MLP model is not identified. We also mention results explaining the ability of transformers to represent specific structures, such as Kalman filtering updates relying on a Nadaraya–Watson kernel density estimator~\cite{goel2024can} and its ability to satisfy certain constraints~\cite{zamanlooy2021trans}.

\paragraph{Statistical Guarantees}
We also mention the growing body of statistical guarantees for in-context learners, such as transformers.  These study the training dynamics of transformers with linear attention mechanisms~\cite{chen2024training,lu2024asymptotic,zhang2024context,siyu2024training,kim2024transformersdyna} and non-asymptotic single-step variants thereof~\cite{duraisamy2024finite}, the minimax statistical optimality of pre-trained transformers trained in-context~\cite{kim2024transformers}, guarantees for transformers trained with non-i.i.d.\ time-series data~\cite{limmer2024reality}, PAC-Bayesian guarantees for transformers~\cite{mitarchuk2024length}, their infinite-width limits~\cite{zhang2024trained} in an NTK fashion~\cite{jacot2018neural}, statistical guarantees for in-context classification using transformers~\cite{reddy2023mechanistic}, and several other works examining the statistical foundations of in-context learning; e.g.~\cite{akyurek2022learning,li2023transformers,bai2024transformers}.  We mention the recent line of work studying the efficiency of trains of thoughts generated by transformers trained in context~\cite{kim2024transformersTOT}.

\section{Preliminaries}
\label{s:Prelims}
This section contains the necessary background for the formulation of our main results. 


\subsection{Permutation-Invariant In-Context Learners}
\label{s:Prelims__ss:PICL}
We first review and add to the notions of \textit{permutation invariant} in context-learners considered in~\cite{furuya2024transformers}.  This relies on the introduction of some tools from probability theory, specifically from \textit{optimal transport} and their refinements use ideas from \textit{metric geometry}; both of which are introduced now.

\paragraph{Permutation-Invariant Context via Probability Measures}

Given a (non-empty) measurable subset $\xxx$ of $\mathbb{R}^d$, for some $d\in \mathbb{N}_+$, we use  $\mathcal P(\xxx)$ to denote the set of all Borel probability measures on $\xxx$ which is then equipped with the topology of weak convergence of measures.
When $1 \le p < \infty$, we denote by $\mathcal P_p(\xxx)$ the subset of probabilities that finitely integrate $x \mapsto \|x-x_0\|^p$ for some (and thus for any) $x_0 \in \xxx$.
Similarly, we equip $\mathcal P_p(\xxx)$ with the Wasserstein $p$-distance $\mathcal W_p$, that is, for $\mu,\nu \in \mathcal P_p(\xxx)$, the metric defined by
$
        \mathcal W_p(\mu,\nu)^p 
    \eqdef
        \inf_{\pi \in \operatorname{Cpl}(\mu,\nu)} 
            \int 
                d_\xxx(x,y)^p
                \,
                \pi(dx,dy)
            ,
$
where $\Pi(\mu,\nu) \eqdef \{ \pi \in \mathcal P(\xxx \times \xxx) \colon \pi \text{ has first marginal } \mu, \text{ second marginal }\nu \}$.  Elements of $\Pi(\mu,\nu)$ are called couplings, or transport plans, between the measures $\mu$ and $\nu$.  
When $p=1$ the Kantorovich-Rubinstein duality allows us to re-express and extend the definition of $\mathcal{W}_1$ to the class of finite signed (Borel) measures on $\mathcal{X}$ via
\[
\mathcal{W}(\mu,\nu)\eqdef 
\sup_{f\in \operatorname{Lip}(\mathcal{X},1)} \int f(x) d (\mu -\nu)(x)
\]
where $\operatorname{Lip}(\mathcal{X},1)$ denotes the set of $1$-Lipschitz functions $f$ on $\mathcal{X}$ with $\|f\|_{\infty}+L_f\le 1$, where $L_f$ is the optimal Lipschitz constant of $f$ and $\|f\|_{\infty}\eqdef \sup_{x\in \mathcal{X}}\,|f(x)|$.   This extension to signed finite measures is typical in the non-linear theory of Banach space theory~\cite{godefroy2015survey,weaver2018lipschitz,ambrosio2020linear} and has applications in deep learning~\cite{von2004distance,kratsios2023approximation,cuchiero2023global}.

Though infinite-length contexts are considered in some parts of the literature, contexts of a finite \textit{but possibly very large} are both more representative of practical use-cases of ICL. They are more amenable to precise quantitative analysis.  Thus, in this paper, we fix: a finite \textit{context window} $C\in \mathbb{N}_+$, a \textit{token number} $N\in \mathbb{N}_+$ of contextual data, and a dictionary set $\mathcal{X}\subseteq \mathbb{R}^d$ with at-least $N$ distinct points.
We define the contextualized simplex $\Delta_{C,N}$
\begin{equation}
\label{eq:contexualized_simplex}
        \Delta_{C,N}
    \eqdef 
        \biggl\{
                w\in [\{c/C\}_{c=1}^C]^N
            :
            \, 
                \sum_{n=1}^N\, w_n =1
        \biggr\}
.
\end{equation}
\begin{figure}[htp!]
    \centering
    \begin{minipage}{.2\textwidth}
        \centering
        \includegraphics[width=\linewidth]{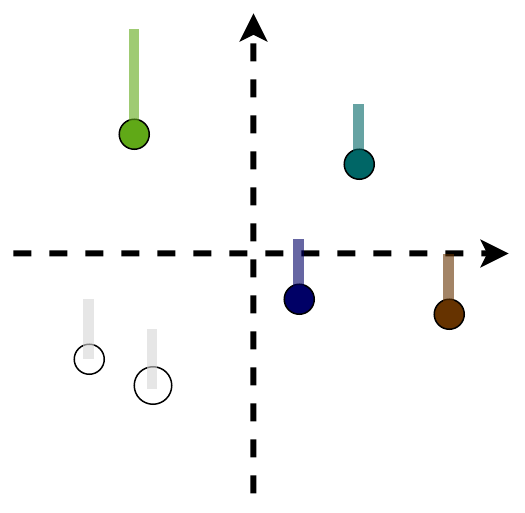}
        \caption{As \textit{Measure} in $\mathcal{P}_{7,6}(\mathbb{R}^2)$}
        \label{fig:Token}
    \end{minipage}%
~
    \begin{minipage}{0.5\textwidth}
        \centering
        \includegraphics[width=.75\linewidth]{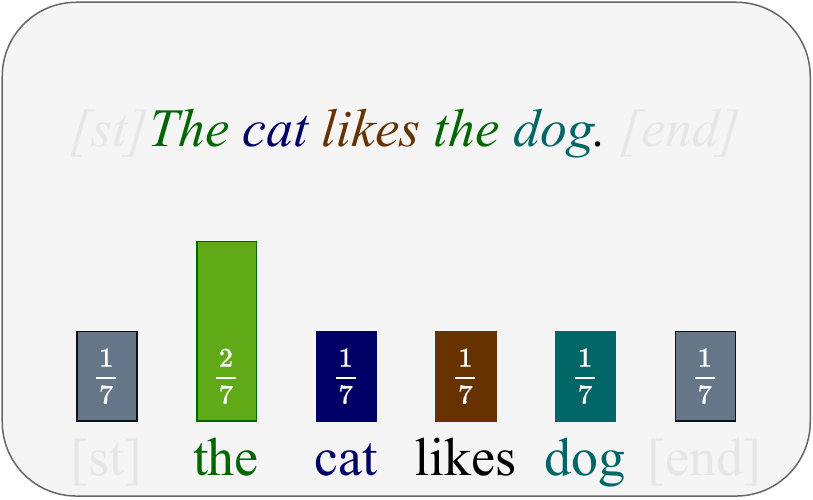}
        \caption{Context as \textit{Tokens}.}
        \label{fig:Language}
    \end{minipage}    
~   
    \begin{minipage}{0.2\textwidth}
        \centering
        \includegraphics[width=\linewidth]{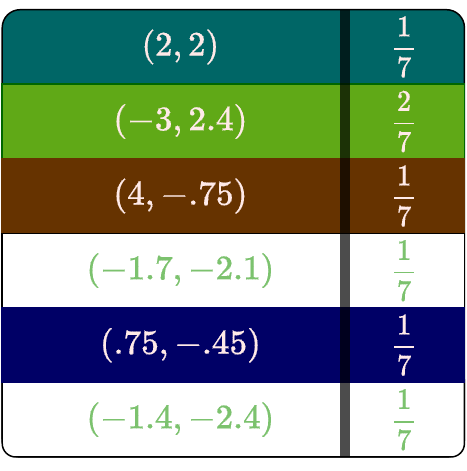}
        \caption{As \textit{Matrix} in $\operatorname{Mat}_{7}^{2,6}/\sim$}
        \label{fig:Matrix}
    \end{minipage}
\caption{\textbf{Permutation-Invariant Context (Definition~\ref{defn:PICs}):}
    In practice, context lengths consist of a finite number of tokens ($N$), each of which is repeated at most a finite number of times; counting repetitions/multiplicities, the total number of tokens is ($C$) finite and fixed.  
    In this representation, the order in which the tokens ``the'' appeared are disregarded.
    \hfill\\
    In this illustration, there are $N=6$ tokens: ``the'', ``cat'', ``likes'', ``dog'', as well as the punctuation tokens indicating the start ``[st]'', and end ``[end]'' of the sentence in the illustrated prompt.  Only ``the'' is repeated twice, and the context window $C=7$.}
    \label{fig:Realistic_Contexts}
\end{figure}
In short, $\Delta_{C,N}$ is a discretization of the relative interior of the $N$-simplex.
The space PICs consists of all probability measures in $\mathcal{P}_1(\mathbb{R}^d)$ counting mass on at-most $N$ distinct tokens where each token appears at-most $C$ times; formalized as follows.

\begin{definition}[Permutation-Invariant Context]
\label{defn:PICs}
Let $C,N,d\in \mathbb{N}_+$ and $\mathcal{X}\subseteq \mathbb{R}^d$ be non-empty.  
The space $\mathcal{P}_{C,N}(\mathcal{X})$ of permutation-invariant contexts (PICs) on $\mathcal{X}$ is the metric subspace of $\mathcal{P}_1(\mathcal{X})$ consisting of all $\mu=\sum_{n=1}^N\, w_n\delta_{x_n}$ satisfying
\begin{equation}
\label{eq:PIC}
    \underbrace{
        x_1,\dots,x_N \in  \mathcal{X}
    }_{\text{Domain}}
    \mbox{, }
        \underbrace{
            x_1\neq \dots \neq x_N
    ,
        }_{\text{No token duplication}}
    \mbox{ and }  
        \underbrace{
        w\in \Delta_{C,N} 
        }_{\text{Context limit}}
.
\end{equation}
\end{definition}

Probability measures provide an intuitive mechanism by which we may interpret the frequency of any context in a given context window while ignoring ad-hoc orders due to the permutation invariance of the points $x_1,\dots,x_N$ representing any $\sum_{n=1}^N\,w_n\,\delta_{x_n}\in \mathcal{P}_{C,N}(\mathcal{X})$.  Nevertheless, most deep learning models such as transformers, GNNs, or MLPs do not act on measures but on matrices.  To bridge the link between our mathematically natural formulation of PIC in~\eqref{eq:PIC} and the input/outputs of these models, we first identify PICs with certain \textit{equivalence classes of matrices}.

\paragraph{Matrix Representations of PICs}
Let $\operatorname{Mat}^{d,N}_C$ denote the set of $N\times (d+1)$ matrices $X$ whose rows are given by $(x_1,w_1),\dots,(x_N,w_N)\in \mathbb{R}^{d+1}$ and such that $
    (w_1,\dots,w_N)\in \Delta_{C,N}
$.  

We \textit{quotient} $\operatorname{Mat}^{d,N}_C$ by the equivalence relation $X\sim Y$ if there is a $N\times N$ permutation matrix $\Pi$ for which
$
    Y=\Pi X
$
.
Our analysis relies on the map 
$\Phi: 
    \mathcal{P}_{C,N}(\mathcal{X})
    \rightarrow 
    \operatorname{Mat}^{d,N}_C/\sim$
\begin{equation}
\label{eq:identification}
\begin{aligned}
    \sum_{n=1}^N\,
        w_n
        \delta_{x_n}
    & \mapsto 
        \begin{bmatrix}
            (x_1,w_1)\\
            \vdots\\
            (x_N,w_N)
        \end{bmatrix}
\end{aligned}
\end{equation}
which is easily verified to be a bijection between $\mathcal{P}_{C,N}(\mathcal{X})$ and $\operatorname{Mat}^{d,N}_C/\sim$.  
The identification in~\eqref{eq:identification} allows us to put a \textit{metric structure} on $\operatorname{Mat}_C^{d,N}\sim$ which is identical to the $1$-Wasserstein distance on $\mathcal{P}_{C,N}(\mathcal{X})$ inherited from $\mathcal{P}_1(\mathbb{R}^d)$.  
This metric, which simply denote by $\mathcal{W}:\operatorname{Mat}^{d,N}_C/\sim\times \operatorname{Mat}^{d,N}_C/\sim\to [0,\infty)$, sends any pair of $
X^{\mu}\eqdef 
[(x_n,w_n)_{n=1}^N]$ and $
Y^{\nu}\eqdef 
[(y_n,u_n)_{n=1}^N]$ of equivalence classes of matrices in $\operatorname{Mat}^{d,N}_C/\sim$ to
\begin{equation}
\label{eq:W1_onMat}
    \mathcal{W}\big(
            X^{\mu}
        ,
            Y^{\nu}
    \big)
\eqdef
    \mathcal{W}\biggl(
            \Phi^{-1}\big(
                X^{\mu}
            \big)
        ,
            \Phi^{-1}\big(
                Y^{\nu}
            \big)
    \biggr)
=
    \mathcal{W}_1\Biggl(
        \sum_{n=1}^N\,
            w_n
            \delta_{x_n}
    ,
        \sum_{n=1}^N\,
            u_n
            \delta_{y_n}
    \Biggr)
.
\end{equation}
Thus, by construction $\operatorname{Mat}_C^{d,N}$ metrized by $\mathcal{W}$ is \textit{isometric} (i.e.\ indistinguishable as a metric space) from $\mathcal{P}_{C,N}(\mathcal{X})$ equipped with the $1$-Wasserstein distance {on signed finite measures, as defined in~\cite{villani2009optimal} on probability measures on~\cite{ambrosio2020linear} on finite measures. 

\paragraph{A Geometric Interpretation of the Metric $\mathcal{W}$}
Before moving on, we further motivate the choice of metric $\mathcal{W}$ by considering the special case where the context window $C$ exactly equals to the number of tokens $N$.  In this case, each weight $w_1=\dots=w_C=\frac1{N}$ in~\eqref{eq:PIC} and $\mathcal{P}_{N,N}(\mathcal{X})$ consists only of empirical measures with distinct support points in $\mathcal{X}$.  Whence, $\operatorname{Mat}_C^{N,N}$ is identifiable with the space of equivalence classes of $N\times d$ matrices with distinct rows, up to row permutation and the identification in~\eqref{eq:identification} simplifies to
\begin{equation}
\label{eq:identification__simplified}
\smash{
        \sum_{n=1}^N\, \frac1{N}\delta_{x_n}
    \leftrightarrow
        \big[
            (x_n)_{n=1}^N
        \big]
.
}
\end{equation}
Now, $\operatorname{Mat}^{d,N}_N/\sim$ is the quotient space of the space $\operatorname{Mat}^{d,N}_N$, namely, the space of $d\times N$ matrices with distinct rows, by (row) permutations.   
We, momentarily, equip $\operatorname{Mat}_N^{d,N}$ with the \textit{Fr\"{o}benius norm} $\|\cdot\|_2$.
Since the space of $N\times N$ permutation matrices $\mathbb{S}^N$ is a finite group acting \textit{by isometries} on $\operatorname{Mat}_N^{d,N}$; i.e.\ for each $\Pi\in \operatorname{S}^N$ and every $X\in \operatorname{Mat}_N^{d,N}$ we have
\[
    \|X\|_F = \|\Pi\cdot X\|_2
\]
then~\cite[Exercise 8.4 (3) - page 132]{BridsonHaefliger_1999NPCBook} guarantees that the (natural) quotient topology on $\operatorname{Mat}_N^{d,N}/\sim$ is metrized by the metric $\operatorname{dist}:\operatorname{Mat}_N^{d,N}/\sim\times \operatorname{Mat}_N^{d,N}/\sim\to [0,\infty)$ defined for any 
$[X],[Y]$ in $\operatorname{Mat}^{d,N}_C/\sim$ by
$
        \operatorname{dist}([X],[Y])
    \eqdef 
        \inf_{\Pi\in \mathbf{S}^N}
        \,
        \|\Pi X - Y\|_2
$
.
The geometric motivation for $\mathcal{W}$ is drawn from the fact that $\mathcal{W}\asymp \operatorname{dist}$ on $\operatorname{Mat}_N^{d,N}/\sim$.

\begin{proposition}[{Equivalence of $\mathcal{W}$ and the Natural Quotient Metric on PICs}]
\label{prop:Identifcation}
Let $N,d\in \mathbb{N}_+$ and let $\mathcal{X}\subseteq \mathbb{R}^d$ be non-empty.  Then, there are absolute constant $0<c\le C$ such that: for each $[X],[Y]\in \mathcal{P}_N^{N,d}(\mathcal{X})$ we have
\[
    c\mathcal{W}([X],[Y]) \le \operatorname{dist}([X],[Y]) \le C\mathcal{W}([X],[Y])
.
\]
In particular, the map $\Phi:(\mathcal{P}_{N,N}(\mathcal{X}),\mathcal{W}_1)\to (\operatorname{Mat}^{d,N}_N/\sim,\operatorname{dist})$ is a homomorphism.
\hfill\\
Furthermore, $\mathcal{W}$ metrizes the natural quotient topology on $\operatorname{Mat}_N^{d,N}/\sim$.
\end{proposition}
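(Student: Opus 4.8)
The plan is to reduce both $\mathcal{W}$ and $\operatorname{dist}$ to minimization problems over the symmetric group and then compare the two via the elementary equivalence of the $\ell^1$ and $\ell^2$ norms on $\mathbb{R}^N$. Fix representatives $[X]=[(x_n)_{n=1}^N]$ and $[Y]=[(y_n)_{n=1}^N]$. On the $\operatorname{dist}$ side there is nothing to do: writing the Fr\"obenius norm as the Euclidean norm of the stacked rows gives $\operatorname{dist}([X],[Y]) = \min_{\pi}\big(\sum_{n=1}^N\|x_n-y_{\pi(n)}\|^2\big)^{1/2}$, the minimum running over permutations $\pi$ of $\{1,\dots,N\}$. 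For the $\mathcal{W}$ side I would use~\eqref{eq:W1_onMat} to rewrite $\mathcal{W}([X],[Y]) = \mathcal{W}_1\big(\tfrac1N\sum_n\delta_{x_n},\tfrac1N\sum_n\delta_{y_n}\big)$ and then invoke the Birkhoff--von Neumann theorem: a coupling of two uniform $N$-atom measures is $1/N$ times a doubly stochastic $N\times N$ matrix, the transport cost is linear in that matrix, and a linear functional over the Birkhoff polytope is minimized at a vertex, i.e.\ at a permutation matrix. This yields $\mathcal{W}([X],[Y]) = \tfrac1N\min_\pi\sum_{n=1}^N\|x_n-y_{\pi(n)}\|$.

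With these two formulas in hand the comparison is routine. For each permutation $\pi$ set $a^\pi \eqdef (\|x_n-y_{\pi(n)}\|)_{n=1}^N\in\mathbb{R}_{\ge0}^N$, so that $\mathcal{W}=\tfrac1N\min_\pi\|a^\pi\|_{\ell^1}$ and $\operatorname{dist}=\min_\pi\|a^\pi\|_{\ell^2}$; let $\pi_W$ and $\pi_D$ realize the respective minima. Feeding the standard bounds $\|v\|_{\ell^2}\le\|v\|_{\ell^1}\le\sqrt{N}\,\|v\|_{\ell^2}$ into these expressions gives, on the one hand, $\operatorname{dist}\le\|a^{\pi_W}\|_{\ell^2}\le\|a^{\pi_W}\|_{\ell^1}=N\mathcal{W}$, and on the other hand $N\mathcal{W}=\|a^{\pi_W}\|_{\ell^1}\le\|a^{\pi_D}\|_{\ell^1}\le\sqrt{N}\,\|a^{\pi_D}\|_{\ell^2}=\sqrt{N}\,\operatorname{dist}$, where the middle inequalities use that $\pi_W$ minimizes the $\ell^1$ objective and $\pi_D$ the $\ell^2$ one. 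Rearranging gives $\sqrt{N}\,\mathcal{W}\le\operatorname{dist}\le N\mathcal{W}$, i.e.\ the asserted equivalence with $c=\sqrt{N}$ and $C=N$ (and $0<c\le C$); the two extreme cases, translating all rows by a common vector versus moving a single row, show these constants are sharp.

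For the ``in particular'' and ``furthermore'' clauses I would argue purely formally: bi-Lipschitz equivalent metrics induce the same topology and the same class of homeomorphisms. Since $\mathbf{S}^N$ acts on $\operatorname{Mat}_N^{d,N}$ by Fr\"obenius isometries, \cite[Ex.\ 8.4(3)]{BridsonHaefliger_1999NPCBook} (already cited in the excerpt) shows $\operatorname{dist}$ metrizes the quotient topology on $\operatorname{Mat}_N^{d,N}/\sim$; the equivalence just proved then transfers this to $\mathcal{W}$. Finally, $\Phi$ is by construction a bijective isometry from $(\mathcal{P}_{N,N}(\mathcal{X}),\mathcal{W}_1)$ onto $(\operatorname{Mat}_N^{d,N}/\sim,\mathcal{W})$ — this is exactly the content of~\eqref{eq:W1_onMat} — so composing with the bi-Lipschitz identity map into $(\operatorname{Mat}_N^{d,N}/\sim,\operatorname{dist})$ exhibits $\Phi$ as a homeomorphism onto the latter.

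The one genuinely nonelementary ingredient, and the step I expect to need the most care in the full write-up, is the reduction of $\mathcal{W}_1$ between equal-weight empirical measures to the optimal assignment cost, i.e.\ arguing that the linear program defining $\mathcal{W}_1$ over the Birkhoff polytope is solved at an extreme point; everything downstream is the $\ell^1$--$\ell^2$ inequality and standard facts about quotient metrics. A secondary subtlety worth a sentence is that the optimal permutations for $\mathcal{W}$ and for $\operatorname{dist}$ need not agree, which is precisely why each optimum is compared against the \emph{other} objective's optimizer rather than against a common permutation.
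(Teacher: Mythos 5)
Your proposal is correct and follows essentially the same route as the paper: reduce $\mathcal{W}_1$ between uniform $N$-point empirical measures to an optimal-assignment (permutation) problem — which you derive via Birkhoff--von Neumann where the paper simply cites \cite[Proposition 2.1]{peyre2019computational} — then compare with $\operatorname{dist}$ through an $\ell^1$--$\ell^2$ norm comparison, and invoke \cite[Exercise 8.4(3)]{BridsonHaefliger_1999NPCBook} for the quotient-topology and homeomorphism claims. If anything, your two-sided bound $\sqrt{N}\,\mathcal{W}\le\operatorname{dist}\le N\,\mathcal{W}$, with the careful point that the optimal permutations for the two objectives may differ, is more explicit than the paper's write-up (which only spells out one inequality); just note that these constants depend on $N$ (and on $d$ if the ground metric on $\mathcal{X}$ is taken to be $\ell^1$ rather than Euclidean), exactly as in the paper's own argument.
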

Proposition~\ref{prop:Identifcation} shows that $\mathcal{W}$ is well-motivated in that it is equivalent to the most natural metric structure on the quotient space $\operatorname{Mat}_C^{N,d}/\sim$, at-least, in the special case where $N=C$. 

\subsubsection{The Size of Sets of PICs}
\label{s:Prelims__ss:Background__ss:size_of_compacta}
This section quantifies the \textit{size} of compact sets of PICs.  The reader not interested in quantitative universal approximation guarantees is encouraged to skip it.


\paragraph{Basic Definitions}
Fix a compact set $\mathcal{K}$ of contexts in $\mathcal{P}_{C,N}(\mathcal{X})\times \mathcal{X}$.  
We denote the ball at any point $x\in X$ of radius $r>0$ by $B((\mu,x),r)\eqdef\{(\nu,z)\in \mathcal{K}:\,\mathcal{W}_1(\mu,x)+\|z-x\|_1<r\}$.  We say that $(\mu,x),(\nu,y)\in \mathcal{K}$ are $r$-separated, for some $r>0$, if $\mathcal{W}_1(\mu,\nu)+\|x-y\|_1\ge r$.

\paragraph{Metric Notions of Dimension}
We say $\mathcal{K}$ is $q$-doubling $0\le q<\infty$ if there is a constant $C>0$ such that: for each pair of radii $0 < r \leq R \leq \text{diam}(\mathcal{K})$ and every subset $A \subseteq \mathcal{K}$ of diameter at-most $R$, there is no more than $C\left(\frac{R}{r}\right)^q$ point in $A$ which are $r$-separated. The minimum (infimum) such number $q\ge 0$ is called the \textit{doubling (Assouad) dimension} of $\mathcal{K}$.


Every such compact doubling metric space $\mathcal{K}$ carries a (Borel) measure $\mathbb{P}$ which ascribes such that, there is a ``doubling constant'' $C_{\mathbb{P}}>0$ with the property that: for each $(\mu,x)\in \mathcal{K}$ and each $r>0$ we have 
$
\mathbb{P}\big(B(x,r)\big)
\le C_{\mathbb{P}}
$
,
see~\cite[Theorem 3.16]{cutler1995density} and~\cite{MR1443161}.  Examples include the uniform measure on $[0,1]^q$ and the normalized Riemannian measure on a compact Riemannian manifold.  Unlike these familiar measures, which are compatible with their metric structure, some general metric spaces admit pathological metric measures which are incompatible with their metric dimension; see~\cite [Example 3.1]{PackingAlfors_MatematisceZeitschrift_2010}.  To avoid these pathologies, we focus on \textit{Ahlfors $q$-regular} measures $\mathbb{P}$ on $\mathcal{K}$
which attribute  $\Omega(r^q)$ mass to any ball of radius $r>0$ in $\mathcal{K}$; i.e.\ there are constants $0<c\le C$ such that:  for $x\in X$ and every $r>0$ 
\begin{equation}
\label{eq:Ahlors}
\smash{
        cr^q
    \le 
        \mathbb{P}\big(
            B(x,r)
        \big)
    \le
        Cr^q
.
}
\end{equation}
We summarize the doubling and Ahlfors regularity requirements as follows.
\begin{definition}[$q$-Dimensional PIC]
\label{defn:dimension}
Let $q>0$.
A compact subset $\mathcal{K}\subseteq \mathcal{P}_{C,N}(\mathcal{X})\times \mathcal{X}$ of PICs equipped with a Borel probability measure $\mathbb{P}$ is said to be $q$-dimensional, its doubling dimension is $q$ and $\mathbb{P}$ is Ahlfors $q$-regular.
\end{definition}


The compatibility between the metric structure on $\mathcal{K}$ and the measure $\mathbb{P}$ given by the Ahlfors condition in condition~\eqref{eq:Ahlors} allows us to intrinsically quantify the size of subsets of $\mathcal{K}$ on which our approximation may fail.  These subsets, illustrated later as the small trifling region in Figure~\ref{fig:TrifflingRegions}, are non-Euclidean versions of the trifling regions in classical optimal universal approximation guarantees for MLPs; as in~\cite{ZuoweiHaizhaoZhang_2022_JMPA}.

\subsection{Deep Learning and Transformers}
\label{s:Background__ss:DeepLearning}

\begin{minipage}{0.45\textwidth}
\centering
\includegraphics[width=.25\linewidth]{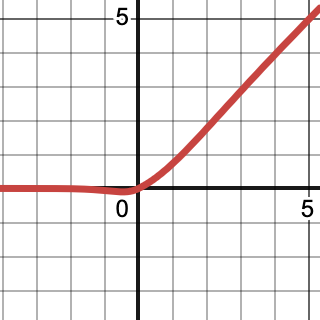}
\captionof{figure}{The activation function in~\eqref{eq:activation}.} 
\label{fig:aftive}
\end{minipage}%
\begin{minipage}{0.45\textwidth}
\begin{equation}
\label{eq:activation}
        \sigma_{\theta}(x)
    \eqdef 
        \begin{cases}
            \alpha_1 x^p & \mbox{ if } x\ge 0\\
            \alpha_2 x & \mbox{ if } x < 0
        \end{cases}
.
\end{equation}
\end{minipage}

\paragraph{Trainable Activation Functions}
As most modern deep learning implementations, such as transformers using trainable Swish~\cite{ramachandran2017searching} or GLU activation variants~\cite{shazeer2020glu}, KANs~\cite{liu2024kan,ismailov2024addressing} activated by trainable B-splines and SiLUs, or neural snowflakes which implement trainable fractal activation functions~\cite{de2023neural,borde2024neural}, have shifted towards trainable activation functions. 
We follow suit by using trainable activation functions in our MLPs, which can implement the Swish and leaky ReLUs activation function, skip connections blocks, and $\operatorname{ReQU}(x)=\max\{0,x\}^2$, see Appendix~\ref{a:Activ} for details
and defined by

\paragraph{The MLP Model}
In what follows, we will often make use of fully-connected MLPs with activation function $\sigma$ as in~\eqref{eq:activation}.  For any $d,D\in \mathbb{N}_+$, we recall that an MLP with activation function $\sigma$ is a map $f:\mathbb{R}^d\to\mathbb{R}$ with iterative representation
\begin{equation}
\label{eq:representation_MLP}
    \begin{aligned}
    f(x)
    \eqdef 
    A^{(J)} X^{(J)}
    ,\quad
    \mathbf{X}^{(j+1)} 
    \eqdef 
    A^{(j)} \sigma_{\bar{\alpha}^{(j)}}\bullet(
        \mathbf{X}^{(j)}
            +
        b^{(j)})
    %
    ,\quad
    \mathbf{X}^{(0)} 
    \eqdef \mathbf{X}
    \end{aligned}
\end{equation}
where $j$ runs from $0$ to $J-1$ and for each $j\in [J]$ $A^{(j)}$ is a $d_{j+1}\times d_j$ matrix with $d_0=d$ and $d_{J+1}=D$.  Here, $J$ is the depth of the transformer and $\max_{j=1,\dots,J-1}\,d_j$ is its width.

\paragraph{Transformers with Multi-Head Attention}
Building on the notation in~\eqref{eq:representation_MLP}, we formalize a transformer model.  We thus first formalize a single attention head of ~\cite{bahdanau2014neural}, with temperature parameter $\lambda>0$, as a map sending any PIC $\mu \sim \mathbf{X}\in \mathbb{R}^{N\times d}$ to the following probability measure in $\mathcal{P}_{C,N}(\mathcal{Y})$
\[
    \operatorname{Attn}(x,\mathbf{X}|Q,K,V,\lambda)
    \eqdef
        \sum_{n=1}^N\,
            w_n\,
            \frac{
                e^{\lambda \langle Q x, K\mathbf{X}_n\rangle/\sqrt{d}}
            }{
                \sum_{m=1}^N\,
                    e^{\lambda \langle Q x, K\, \mathbf{X}_m \rangle/\sqrt{d}}
            }
        \,
        \delta_{
            (V\mathbf{X})_n
        }
.
\]
A transformer network operates by iteratively applying deep ReLU MLPs to the rows of its input matrix and then applying the attention mechanisms matrix-wise.  Thus, a \textit{transformer with multi-head attention} is a map $\mathcal{T}:\mathbb{R}^{N(d+1)}\to \mathbb{R}^{M\times D}$ sending any $\mathbf{X}\eqdef (x_n,w_n)_{n=1}^N \in \mathbb{R}^{N(d+1)}$ to $\mathcal{T}(\mathbf{X})\in \mathbb{R}^{M\times D}\eqdef \mathbf{X}^{(J)}$, where $\mathbf{X}^{(J)}$ is defined recursively by
\begin{equation}
\label{eq:representation_transformer}
    \begin{aligned}
    \mathbf{X}^{(j+1)}
        \eqdef
    \underbrace{
        \bigoplus_{h=1}^{H_j}\,
            \operatorname{Attn}(\mathbf{Z}^{(j+1)}
            |\mathcal{Q}_h,\mathbf{K}_h,\mathcal{V}_h)
    }_{\text{Multi-head Attention}}
    \mbox{ and }
    \mathbf{Z}^{(j+1)} 
    \eqdef 
    \underbrace{
    \sigma_{\bar{\alpha}^{(j)}}\bullet(
        \mathbf{X}^{(j)}
            +
        b^{(j)})
    }_{\text{Activation and Bias}}
    %
    \end{aligned}
\end{equation}
where $\mathbf{X}^{(0)} \eqdef \mathbf{X}$, $j$ runs from $0$ to $J-1$ and, for each $j=1,\dots,J$, $b^{(j)}$ is a $d_{j+1}\times d_j$ \textit{bias matrix}, $d_0=d$, and $d_J=D$.  
Here, $J$ is called the depth of the transformer, $\max_{j=1,\dots,J-1}\,H_j$ is the maximum number of attention heads, and $\max_{j=1,\dots,J-1}d_j$ is called its width.

Our transformers mirror approximation guarantees for the MLP model, which do not include regularization layers such as skip connections or normalization. We are using the standard multi-head attention mechanism and we do \textit{not} considered in the PIC guarantees of~\cite{furuya2024transformers}, \cite{petrov2024universal}, nor a single \textit{linearized surrogate} head attention of real-world multi-head attention mechanisms, as is often studied in the statistical literature~\cite{zhang2024trained,lu2024asymptotic,kim2024transformers}, or a single attention head as in~\cite{kratsios2023approximation,kim2024transformers}.  

\section{Main Result}
\label{s:Main}

\begin{setting}
\label{Our_Setting}
Fix context lengths $N,M\in \mathbb{N}_+$, dimensions $d,D\in \mathbb{N}_+$, and subsets $\mathcal{X}\subset\mathbb{R}^d$ and $\mathcal{Y}\subset\mathbb{R}^D$ with $\mathcal{Y}$ closed.  
Let $\omega$ be a modulus of continuity and $\mathcal{K}\subseteq \mathcal{P}_{C,N}(\mathcal{X}) \times \mathcal{X}$ be a compact subset of PICs.  
Let $\mathbb{P}$ be an Ahlfors $q$-regular probability measure on $\mathcal{K}$
.
\end{setting}



\begin{theorem}[MLPs are Universal Approximators for PICs]
\label{thrm:Main__SimpleVersion}
In the Setting~\ref{Our_Setting}, for any 
 uniformly continuous $f:\mathcal{K}\to \mathcal{P}_M(\mathcal{Y})$ with an increasing modulus of continuity $\omega$.  
For each $\omega$-uniformly continuous contextual mapping $f:\mathcal{P}_{C,N}(\mathcal{X}) \times \mathcal{X} \to \mathcal{P}_M(\mathcal{Y})$
and approximation error $0<\delta\le \operatorname{diam}(\mathcal{K})$ and every confidence level $0<\delta_{\ast}<\delta$: there is a ReLU MLP $\hat{f}:\mathbb{R}^{N\times (d+1)} \times \mathbb{R}^{d}  \to \mathbb{R}^{M \times D}$ satisfying:
\begin{enumerate}
    \item[(i)] \textbf{High-Probability Uniform Estimate:} 
    We have the typical uniform estimate
    \[
            \mathbb{P}\big(
                \mathcal{W}_1\big(
                            f(\mu,x)
                            ,
                            \hat{f}(\mu,x)
                        \big)
                \le \omega(\delta)
            \big)
        \gtrsim
            1-(\delta^q-\delta_{\ast}^q).
    \]
    \item[(ii)] \textbf{Tail-
    Estimates:} On the ``bad'' set $\mathcal{R}_{\delta} \eqdef \{(\mu,x)\in \mathcal{K}:\, \mathcal{W}_1\big(
                            f(\mu,x)
                            ,
                            \hat{f}(\mu)
                        \big)> \omega(\delta)\}$: 
    we have the tail-
    estimate
    \[
            \mathbb{E}_{\mathbb{P}}\big[
                \mathcal{W}_1(f(\mu,x),\hat{f}(\mu,x))
            \big|
                \mu \in \mathcal{R}^{\star}_{\delta}
            \big]
    \lesssim 
    \omega(\delta)
    +
    (\delta^q-\delta_{\ast}^q)
    .
    \]
\end{enumerate}
If $\mathcal{K}$ is $q$-dimensional and $d$ is large enough then, the depth and width of $\hat{f}$ are $\mathcal{O}\big(
\frac{
    dN^{2C}
}{
    \omega^{-1}(\varepsilon)^{2q}
}
\big)$ and $\mathcal{O}\big(
\frac{
    dN^{2C}
}
{
    \omega^{-1}(\varepsilon)^q
}
\big)$, respectively.
\end{theorem}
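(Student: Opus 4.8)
The plan is to reduce the claim to a Euclidean approximation problem and then settle it by a \emph{quantization} construction of the type behind optimal-rate universal approximation for MLPs (the ``trifling region'' method), the single new ingredient being that the feature-extraction stage is realized by an \emph{exact} ReLU implementation of $\mathcal W_1$. Concretely: (a) encode each pair $(\mu,x)$ by the vector of its $\mathcal W_1$-distances (plus the $\ell^1$-distance in the query coordinate) to a fixed finite net of $\mathcal K$ — a \emph{permutation-invariant} map, hence well defined on the matrix representatives of~\eqref{eq:identification}; (b) feed this vector into a ReLU ``$\operatorname{argmin}$-plus-lookup'' subnetwork that emits a stored matrix representing the value of $f$ at the nearest net point; (c) bound the error by uniform continuity on a ``good'' region and by Ahlfors regularity on the complementary ``bad'' region $\mathcal R_\delta$. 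For the output side, every element of $\mathcal P_M(\mathcal Y)$ is an empirical measure on $M$ points of $\mathcal Y$, so, viewing $\mathcal Y^M$ as a coordinate chart inside $\mathbb R^{M\times D}$, it suffices that $\hat f$ return \emph{some} matrix whose associated measure is $\mathcal W_1$-close to $f(\mu,x)$; which representative is returned is irrelevant since the error is measured between measures. Everything below is assembled from affine maps and $\max$/$\min$/ReLU gadgets, so $\hat f$ is a ReLU MLP (a special case of~\eqref{eq:activation}).

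\textbf{The $\mathcal W_1$ feature map.} Fix the net resolution to be $\delta$. Since $\mathcal K$ is $q$-doubling (Definition~\ref{defn:dimension}), pick a $\delta$-net $\{(\mu_i,x_i)\}_{i=1}^K\subseteq\mathcal K$ with each $\mu_i\in\mathcal P_{C,N}(\mathcal X)$ and $K=\mathcal O\big((\operatorname{diam}\mathcal K/\delta)^q\big)$, and define
\[
    \Psi(\mu,x)\eqdef\big(\mathcal W_1(\mu,\mu_i)+\|x-x_i\|_1\big)_{i=1}^{K}\in\mathbb R^{K}.
\]
By Proposition~\ref{prop:Computation_W1__relative_verison}, each coordinate $\mu\mapsto\mathcal W_1(\mu,\mu_i)$ is computed \emph{exactly} by a ReLU MLP of depth and width polynomial in $N^{2C}$ and $d$: one enumerates the $\mathcal O(N^{2C})$ non-negative integer transport matrices of total mass $C$ between two PICs with $N$ atoms and $1/C$-divisible weights, discards those violating the marginal constraints, and minimizes the transport cost, all with $\max$/$\min$/ReLU gadgets; the maps $x\mapsto\|x-x_i\|_1$ are exactly ReLU-representable. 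Stacking these $K$ subnetworks in parallel realizes $\Psi$. Crucially, $\Psi$ is invariant under permuting the rows of its matrix argument because $\mathcal W_1$ is; hence $\hat f$ descends to $\operatorname{Mat}^{d,N}_C/\!\sim$ and is well defined irrespective of the row order fed in, and Proposition~\ref{prop:Identifcation} certifies that this is the geometrically natural metric.

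\textbf{Quantization and error control.} Compose $\Psi$ with a ReLU subnetwork $L$ that selects $i^\star\eqdef\operatorname{argmin}_i\Psi_i$ via pairwise comparisons and returns, through a one-hot-weighted sum of the stored values $f(\mu_i,x_i)$, the stored matrix representing $f(\mu_{i^\star},x_{i^\star})\in\mathcal P_M(\mathcal Y)$; set $\hat f\eqdef L\circ\Psi$. If $(\mu,x)$ lies inside its Voronoi cell with enough margin, then $\mathcal W_1(\mu,\mu_{i^\star})+\|x-x_{i^\star}\|_1\le\delta$, so $\omega$-uniform continuity of $f$ gives $\mathcal W_1\big(f(\mu,x),\hat f(\mu,x)\big)\le\omega(\delta)$, which is part~(i). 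The ``bad'' set $\mathcal R_\delta$ is contained in a union of thin slabs about the Voronoi boundaries of the net (where the two smallest coordinates of $\Psi$ are within a $\delta_\ast$-controlled width of each other); Ahlfors $q$-regularity of $\mathbb P$ (condition~\eqref{eq:Ahlors}) bounds the mass of each slab and, summed over the boundaries, telescopes to the stated $\lesssim\delta^q-\delta_\ast^q$, giving the high-probability statement of part~(i); on $\mathcal R_\delta$ one uses the crude bound $\mathcal W_1(f(\mu,x),\hat f(\mu,x))\le\operatorname{diam}(f(\mathcal K))\lesssim\omega(\operatorname{diam}\mathcal K)$ together with that mass bound to obtain the conditional tail estimate of part~(ii).

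\textbf{Size count and the main obstacle.} The network is the parallel stack of $K=\mathcal O\big(\omega^{-1}(\varepsilon)^{-q}\big)$ copies of the $\mathcal W_1$-subnetwork, each of size $\mathcal O(dN^{2C})$, followed by the comparison/lookup stage $L$; a (deliberately non-optimized) depth--width accounting — in which the pairwise-comparison and table-lookup layers of $L$ contribute the extra $\omega^{-1}(\varepsilon)^{-q}$ factor to the depth — yields depth $\mathcal O\big(dN^{2C}/\omega^{-1}(\varepsilon)^{2q}\big)$ and width $\mathcal O\big(dN^{2C}/\omega^{-1}(\varepsilon)^{q}\big)$, the $d$-large-enough hypothesis only placing us in the regime where these terms dominate. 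I expect the genuine difficulty to sit entirely in the feature-map step: implementing $\mathcal W_1$ \emph{exactly} — not approximately — by a finite ReLU MLP on all of $\mathcal P_{C,N}(\mathcal X)$ while keeping the map permutation invariant, which is precisely Proposition~\ref{prop:Computation_W1__relative_verison} and from which the whole $N^{2C}$ dependence is inherited. The only other delicate point is the measure-theoretic bookkeeping of the bad set, i.e.\ showing $\mathcal R_\delta$ is a controlled union of boundary slabs whose $\mathbb P$-mass telescopes to $\delta^q-\delta_\ast^q$, which is exactly where Ahlfors $q$-regularity is used.
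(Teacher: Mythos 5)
Your overall strategy --- quantize on a $\delta$-net of $\mathcal K$, extract features by an \emph{exact} ReLU implementation of $\mathcal W_1$ (plus $\|\cdot\|_1$ in the query coordinate), output the stored value of $f$ at a selected net point, and control the exceptional set by Ahlfors regularity --- is the paper's strategy, and your identification of Proposition~\ref{prop:Computation_W1__relative_verison}/Lemma~\ref{lem:implementation_W1} as the crux is correct. The gap is in how you \emph{select} the net point and, consequently, in what your ``bad'' set is. You select by $\operatorname{argmin}_i\Psi_i$, so the region where a continuous network must interpolate is the set of near-ties, i.e.\ slabs around the Voronoi \emph{bisectors} $\{\Psi_i=\Psi_j\}$. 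Ahlfors $q$-regularity gives two-sided bounds only on the $\mathbb P$-mass of \emph{balls}; it says nothing about the mass of a neighbourhood of a bisector. In a general compact doubling space the set $\{|\Psi_i-\Psi_j|<\epsilon\}$ need not have small measure --- the bisector itself can carry positive $\mathbb P$-mass --- so your claimed ``telescoping to $\delta^q-\delta_\ast^q$'' does not follow from the hypotheses; moreover, even in the Euclidean model a slab of width $\delta-\delta_\ast$ has mass of order $\delta-\delta_\ast$ times a surface term, not $\delta^q-\delta_\ast^q$.

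The paper avoids this by not using a nearest-neighbour argmin. It recursively defines \emph{retracted Voronoi cells} $C_k^{\delta_\ast}\eqdef B((\mu^k,x^k),\delta_\ast)\setminus\bigcup_{j<k}B((\mu^j,x^j),\delta)$ (``the first inner ball containing you, minus all earlier outer balls''), implemented as products of the one-dimensional bump $\phi_{\sqcap:\delta_\ast,\delta}$ composed with the exact $\mathcal W_1$-network (Lemma~\ref{lem:POU__Approx-context}). The selection is then ambiguous only on the union of annuli $\bigcup_k\big[B((\mu^k,x^k),\delta)\setminus B((\mu^k,x^k),\delta_\ast)\big]$, and the mass of each annulus is bounded by $\mathbb P(B(\cdot,\delta))-\mathbb P(B(\cdot,\delta_\ast))\le C\delta^q-c\delta_\ast^q$ \emph{directly} from the two-sided Ahlfors condition~\eqref{eq:Ahlors}; this is exactly where the stated form of the probability bound comes from (Lemma~\ref{lem:VornoiLusin}). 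Replacing your argmin/lookup stage by this recursive indicator construction repairs the argument; the remainder of your proposal (feature map, uniform-continuity bound on the good region, size accounting with $K=\mathcal O(\omega^{-1}(\varepsilon)^{-q})$) then matches the paper's Lemmata~\ref{lem:PWC_Approximation} and~\ref{lem:UAT__NoTransformerYet}.
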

Setting $M=1$ and taking $\mathcal{Y}=\mathbb{R}$, then we see that every probability measure in $\mathcal{P}_M(\mathcal{Y})$ is of the form $\delta_y$ for some $y\in \mathbb{R}$.  Thus, $\mathcal{P}_1(\mathbb{R})$ is in bijection (actually homeomorphic) with $\mathbb{R}$.  Thus, Theorem~\ref{thrm:Main__SimpleVersion} implies the simple formulation in~\eqref{thrm:Main__SimpleVersion} which is a quantitative version of that considered in~\cite{furuya2024transformers} but for the classical ReLU MLP model in place of the transformer network with single attention-head per block.

We conclude our analysis by showing that our main result for MLPs directly implies the same conclusion for transformers with multiple attention heads per transformer block.  Thus, we obtain a quantitative version of the main theorem in~\cite{furuya2024transformers} as a direct consequence of our main result.
The following key result allows us to represent MLPs as transformers with sparse MLPs.
\begin{corollary}[{Multihead Transformers version of Theorem~\ref{thrm:Main__SimpleVersion}}]
\label{cor:Main_TransformerVersion__SimpleVersion}
In the setting of Theorem~\ref{thrm:Main__SimpleVersion} and suppose that $\mathcal{K}$ is $q$-dimensional.  For $d$ large enough, there is a transformer $\hat{\mathcal{T}}:\mathbb{R}^{N\times (d+1)} \times \mathbb{R}^{d}  \to \mathbb{R}^{M \times D}$, taking values in $\operatorname{Mat}_C^{N,D}$, and satisfying:
\[
            \mathbb{P}\big(
                \mathcal{W}_1\big(
                            f(\mu,x)
                            ,
                            \hat{\mathcal{T}}(\mu,x)
                        \big)
                \le \omega(\delta)
            \big)
        \gtrsim
            1-(\delta^q-\delta_{\ast}^q).
\]
Moreover, the depth and width of $\hat{f}$ are $\mathcal{O}\big(
\frac{
    dN^{2C}
}{
    \omega^{-1}(\varepsilon)^{2q}
}
\big)$ and $\mathcal{O}\big(
\frac{
    dN^{2C}
}
{
    \omega^{-1}(\varepsilon)^q
}
\big)$, respectively, and $\hat{\mathcal{T}}$ has exactly $N$ attention heads per transformer block.
\end{corollary}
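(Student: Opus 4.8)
\textbf{Proof plan for Corollary~\ref{cor:Main_TransformerVersion__SimpleVersion}.}

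The plan is to derive the multihead-transformer statement as a direct consequence of Theorem~\ref{thrm:Main__SimpleVersion} via the \emph{transformerification} procedure (Proposition~\ref{prop:transformerification__SparseVersion}). The high-level structure is: first invoke Theorem~\ref{thrm:Main__SimpleVersion} to obtain a ReLU MLP $\hat f$ with the stated high-probability accuracy and the stated $\mathcal{O}$ bounds on depth and width; then apply the transformerification map to $\hat f$ to produce a transformer $\hat{\mathcal{T}}$ that computes the same function (acting rowwise on the matrix representation $\Phi(\mu)\in\operatorname{Mat}^{d,N}_C/\!\sim$, together with the query $x$); and finally read off that the resulting architecture has the claimed depth, width, and exactly $N$ attention heads per block, while the accuracy guarantee is inherited verbatim because the input-output map is unchanged.

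In more detail, the first step is bookkeeping about representations: the MLP $\hat f$ of Theorem~\ref{thrm:Main__SimpleVersion} is a map $\mathbb{R}^{N\times(d+1)}\times\mathbb{R}^d\to\mathbb{R}^{M\times D}$, and we must check that when it is fed the matrix representative $X^\mu=[(x_n,w_n)_{n=1}^N]$ of a PIC $\mu\in\mathcal{P}_{C,N}(\mathcal{X})$ it is permutation-invariant in the token rows, so that it descends to a well-defined map on $\operatorname{Mat}^{d,N}_C/\!\sim$ — this is already part of how Theorem~\ref{thrm:Main__SimpleVersion} is set up (the MLP there is constructed to factor through the isometric identification of $\mathcal{P}_{C,N}(\mathcal{X})$ with $\operatorname{Mat}^{d,N}_C/\!\sim$ equipped with $\mathcal{W}$). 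The second step is to apply Proposition~\ref{prop:transformerification__SparseVersion}, which converts an arbitrary (ReLU) MLP into a multi-head transformer preserving depth and width, only doubling the number of nonzero parameters, and using a fixed number of attention heads per block; here one specializes the ``fixed number'' to $N$, exploiting that each attention head, when queries/keys are chosen so the softmax degenerates to (a convex combination close to) the identity selection, can be made to implement an identity or a coordinate-copy on the token dimension, so that the composition of $N$ such heads per block together with the MLP sublayers reproduces the action of $\hat f$ exactly on $\operatorname{Mat}^{d,N}_C$. The third step is to transport the accuracy estimate: since $\hat{\mathcal{T}}(\mu,x)=\hat f(\Phi(\mu),x)$ for all $(\mu,x)\in\mathcal{K}$ by construction, the event $\{\mathcal{W}_1(f(\mu,x),\hat{\mathcal{T}}(\mu,x))\le\omega(\delta)\}$ coincides with the corresponding event for $\hat f$, and the bound $\mathbb{P}(\cdot)\gtrsim 1-(\delta^q-\delta_\ast^q)$ follows from Theorem~\ref{thrm:Main__SimpleVersion}(i). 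The complexity claims follow because transformerification preserves depth and width up to absolute constants, so the $\mathcal{O}$-estimates of Theorem~\ref{thrm:Main__SimpleVersion} carry over, and the head count is exactly $N$ by the specialization just described; the fact that $\hat{\mathcal{T}}$ takes values in $\operatorname{Mat}_C^{N,D}$ is inherited from the range structure of $\hat f$.

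The main obstacle I anticipate is \emph{not} the accuracy transfer (which is essentially tautological once the function is preserved) but rather verifying that the transformerification can be carried out with \emph{exactly} $N$ heads per block while still \emph{exactly} preserving depth and width — i.e.\ one must show that the attention sublayers can be made to act as the identity (or as the necessary linear token mixing that a rowwise MLP trivially needs, namely none) without spending extra depth, and that the ReLU-MLP sublayers of the transformer can absorb all of the nonlinear computation of $\hat f$ layer-for-layer. Concretely, one needs to choose the query/key/value parameters of each of the $N$ heads so that $\operatorname{Attn}$ returns (a reindexing of) its input measure; the cleanest route is to drive the attention logits so the softmax is sharply peaked on the diagonal $n\mapsto n$ (e.g.\ by using the weight/position coordinates as near-orthogonal keys, or by passing to the $\lambda\to\infty$ limit if that is available in the model class), and to set the value maps to coordinate projections so that the $N$-fold direct sum $\bigoplus_{h=1}^N$ reassembles the full token matrix. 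This degeneracy argument — and checking it does not disturb the sparsity/parameter-count accounting of Proposition~\ref{prop:transformerification__SparseVersion} — is the one step that requires genuine care; everything else is assembly of results already in hand.
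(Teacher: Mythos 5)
Your high-level structure --- invoke Theorem~\ref{thrm:Main__SimpleVersion} to obtain the ReLU MLP $\hat f$, convert it into a multi-head transformer via Proposition~\ref{prop:transformerification__SparseVersion}, and transfer the probability bound verbatim because the input--output map is unchanged --- is exactly the paper's route. The gap is in the one step you yourself flagged as needing ``genuine care'': how the $N$ attention heads per block are configured. You propose to make each attention sublayer act as the \emph{identity} (softmax sharply peaked on the diagonal via near-orthogonal keys, or a $\lambda\to\infty$ limit) and to let the transformer's MLP sublayers ``absorb all the nonlinear computation of $\hat f$ layer-for-layer.'' This fails in the model of~\eqref{eq:representation_transformer} for two reasons. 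First, at finite temperature $\lambda$ the softmax is never exactly a selection, so identity attention can only be approximated; the $\lambda=\infty$ attention is not part of the model class, and an approximate identity destroys the exact equality $\hat{\mathcal{T}}(\mu,x)=\hat f(\mu,x)$, so the accuracy transfer is no longer tautological and would require an additional perturbation argument that also upsets the ``exactly preserves depth and width'' bookkeeping. Second, and more fundamentally, the transformer block in~\eqref{eq:representation_transformer} has no free weight matrices outside attention: the non-attention sublayer is only $\sigma_{\bar{\alpha}^{(j)}}\bullet(\mathbf{X}^{(j)}+b^{(j)})$, an entrywise activation plus bias. So if attention were the identity there would be nowhere to implement the linear maps $A^{(j)}$ of the MLP representation~\eqref{eq:representation_MLP}, and the layer-for-layer absorption you describe is impossible in this architecture.

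The paper resolves this in the opposite way (Lemma~\ref{lem:fullyconnected_MatMul_viaMHAttention}): the attention heads carry the linear algebra rather than the identity. Setting $Q_h=K_h=0$ makes the softmax exactly uniform (weight $1/N$ on every token, no limit needed), and choosing $V_h=N\,\mathbf{E}^{h:\tilde{d}}A^{(j)}$ makes the $h$-th head output the $h$-th component of $A^{(j)}\mathbf{X}$, so the direct sum over the $N$ heads reassembles $A^{(j)}\mathbf{X}$ exactly. Each MLP layer $A^{(j)}\sigma_{\bar{\alpha}^{(j)}}\bullet(\mathbf{X}^{(j)}+b^{(j)})$ is then reproduced exactly by one transformer block (activation-and-bias sublayer followed by the $N$-head attention), yielding the identical function with the same depth and width, exactly $N$ heads per block, and at most a doubling of nonzero parameters --- after which the probability bound and the complexity estimates of Theorem~\ref{thrm:Main__SimpleVersion} transfer exactly as you intended.
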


\section{Unpacking The Construction}
\label{s:ProofSketch}
We now explain our main result, namely Theorem~\ref{thrm:Main__SimpleVersion}.

\subsection{{Step 1 - Regular Decomposition of The (PI) Context Space $\mathcal{K}$}}
\label{s:ProofSketch__ss:Decomposition}

Fix a compact set $\mathcal{K}\subseteq \mathcal{P}_{C,N}(\mathcal{X})\times\mathcal{X}$.   
Figure~\ref{fig:TrifflingRegions} showcases the near piecewise constant partition of unity, which we will use to construct our approximating MLPs and multi-head transformers.  
The idea is to subdivide $\mathcal{K}$ into a $K\in \mathbb{N}_+$ parts, and to construct MLPs which implement piecewise constant function with a value on each piece.  This construction is a non-Euclidean version of the \textit{trifling regions approach} for classical ReLU MLP approximation of functions on the cube $[0,1]^d$ considered in~\cite{ZuoweiHaizhaoZhang_2022_JMPA}; there, the authors decomposed $[0,1]^d$ in disjoint (up to their boundary) sub-cubes which can be understood as \textit{Balls} with respect to the $\ell^{\infty}$ metric on $[0,1]^d$.  That classical case was simple since $\ell^{\infty}$ balls (sub-cubes) can be chosen to be disjoint (up to their boundaries), meaning that the Voronoi cells are (up to their boundary) equal to $\ell^{\infty}$ balls; e.g.\ 
\[
        [0,1]
    =
            \underbrace{\{x\in [0,1]:\, |x-1/4|\le 1/4\}}_{\ell^{\infty}\text{-Ball about } 1/4} 
        \bigcup 
            \underbrace{\{x\in [0,1]:\, |x-3/4|\le 1/4\}}_{\ell^{\infty}\text{-Ball about } 3/4}
.
\]
In contrast, in general, metric spaces such as $\mathcal{K}$ one \textit{cannot} ensure that metric balls are disjoint. 
Unlike sub-cubes of $[0,1]^d$ which align perfectly, these metric balls need not do so in $\mathcal{K}$.  
Thus, we need to manually create disjoint regions by \textit{iteratively} deleting overlapping regions between balls.   
The result, illustrated in Figure~\ref{fig:TrifflingRegions} is constructed as follows.

Let $0<\delta\le \operatorname{diam}(\mathcal{K})$, and $\delta$-packing $\{(\mu_k,x_k)\}_{k=1}^K$ of $\mathcal{K}$, and  $0\le \delta_{\ast}\le \delta$; for each $k=1,\dots,K$ recursively define the \textit{retracted Voronoi cells} $\{C_k^{\delta_{\star}}\}_{k=1}^K$ by
\begin{equation}
\label{eq:RetractedVoronoiCells}
        C_k^{\delta_{\ast}}
    \eqdef 
            B((\mu^k, x^k),\delta_{\ast})
        \setminus
            \bigcup_{j<k}\,
                B((\mu^j, x^j),\delta)
.
\end{equation}

\begin{figure}[H]
\vspace{-1em}
    \centering
    \includegraphics[width=0.25\textwidth]{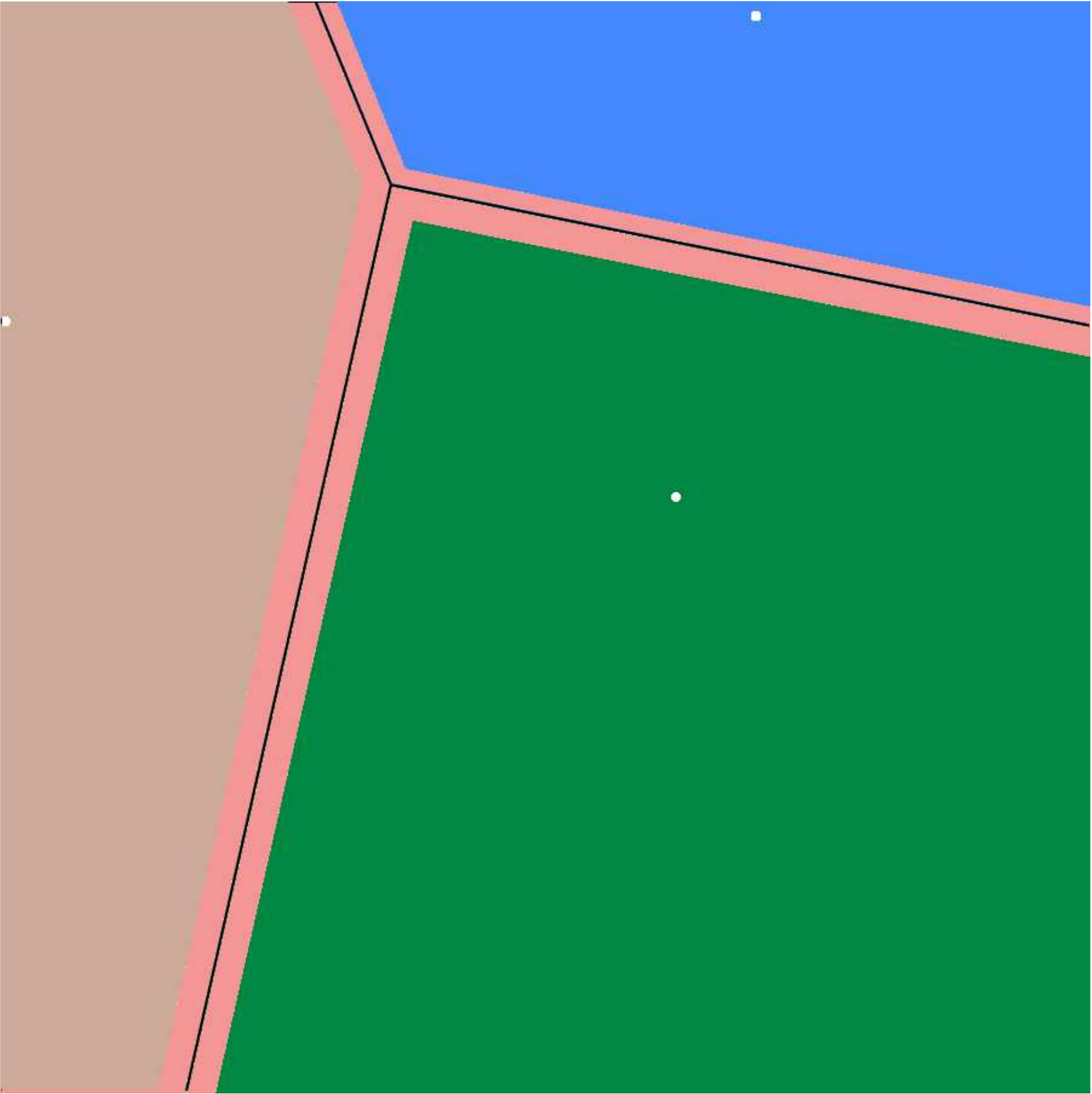}
    \caption{\textbf{Our Regular Decomposition of the PIC Space $\mathcal{K}:$} 
    The \textit{Retracted Voronoi cells} (non-redish coloured regions) $C^{\delta_{\star}}_1,\dots,C^{\delta_{\star}}_K$, for $\delta_{\star}>0$, whose union makes up the ``large'' \textit{approximation region} $\mathcal{K}\setminus \mathcal{K}^{\delta_{\star}}$.  The reddish region symbolizes our ``small'' \textit{trifling region} whereon a uniform approximation may fail.}
    \label{fig:TrifflingRegions}
\end{figure}

Our approximators will uniformly approximate the given target PIC-function $f:\mathcal{K}\to \mathcal{P}_M(\mathcal{Y})$ on the \textit{approximation region} $\mathcal{K}^{\delta_{\star}}$ covered by the retracted Voronoi cells $\{C_k^{\delta_{\star}}\}_{k=1}^K$.  Our approximator will be continuous and piecewise constant, one (possibly distinct) value on each retracted Voronoi cell $C_k^{\delta_{\star}}$, and that value is given by the target PIC-function's value at that cell's \textit{landmark point} $x^k$.  The \textit{approximation region} and \textit{trifling region} are 
\begin{equation}
\label{eq:Approx_and_trifling_Regions}
\underbrace{
       \mathcal{K}^{\delta_{\ast}}
    \eqdef
        \bigcup_{k\in [K]}\,
            [
                B((\mu^k, x^k),\delta)
                \setminus
                B((\mu^k, x^k),\delta_{\ast})
            ]
}_{\text{Trifling Region}}
\mbox{ and }
\underbrace{
       \mathcal{K}\setminus \mathcal{K}^{\delta_{\ast}}
.
}_{\text{Approximation Region}}
\end{equation}

We can ensure that the trifling region is small.  For class $L^p$-type approximation guarantees, with $1\le p<\infty$, the ``smallness'' of the trifling region in~\cite{ZuoweiHaizhaoZhang_2022_JMPA} guaranteed by it having a small Lebesgue measure.  Here, in our non-Euclidean setting, the smallness of our trifling region is guaranteed by having a small measure for any fixed Ahlfors regular measure on $\mathcal{K}$.

\begin{lemma}[The Trifling Region is Small]
\label{lem:VornoiLusin}
Let $N\in \mathbb{N}_+$, $q>0$, $\mathcal{K}\subseteq \mathcal{P}_{C,N}(\mathcal{X}) \times\mathcal{X}$ be a totally bounded subset of $\mathcal{P}_{C,N}(\mathcal{X}) \times\mathcal{X}$, and $\mathbb{P}$ be an Ahlfors $q$-regular measure on $\mathcal{K}$
For any $0<\delta\le \operatorname{diam}(\mathcal{K})$, any $\delta$-packing $\{(\mu^k,x^k)\}_{k=1}^K$ of $\mathcal{K}$, and any $0<\delta_{\ast}<\delta$ the 
\textit{approximation region} $\mathcal{K}^{\delta_{\ast}}$ is ``large'' in the sense that
\[
    \mathbb{P}\big(\mathcal{K}\setminus \mathcal{K}^{\delta_{\ast}}\big)\ge 1 - C\, K[\delta^q-\delta_{\ast}^q]
.
\]
\end{lemma}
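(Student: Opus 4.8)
The plan is to control the $\mathbb P$-mass of the \emph{trifling region} $\mathcal K^{\delta_\ast}$; since $\mathbb P(\mathcal K)=1$, this is exactly a lower bound on $\mathbb P(\mathcal K\setminus\mathcal K^{\delta_\ast})=1-\mathbb P(\mathcal K^{\delta_\ast})$. The first move is purely definitional: by~\eqref{eq:Approx_and_trifling_Regions}, $\mathcal K^{\delta_\ast}$ is the union of the $K$ concentric metric ``annuli''
\[
    S_k\eqdef B\big((\mu^k,x^k),\delta\big)\setminus B\big((\mu^k,x^k),\delta_\ast\big),\qquad k=1,\dots,K ,
\]
one per landmark of the $\delta$-packing (here $K<\infty$, since any $\delta$-separated subset of the totally bounded set $\mathcal K$ is finite). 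Countable subadditivity of $\mathbb P$ then reduces everything to a per-annulus estimate, $\mathbb P(\mathcal K^{\delta_\ast})\le\sum_{k=1}^{K}\mathbb P(S_k)$, so it suffices to show $\mathbb P(S_k)\lesssim\delta^q-\delta_\ast^q$ uniformly in $k$, with constant depending only on $q$ and the Ahlfors constants of $\mathbb P$.

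For the per-annulus bound I would use that the two balls cutting out $S_k$ are concentric: since $\delta_\ast<\delta$, the inner ball sits inside the outer one, so
\[
    \mathbb P(S_k)=\mathbb P\big(B((\mu^k,x^k),\delta)\big)-\mathbb P\big(B((\mu^k,x^k),\delta_\ast)\big).
\]
Now I would apply the Ahlfors $q$-regularity~\eqref{eq:Ahlors} of $\mathbb P$ — its upper estimate to the outer ball and its lower estimate to the inner ball — to obtain $\mathbb P(S_k)\lesssim\delta^q-\delta_\ast^q$. Summing the $K$ copies yields $\mathbb P(\mathcal K^{\delta_\ast})\le CK(\delta^q-\delta_\ast^q)$, hence $\mathbb P(\mathcal K\setminus\mathcal K^{\delta_\ast})\ge1-CK(\delta^q-\delta_\ast^q)$, which is the assertion.

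The step I expect to be the real obstacle is the last one: extracting the \emph{difference of $q$-th powers} $\delta^q-\delta_\ast^q$ rather than settling for the crude $\mathbb P(S_k)\le\mathbb P(B((\mu^k,x^k),\delta))\le C\delta^q$. Used bluntly, Ahlfors regularity only gives $\mathbb P(S_k)\le C_u\delta^q-c_\ell\delta_\ast^q$, which is of the claimed order when the annulus is macroscopically thick (e.g.\ $\delta_\ast\le\delta/2$) but is not transparently so as $\delta_\ast\uparrow\delta$. Controlling the $\mathbb P$-mass of such a thin spherical shell by $\delta^q-\delta_\ast^q$ — e.g.\ by covering $S_k$ with balls of radius $\delta-\delta_\ast$ and bounding via the doubling property of $\mathcal K$ how many of them can meet $S_k$, then summing the Ahlfors upper bounds — is where care is needed; the remainder of the argument (the union-bound reduction and the finiteness of $K$) is routine.
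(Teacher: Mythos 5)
Your route is the same as the paper's: write $\mathcal K^{\delta_\ast}$ as the union of the $K$ annuli $S_k=B((\mu^k,x^k),\delta)\setminus B((\mu^k,x^k),\delta_\ast)$, use subadditivity, express $\mathbb P(S_k)$ as the difference of the two concentric ball masses, and apply the Ahlfors upper bound to the outer ball and the lower bound to the inner ball; the paper's proof (Appendix~\ref{sec:proof-lem:VornoiLusin}) does exactly this and nothing more.

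The ``obstacle'' you single out is therefore not an artifact of your write-up: it is precisely the step the paper glosses over. The blunt estimate yields $\mathbb P(S_k)\le C\delta^q-c\delta_\ast^q$, and since $c\le C$ this is \emph{weaker} than the stated $C(\delta^q-\delta_\ast^q)$ (indeed $C\delta^q-c\delta_\ast^q\ge C(\delta^q-\delta_\ast^q)$), so the lemma as displayed does not literally follow from the computation; the paper simply passes from one to the other (its intermediate display even contains typos of the form $(C\delta^q-c\delta_\ast)^q$). Moreover, your suggested repair --- covering the thin shell by balls of radius $\delta-\delta_\ast$ and counting via doubling --- cannot give the $\delta^q-\delta_\ast^q$ bound in general: in an Ahlfors $q$-regular space a metric sphere can carry mass comparable to the whole ball (e.g.\ an ultrametric Cantor set, where the set of points at distance exactly $2^{-n}$ from a given point has measure $\asymp 2^{-n q}$), so $\mathbb P(S_k)$ need not vanish as $\delta_\ast\uparrow\delta$. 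The honest conclusion of this argument is the two-constant bound $\mathbb P(\mathcal K\setminus\mathcal K^{\delta_\ast})\ge 1-K\,(C\delta^q-c\delta_\ast^q)$, which is what your (and the paper's) proof actually establishes, and which suffices for the downstream statements since they only use the estimate up to absorbed constants ($\gtrsim$) or with $\delta_\ast$ bounded away from $\delta$.
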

See Appendix~\ref{sec:proof-lem:VornoiLusin} for the proof. 
To ensure the continuity of our approximator, which is piecewise constant on the approximation region $\mathcal{K}$, we note that the retracted Voronoi cells are disjoint, and each pair of distinct cells is separated by a distance of at least $\delta^{\star}$.
\begin{lemma}[{$\delta_{\ast}$-Separated Almost Partition of $\mathcal{K}$}]
\label{lem:POU_Lemma__gentrifling}
Consider the setting of Lemma~\ref{lem:VornoiLusin}.  
The retracted Voronoi cells $\{C_k^{\delta_{\ast}}\}_{k=1}^K$ satisfies:
\begin{enumerate}
    \item[(i)] \textbf{$\delta_{\ast}$-Separated:} $
        \min_{k,k'\in [K],\,k\neq k'}\,
            \mathcal{W}_1(C_k^{\delta_{\ast}},C_{k'}^{\delta_{\ast}})
        \ge 
            \delta-\delta_{\star}
        ,
    $
    \item[(ii)] \textbf{Almost Cover:} $\{C_k^{\delta_{\ast}}\}_{k\in [K]}$ covers $\mathcal{K}\setminus \mathcal{K}^{\delta_{\ast}}$,
\end{enumerate}
where for two non-empty subsets $A$ and $B$ of $\mathcal{P}_N(\mathcal{X})$ we define 
$
            \mathcal{W}_1(A,B)
        \eqdef 
        \underset{\nu\in A}{\sup}\, \underset{\nu'\in B}{\inf}
        \mathcal{W}_1(\nu,\nu')
$.
\end{lemma}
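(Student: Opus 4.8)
The plan is to read off both claims directly from the recursive definition~\eqref{eq:RetractedVoronoiCells}, using nothing beyond the triangle inequality for the product metric $d\big((\mu,x),(\nu,y)\big)\eqdef \mathcal{W}_1(\mu,\nu)+\|x-y\|_1$ on $\mathcal{P}_{C,N}(\mathcal{X})\times\mathcal{X}$. I will write $B_k(r)$ for the open $d$-ball of radius $r>0$ centred at the landmark $(\mu^k,x^k)$, so that $C_k^{\delta_{\ast}}=B_k(\delta_{\ast})\setminus\bigcup_{j<k}B_j(\delta)$, and I adopt the convention $\mathcal{W}_1(\emptyset,\cdot)=+\infty$ so that any empty retracted cell is harmless in~(i).

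For~(i), fix $k\neq k'$ and set $m\eqdef\min\{k,k'\}$ and $M\eqdef\max\{k,k'\}$. Since $m<M$, the definition of $C_M^{\delta_{\ast}}$ removes the ball $B_m(\delta)$, so every point of $C_M^{\delta_{\ast}}$ lies at $d$-distance at least $\delta$ from $(\mu^m,x^m)$; meanwhile every point of $C_m^{\delta_{\ast}}\subseteq B_m(\delta_{\ast})$ lies at $d$-distance strictly less than $\delta_{\ast}$ from that same point. One application of the triangle inequality based at $(\mu^m,x^m)$ then yields $d(a,b)\ge\delta-\delta_{\ast}$ for all $a\in C_k^{\delta_{\ast}}$ and $b\in C_{k'}^{\delta_{\ast}}$ --- the two orderings of $k,k'$ give the same bound because $d$ is symmetric. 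Passing to $\inf_b$ and then $\sup_a$ gives $\mathcal{W}_1(C_k^{\delta_{\ast}},C_{k'}^{\delta_{\ast}})\ge\delta-\delta_{\ast}$, and minimising over $k\neq k'$ proves~(i).

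For~(ii), I would first invoke the standard fact that a maximal $\delta$-packing of $\mathcal{K}$ is also a $\delta$-net, so that $\mathcal{K}\subseteq\bigcup_{k\in[K]}B_k(\delta)$. Now fix $(\nu,y)\in\mathcal{K}\setminus\mathcal{K}^{\delta_{\ast}}$. Negating the description of $\mathcal{K}^{\delta_{\ast}}$ in~\eqref{eq:Approx_and_trifling_Regions}, the hypothesis $(\nu,y)\notin\mathcal{K}^{\delta_{\ast}}$ is exactly the assertion that, for every $k$, membership of $(\nu,y)$ in $B_k(\delta)$ implies its membership in $B_k(\delta_{\ast})$. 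Since $(\nu,y)\in B_{k_0}(\delta)$ for some $k_0$ by the net property, it follows that $(\nu,y)\in B_{k_0}(\delta_{\ast})$; hence $\{k\in[K]:(\nu,y)\in B_k(\delta_{\ast})\}$ is non-empty, and I let $k$ be its least element. Then $(\nu,y)\in B_k(\delta_{\ast})$, and for each $j<k$ minimality gives $(\nu,y)\notin B_j(\delta_{\ast})$, whence the contrapositive of the above implication gives $(\nu,y)\notin B_j(\delta)$. Therefore $(\nu,y)\in B_k(\delta_{\ast})\setminus\bigcup_{j<k}B_j(\delta)=C_k^{\delta_{\ast}}$, establishing~(ii).

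The proof is elementary; the only points deserving attention are bookkeeping. One must handle the asymmetry of the directed set-distance $\mathcal{W}_1(A,B)$ by routing every estimate through the landmark of the lower-indexed cell, be consistent about using open rather than closed balls throughout, and --- this is the one genuine input --- make explicit that the $\delta$-packing is taken maximal, since otherwise the balls $\{B_k(\delta)\}_{k}$ need not cover $\mathcal{K}$ and the almost-cover claim~(ii) can fail for points left uncovered.
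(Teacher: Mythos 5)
Your proof is correct and follows essentially the same route as the paper: part (i) via the triangle inequality through the lower-indexed landmark, using $C_m^{\delta_{\ast}}\subseteq B_m(\delta_{\ast})$ and the removal of $B_m(\delta)$ from the higher-indexed cell, and part (ii) via the fact that the $\delta$-balls cover $\mathcal{K}$ together with a least-index selection. Your remark that the packing must be maximal (or otherwise also a $\delta$-net) for the covering step is a fair point of bookkeeping that the paper glosses over when it invokes the packing/covering inequality, but it does not change the argument.
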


\subsection{{Step 2 - Optimal Piecewise-Constant Approximator on $\mathcal{K}$}}
\label{s:ProofSketch__ss:Optimal_Approximator}

Using our partitioning lemmata, from Step $1$, we construct optimal approximators; where optimality is in the sense of metric entropy of the domain.  The first step, is to construct an optimal piecewise constant approximator.
We then shrink the distance between each pair of parts using the $\delta^{\star}$ parameter in Lemma~\ref{lem:POU_Lemma__gentrifling}.  Doing so gives us just enough flexibility to ensure to perturb the piecewise constant functions into ``piecewise linear'' approximators which coincide with the original piecewise constant approximator on $\mathcal{K}\setminus\mathcal{K}^{\delta_{\ast}}$.  Taking $\delta_{\ast}$ small enough gives the desired approximation (but not yet implemented by our transformer).
\begin{lemma}[Optimal Piecewise Constant Approximator]
\label{lem:PWC_Approximation}
Let $N\in \mathbb{N}_+$, $q>0$, $\mathcal{K}\subseteq \mathcal{P}_{C,N}(\mathcal{X})\times\mathcal{X}$ be a totally bounded subset of $\mathcal{P}_{C,N}(\mathcal{X})\times\mathcal{X}$, and $f:\mathcal{K}\to \mathcal{P}_M(\mathcal{Y})$ is uniformly continuous with an increasing modulus of continuity $\omega$.
Let $0<\delta\le \operatorname{diam}(\mathcal{K})$ and every $\delta$-packing $\{(\mu^k, x^k)\}_{k=1}^K$ of $\mathcal{K}$ and $\{C_k^{\delta}\}_{k\in [K]}$ be 
the partition of $\mathcal{K}$ of 
in Lemma~\ref{lem:POU_Lemma__gentrifling}.   
There are $\nu_1,\dots,\nu_K \in f(\mathcal{K})$ 
and a piecewise constant function on $\mathcal{P}(\mathcal{X}) \times \mathcal{X}$
\begin{equation}
\label{eq:elementary_PWC_Approximator}
        f_{\delta}(\mu,x)
    \eqdef 
        \sum_{k=1}^K\, 
            \nu_k I_{(\mu,x) \in C_k^{\delta}}
\end{equation}
is a well-defined map from $\mathcal{K}$ to $\mathcal{P}_M(\mathcal{Y})$ satisfying $
    \sup_{(\mu,x)\, \in \mathcal{K}}\,
        \mathcal{W}_1\big(
            f(\mu,x)
            ,
            f_{\delta}(\mu,x)
        \big)
    \le 
        \omega(\delta)
$.
\end{lemma}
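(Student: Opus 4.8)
The plan is to let $\nu_k \eqdef f(\mu^k,x^k)$ be the value of $f$ at the $k$-th landmark point and to check two things: that the retracted Voronoi cells $\{C_k^{\delta}\}_{k=1}^K$ of \eqref{eq:RetractedVoronoiCells} (the instance $\delta_{\ast}=\delta$) form a genuine finite partition of $\mathcal{K}$, which makes $f_{\delta}$ in \eqref{eq:elementary_PWC_Approximator} an unambiguously defined map into $\mathcal{P}_M(\mathcal{Y})$, and that on each cell replacing $f$ by the constant $\nu_k$ costs at most $\omega(\delta)$. The second point will be immediate from $\omega$-uniform continuity, so the only real work is the partition claim.

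First I would note that $K<\infty$ since $\mathcal{K}$ is totally bounded. For pairwise disjointness: if $j<k$ then \eqref{eq:RetractedVoronoiCells} gives $C_k^{\delta}\subseteq\mathcal{K}\setminus B((\mu^j,x^j),\delta)$, whereas $C_j^{\delta}\subseteq B((\mu^j,x^j),\delta)$, hence $C_j^{\delta}\cap C_k^{\delta}=\emptyset$. For the covering property I would use that the $\delta$-packing $\{(\mu^k,x^k)\}_{k=1}^K$ is a maximal $\delta$-separated subset of $\mathcal{K}$, hence a $\delta$-net: any $(\mu,x)\in\mathcal{K}$ lying in no ball $B((\mu^k,x^k),\delta)$ would be $\delta$-separated from every landmark, so $\{(\mu,x)\}\cup\{(\mu^j,x^j)\}_{j=1}^K$ would be $\delta$-separated, contradicting maximality; therefore $\bigcup_{k=1}^K B((\mu^k,x^k),\delta)=\mathcal{K}$, and the telescoping identity $\bigcup_{k=1}^K\big(A_k\setminus\bigcup_{j<k}A_j\big)=\bigcup_{k=1}^K A_k$ gives $\bigcup_{k=1}^K C_k^{\delta}=\mathcal{K}$. (This is the $\delta_{\ast}=\delta$ instance of Lemma~\ref{lem:POU_Lemma__gentrifling}(ii), whose trifling region in \eqref{eq:Approx_and_trifling_Regions} is then empty.) Consequently every $(\mu,x)\in\mathcal{K}$ lies in exactly one cell $C_{k(\mu,x)}^{\delta}$, so $f_{\delta}(\mu,x)=\nu_{k(\mu,x)}=f\big(\mu^{k(\mu,x)},x^{k(\mu,x)}\big)\in f(\mathcal{K})\subseteq\mathcal{P}_M(\mathcal{Y})$; in particular $f_{\delta}:\mathcal{K}\to\mathcal{P}_M(\mathcal{Y})$ is well-defined and the chosen $\nu_k$ lie in $f(\mathcal{K})$, as required.

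It remains to estimate the error. Fix $(\mu,x)\in\mathcal{K}$ and set $k\eqdef k(\mu,x)$. Since $C_k^{\delta}\subseteq B((\mu^k,x^k),\delta)$, the definition of the ball with respect to the metric $(\nu,z)\mapsto\mathcal{W}_1(\mu,\nu)+\|x-z\|_1$ on $\mathcal{K}$ yields $\mathcal{W}_1(\mu,\mu^k)+\|x-x^k\|_1<\delta$. Using that $f$ is $\omega$-uniformly continuous and that $\omega$ is increasing,
\begin{equation*}
    \mathcal{W}_1\big(f(\mu,x),f_{\delta}(\mu,x)\big)
    =
    \mathcal{W}_1\big(f(\mu,x),f(\mu^k,x^k)\big)
    \le
    \omega\big(\mathcal{W}_1(\mu,\mu^k)+\|x-x^k\|_1\big)
    \le
    \omega(\delta).
\end{equation*}
Taking the supremum over $(\mu,x)\in\mathcal{K}$ gives $\sup_{(\mu,x)\in\mathcal{K}}\mathcal{W}_1\big(f(\mu,x),f_{\delta}(\mu,x)\big)\le\omega(\delta)$, which is the claim.

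The only delicate point — and hence the main (modest) obstacle — is the covering half of the partition claim: one must use that the $\delta$-packing is maximal (equivalently, the packing–covering duality) so that its $\delta$-balls exhaust $\mathcal{K}$, which is what promotes $f_{\delta}$ from a function on the approximation region to a function on all of $\mathcal{K}$. Everything else is bookkeeping with the recursion \eqref{eq:RetractedVoronoiCells} together with a one-line application of uniform continuity.
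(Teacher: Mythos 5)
Your proposal is correct and follows essentially the same route as the paper's proof: set $\nu_k\eqdef f(\mu^k,x^k)$, use that the cells $\{C_k^{\delta}\}_{k\in[K]}$ partition $\mathcal{K}$ (the paper simply invokes Lemma~\ref{lem:POU_Lemma__gentrifling} where you spell out the disjointness and the packing-implies-covering step), and conclude via $C_{k}^{\delta}\subseteq B((\mu^{k},x^{k}),\delta)$ together with the monotone modulus $\omega$. The only difference is that you make the partition/covering bookkeeping explicit rather than citing the earlier lemma, which is consistent with the paper's argument.
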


\subsection{{Step 3 - Implementing the Optimal Approximator $\mathcal{K}\setminus \mathcal{K}^{\delta_{\ast}}$}}
\label{s:ProofSketch__ss:Implementation}
Lemma~\ref{lem:implementation_W1} shows that ReLU MLPs can \textit{exactly} implement the $1$-Wasserstein distance for PIC in $\mathcal{P}_N(\mathcal{X})$ up to the identification in~\eqref{eq:identification}.  Thus allows us to exactly implement a piecewise constant partition of unity on $\mathcal{K}\setminus\mathcal{K}^{\delta_{\ast}}$.  

\begin{lemma}[{Piecewise Constant Partition of Unity on $\mathcal{K}\setminus\mathcal{K}^{\delta_{\ast}}$}]
\label{lem:POU__Approx-context}
In the above setting, for each $k\in [K]$, there is an MLP $\tilde{\Phi}_k:\mathbb{R}^{N\times (d+1)} \times \mathbb{R}^{d} \to [0,1]$ satisfying the following for each $(\mu, x)\in \mathcal{K}\setminus\mathcal{K}^{\delta_{\ast}}$
\begin{align}
\label{eq:near_representatation_Klarge_case-context}
        \tilde{\Phi}_k(\mu,x)
     =
        I_{(\mu, x) \in C_k^{\delta_{\ast}}}
.
\end{align}
Moreover, $\tilde{\Phi}_k$ 
is 
$\operatorname{depth}(\tilde{\Phi}_j) \in \mathcal{O}(d + N^{N+2C-3})$ and $\operatorname{width}(\tilde{\Phi}_j)\in \mathcal{O}(Nd + N! \, N^{2C-2})$.
\end{lemma}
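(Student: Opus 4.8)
\textbf{Proof plan for Lemma~\ref{lem:POU__Approx-context}.}

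The plan is to build each indicator $\tilde\Phi_k$ out of the exact Wasserstein-distance MLP of Lemma~\ref{lem:implementation_W1} (equivalently Proposition~\ref{prop:Computation_W1__relative_verison}) composed with a fixed, hand-crafted ReLU gadget that realizes the set-membership predicate for the retracted Voronoi cell $C_k^{\delta_{\ast}}$ in~\eqref{eq:RetractedVoronoiCells}. Recall $C_k^{\delta_{\ast}} = B((\mu^k,x^k),\delta_{\ast})\setminus\bigcup_{j<k}B((\mu^j,x^j),\delta)$, so membership of $(\mu,x)$ in $C_k^{\delta_{\ast}}$ is the conjunction of the single condition $\rho_k(\mu,x)\eqdef\mathcal{W}_1(\mu,\mu^k)+\|x-x^k\|_1 < \delta_{\ast}$ with the $k-1$ conditions $\rho_j(\mu,x)\ge\delta$ for $j<k$. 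First I would note that $\|x-x^k\|_1$ is trivially a small ReLU network ($\|t\|_1=\sum_i(\operatorname{ReLU}(t_i)+\operatorname{ReLU}(-t_i))$), and that $\mathcal{W}_1(\mu,\mu^k)$ is computed exactly by feeding $\mathbf X$ and the constant matrix $X^{\mu^k}$ into the MLP of Lemma~\ref{lem:implementation_W1}, whose second argument can be hard-wired into the biases. So each $\rho_j$ is an MLP whose depth/width are those of the $\mathcal{W}_1$-MLP plus $\mathcal{O}(1)$; the quoted sizes $\mathcal{O}(d+N^{N+2C-3})$ and $\mathcal{O}(Nd+N!\,N^{2C-2})$ are exactly the sizes of that Wasserstein MLP (it is where the $N!$ and $N^{2C}$ factors come from — enumerating permutations / divisibility classes), so the bookkeeping reduces to checking that the membership gadget adds only constant overhead and that running $k\le K$ copies in parallel keeps the width within the same order.

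Next I would implement the Boolean gadget. Using the standard piecewise-linear ``approximate-indicator'' bump $\phi_{\sqcap}$ of~\eqref{eq:PW_Linear_Bump} (or a one-sided ReLU ramp), define for the ``must be inside $B_{\delta_\ast}$'' constraint a function that is $1$ when $\rho_k\le\delta_{\ast}$ and a one-sided threshold for each ``must be outside $B_\delta$'' constraint that is $1$ when $\rho_j\ge\delta$. Crucially, on the set $\mathcal{K}\setminus\mathcal{K}^{\delta_{\ast}}$ the lemma only needs the \emph{correct value at points that are not in the trifling region}: by Lemma~\ref{lem:POU_Lemma__gentrifling}(i) the retracted cells are $(\delta-\delta_{\ast})$-separated, and every point of $\mathcal{K}\setminus\mathcal{K}^{\delta_{\ast}}$ lies in exactly one $C_k^{\delta_{\ast}}$ (part (ii)), so each $\rho_j$ is \emph{bounded away} from the thresholds $\delta_{\ast},\delta$ by a gap depending only on $\delta,\delta_{\ast}$. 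Hence I can choose the ReLU ramps steep enough (slope $1/(\delta-\delta_\ast)$ suffices) that each factor is \emph{exactly} $0$ or $1$ on $\mathcal{K}\setminus\mathcal{K}^{\delta_{\ast}}$, never in the interpolating regime. The product of these $\le k$ zero/one factors is then taken; since a product of $\{0,1\}$-valued quantities is their minimum, I realize it with a depth-$\mathcal{O}(\log k)$ (or depth-$\mathcal{O}(1)$ if we allow a min-gadget) network of $\operatorname{ReLU}$-based $\min(a,b)=a-\operatorname{ReLU}(a-b)$ units, adding only $\mathcal{O}(\log K)$ depth and $\mathcal{O}(K)$ width — again absorbed into the stated orders once $d$ and $N$ dominate. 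Composing: $\tilde\Phi_k$ is the min over $j\le k$ of these gate outputs, which equals $I_{(\mu,x)\in C_k^{\delta_{\ast}}}$ on $\mathcal{K}\setminus\mathcal{K}^{\delta_{\ast}}$, proving~\eqref{eq:near_representatation_Klarge_case-context}, and maps into $[0,1]$ everywhere by construction (each factor is a clipped ramp in $[0,1]$).

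Finally I would assemble the size bounds: depth is $\operatorname{depth}(\phi_{\mathcal{W}_1}) + \mathcal{O}(\log K + 1)$ and width is $\operatorname{width}(\phi_{\mathcal{W}_1}) + \mathcal{O}(Nd + K)$, and since the compact packing number $K$ is subsumed by the $N^{N+2C}$-type terms (and the $Nd$ term handles the query coordinate), this collapses to $\operatorname{depth}(\tilde\Phi_k)\in\mathcal{O}(d+N^{N+2C-3})$ and $\operatorname{width}(\tilde\Phi_k)\in\mathcal{O}(Nd+N!\,N^{2C-2})$, as claimed. The main obstacle I anticipate is \emph{not} the construction itself but verifying the ``exactness'' claim~\eqref{eq:near_representatation_Klarge_case-context} rigorously: one must argue that for $(\mu,x)\in\mathcal{K}\setminus\mathcal{K}^{\delta_{\ast}}$ every distance $\rho_j(\mu,x)$ avoids the half-open boundary layer $[\delta_{\ast},\delta)$ where the ramp is strictly between $0$ and $1$, which is exactly the content of how the trifling region $\mathcal{K}^{\delta_{\ast}}$ in~\eqref{eq:Approx_and_trifling_Regions} was carved out — so the proof really hinges on chaining Lemma~\ref{lem:POU_Lemma__gentrifling} with the definition of $\mathcal{K}^{\delta_\ast}$, plus a careful (but routine) accounting of how composing and parallelizing the $\phi_{\mathcal{W}_1}$ subnetworks affects depth and width without blowing up the order.
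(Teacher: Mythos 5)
Your construction is essentially the paper's: you compose the exact Wasserstein MLP of Lemma~\ref{lem:implementation_W1} and a small query-distance network with the ramp $\phi_{\sqcap:\delta_{\ast},\delta}$ of~\eqref{eq:PW_Linear_Bump}, and you invoke exactly the paper's key point that removing the trifling region forces every gate to sit outside the interpolating band $[\delta_{\ast},\delta)$ and hence take the exact value $0$ or $1$ on $\mathcal{K}\setminus\mathcal{K}^{\delta_{\ast}}$; the only cosmetic deviations are your ReLU $\min$-gadget in place of the paper's ReQU multiplication network $\phi_{\times}$ (Lemma~\ref{lem:mult}), direct thresholding of the distances to $(\mu^j,x^j)$ for $j<k$ instead of the paper's recursion through $1-\tilde{\Phi}_j$, and $\ell^1$ rather than squared $\ell^2$ for the query coordinate. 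One caveat: your final step asserting that the $K$-dependent overhead (in particular the width needed to evaluate $k$ distinct Wasserstein distances) is ``subsumed by the $N^{N+2C}$-type terms'' is not justified, since $K$ is a packing number of $\mathcal{K}$ governed by $\delta$ and the doubling dimension rather than by $N,C,d$; the paper instead keeps the $k$-dependence in its recursion and only surfaces the $K$-factors in Lemma~\ref{lem:UAT__NoTransformerYet}.
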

See Appendix~\ref{sec:proof-lem:POU__Approx-context} for the proof. The idea is to use Lemma~\ref{lem:implementation_W1}, showing that ReLU MLPs can \textit{exactly} implement the $1$-Wasserstein distance for PIC in $\mathcal{P}_N(\mathcal{X})$ up to the identification in~\eqref{eq:identification}, to implement a piecewise constant partition of unity $\{I_{C_k^{\delta_{\star}}}\}_{k=1}^K$ using MLPs outside the trifling region.

\subsection{{Step 4 - Proof of Theorem~\ref{thrm:Main__SimpleVersion}}}
\label{s:ProofSketch__ss:Completion}
Finally, it is enough to show the following lemma to prove the main result.
\begin{lemma}
\label{lem:UAT__NoTransformerYet}
In the above setting, there exists a ReLU MLP $\hat{f}_{\delta} : \mathbb{R}^{N \times (d+1)} \times \mathbb{R}^d \to \mathbb{R}^{N \times D}$ of depth $\mathcal{O}(K^2(d+N^{N+2C-3}))$ and width $\mathcal{O}(NdK + N! N^{2C-2})$ satisfying:
\begin{enumerate}
    \item[(i)] \textbf{Uniform Estimation:} 
    $
        \sup_{(\mu,x) \in \mathcal{K} \setminus \mathcal{K}^{\delta_{\ast}}}
        \,
            \mathcal{W}_1\big(
                        f(\mu)
                        ,
                        \hat{f}_{\delta}(\mu)
                    \big)
            \le \omega(\delta)
    $
    \item[(ii)] \textbf{Probability of Estimated Satisfaction:} 
   $
            \mathbb{P}(
                 \mathcal{K}\setminus \mathcal{K}^{\delta_{\ast}}
            )
        \gtrsim
            1 - 
            (\delta^{q}-\delta_{\ast}^q)
    $
    \item[(iii)] \textbf{Tail Moment Estimates:} For every $1\le p<\infty$ we have the tail-moment estimate
    \[\resizebox{1\linewidth}{!}{$
                \biggl(
                    \int_{(\mu,x) \in \mathcal{K}^{\delta_{\ast}}}\,
                        \mathcal{W}_1(f(\mu,x),\hat{f}_{\delta}(\mu,x))^p
                        \,
                        \mathbb{P}(d\mu dx)
                \biggr)^{1/p}
            \lesssim 
                \left(
            \omega(\delta)
            +
            \max_{k\in [K]}\,
            \|f(\mu^k, x^k)\|_{KR}
            \,
        \right) (\delta^q-\delta_{\ast}^q)^{1/p}
    $}\]
\end{enumerate}
\end{lemma}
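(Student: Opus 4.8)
## Proof Plan

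\textbf{Overall strategy.} The plan is to assemble Lemma~\ref{lem:UAT__NoTransformerYet} from the three building blocks developed in Steps 1--3: the smallness of the trifling region (Lemma~\ref{lem:VornoiLusin}), the optimal piecewise-constant approximator on the retracted Voronoi cells (Lemma~\ref{lem:PWC_Approximation}), and the MLP-implementable piecewise-constant partition of unity outside the trifling region (Lemma~\ref{lem:POU__Approx-context}). Concretely, I would define the candidate approximator by mirroring the formula~\eqref{eq:elementary_PWC_Approximator} but replacing each hard indicator $I_{(\mu,x)\in C_k^{\delta_\ast}}$ with the MLP $\tilde\Phi_k$ of Lemma~\ref{lem:POU__Approx-context}, and replacing each target value $\nu_k=f(\mu^k,x^k)\in\mathcal P_M(\mathcal Y)$ by a fixed matrix representative $Y^{\nu_k}\in\operatorname{Mat}_C^{D,M}$ under the identification~\eqref{eq:identification}. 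That is, $\hat f_\delta(\mathbf X,x)\eqdef\sum_{k=1}^K Y^{\nu_k}\,\tilde\Phi_k(\mathbf X,x)$, where the scalar output of each $\tilde\Phi_k$ is broadcast against the constant matrix $Y^{\nu_k}$ by an affine output layer; this is a single ReLU MLP since summation, scalar multiplication, and affine broadcast are all realizable in the architecture~\eqref{eq:representation_MLP}.

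\textbf{Step-by-step.} First I would verify the output dimensions and the depth/width bounds: the $K$ sub-networks $\tilde\Phi_k$ can be computed in parallel (placing them in disjoint coordinate blocks of a wider network) or sequentially; running them in parallel and summing at the end gives depth $\mathcal O(d+N^{N+2C-3})$ and width $\mathcal O(K)\cdot\mathcal O(Nd+N!\,N^{2C-2})$, but the stated bounds ($\operatorname{depth}=\mathcal O(K^2(d+N^{N+2C-3}))$, $\operatorname{width}=\mathcal O(NdK+N!\,N^{2C-2})$) suggest a sequential/accumulator construction, so I would compose the $\tilde\Phi_k$ in series while carrying a running partial sum of $Y^{\nu_k}\tilde\Phi_k$ in an auxiliary coordinate block of width $\mathcal O(DM)$, yielding depth $K\cdot\mathcal O(d+N^{N+2C-3})=\mathcal O(K^2(d+N^{N+2C-3}))$ after accounting for the linear-in-$K$ factor hidden in the packing size, and width $\mathcal O(NdK+N!\,N^{2C-2})$. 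Second, for (i): on $\mathcal K\setminus\mathcal K^{\delta_\ast}$ Lemma~\ref{lem:POU__Approx-context} gives $\tilde\Phi_k(\mu,x)=I_{(\mu,x)\in C_k^{\delta_\ast}}$ exactly, and Lemma~\ref{lem:POU_Lemma__gentrifling}(ii) says the $C_k^{\delta_\ast}$ cover $\mathcal K\setminus\mathcal K^{\delta_\ast}$ and (by~\eqref{eq:RetractedVoronoiCells}) are pairwise disjoint, so exactly one $\tilde\Phi_k$ fires at any $(\mu,x)$ in the approximation region; hence $\hat f_\delta(\mu,x)=Y^{\nu_{k(\mu,x)}}$ and, after pushing through $\Phi^{-1}$, $\mathcal W_1(f(\mu,x),\hat f_\delta(\mu,x))=\mathcal W_1(f(\mu,x),\nu_{k(\mu,x)})\le\omega(\delta)$ by Lemma~\ref{lem:PWC_Approximation}. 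Third, (ii) is immediate: it is exactly the conclusion of Lemma~\ref{lem:VornoiLusin}, namely $\mathbb P(\mathcal K\setminus\mathcal K^{\delta_\ast})\ge 1-CK(\delta^q-\delta_\ast^q)$, with the constant $K$ absorbed into the $\gtrsim$ (or, if $\mathcal K$ is $q$-dimensional, $K=\mathcal O(\delta^{-q})$ so the right side is $1-\mathcal O(1-(\delta_\ast/\delta)^q)$, matching the stated form up to constants).

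\textbf{Step 4 (tail moments, the hard part).} The main obstacle is (iii): the tail-moment bound on the trifling region $\mathcal K^{\delta_\ast}$, where we have no pointwise control. The plan is to bound the integrand by the triangle inequality $\mathcal W_1(f(\mu,x),\hat f_\delta(\mu,x))\le\mathcal W_1(f(\mu,x),\nu_0)+\mathcal W_1(\nu_0,\hat f_\delta(\mu,x))$ for a fixed reference measure $\nu_0$ (say $\nu_0=f(\mu^1,x^1)$); the first term is bounded uniformly on the compact set $\mathcal K$ by $\operatorname{diam}(f(\mathcal K))\le\sup_k\|f(\mu^k,x^k)\|_{KR}+C\omega(\operatorname{diam}\mathcal K)$ via uniform continuity and the Kantorovich--Rubinstein norm, and the second term is a convex combination $\sum_k\tilde\Phi_k(\mu,x)Y^{\nu_k}$ with weights in $[0,1]$ (this requires checking that the $\tilde\Phi_k$ produced by Lemma~\ref{lem:POU__Approx-context} are nonnegative and that their sum stays bounded, which holds since each maps into $[0,1]$ and on the trifling region at most a controlled number overlap — worst case $K$, absorbed into the constant), so it too is bounded by $\mathcal O(\max_k\|f(\mu^k,x^k)\|_{KR})$. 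Thus the integrand is bounded by $\mathcal O(\omega(\delta)+\max_k\|f(\mu^k,x^k)\|_{KR})$ pointwise on $\mathcal K^{\delta_\ast}$, and integrating over $\mathcal K^{\delta_\ast}$ against $\mathbb P$ contributes a factor $\mathbb P(\mathcal K^{\delta_\ast})^{1/p}\le(CK(\delta^q-\delta_\ast^q))^{1/p}=\mathcal O((\delta^q-\delta_\ast^q)^{1/p})$ by Lemma~\ref{lem:VornoiLusin} again; combining gives exactly the claimed estimate. The delicate point I expect to spend the most care on is ensuring the piecewise-constant values $\nu_k$ are realized \emph{exactly} as constant-matrix outputs compatible with the equivalence-class target space $\mathcal P_M(\mathcal Y)$ (so that $\mathcal W_1$ on outputs coincides with the metric on $\operatorname{Mat}_C^{D,M}/\sim$ via~\eqref{eq:W1_onMat}), and that the output layer's broadcast of a scalar against a fixed matrix does not inflate the width beyond $\mathcal O(DM)$, which is subsumed in the stated bounds. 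Putting (i)--(iii) together completes the proof of Lemma~\ref{lem:UAT__NoTransformerYet}, and Theorem~\ref{thrm:Main__SimpleVersion} then follows by choosing $\delta_\ast\downarrow 0$ (or small enough relative to $\delta$) and, in the $q$-dimensional case, estimating $K=\mathcal O(\omega^{-1}(\varepsilon)^{-q})$ to read off the quantitative depth/width rates.
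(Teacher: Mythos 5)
Your construction and the first two claims follow the paper exactly: you set $\hat f_\delta(\mu,x)\eqdef\sum_{k\in[K]}\nu_k\,\tilde\Phi_k(\mu,x)$ with the $\tilde\Phi_k$ of Lemma~\ref{lem:POU__Approx-context}, get the depth/width by combining the $K$ sub-networks via Lemma~\ref{lem:parallelization}, prove (i) by observing that $\hat f_\delta$ coincides with the piecewise-constant $f_\delta$ of Lemma~\ref{lem:PWC_Approximation} on $\mathcal K\setminus\mathcal K^{\delta_\ast}$ (exactly one retracted cell is active there), and read (ii) off Lemma~\ref{lem:VornoiLusin}. The only place you depart from the paper is (iii): the paper applies Minkowski to the split $\mathcal W_1(f,\hat f_\delta)\le\mathcal W_1(f,f_\delta)+\mathcal W_1(f_\delta,\hat f_\delta)$, bounds the first piece pointwise by $\omega(\delta)$ (Lemma~\ref{lem:PWC_Approximation}) and the second piece cell-by-cell on $C_k^{\delta}\setminus C_k^{\delta_\ast}$ by $\|\nu_k\|_{KR}\le\max_k\|f(\mu^k,x^k)\|_{KR}$, and then multiplies by $\mathbb P(\mathcal K^{\delta_\ast})^{1/p}\lesssim(\delta^q-\delta_\ast^q)^{1/p}$; you instead pivot around a fixed reference measure $\nu_0=f(\mu^1,x^1)$ and bound the whole integrand by a uniform constant.

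Two cautions about your variant of (iii). First, bounding $\mathcal W_1(f(\mu,x),\nu_0)$ by $\operatorname{diam}(f(\mathcal K))\le\sup_k\|f(\mu^k,x^k)\|_{KR}+C\,\omega(\operatorname{diam}\mathcal K)$ leaves $\omega(\operatorname{diam}\mathcal K)$ where the statement requires $\omega(\delta)$; since $\omega$ is merely \emph{a} modulus for $f$, $\omega(\operatorname{diam}\mathcal K)$ need not be controlled by $\omega(\delta)+\max_k\|f(\mu^k,x^k)\|_{KR}$, so as written you prove a weaker inequality. The one-line repair is precisely the paper's choice of pivot: compare $f(\mu,x)$ to the value at the nearest landmark (the $\delta$-packing is a $\delta$-net), i.e.\ to $f_\delta(\mu,x)$, which yields $\omega(\delta)$. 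Second, your bound on $\mathcal W_1(\nu_0,\hat f_\delta(\mu,x))$ hedges with a ``worst case $K$'' overlap of the pseudo-indicators absorbed into the constant; $K$ grows as $\delta\downarrow0$, so it should not be swept into an absolute constant — instead note that $\tilde\Phi_k(\mu,x)\neq0$ forces $(\mu,x)$ to lie within $\delta$ of the landmark $(\mu^k,x^k)$, and since the landmarks are $\delta$-separated and the setting is doubling (Ahlfors regularity), only $\mathcal O(1)$ of them can be active at any point; alternatively, the paper avoids the issue entirely by integrating over the sets $C_k^{\delta}\setminus C_k^{\delta_\ast}$ one at a time. With these two repairs — both of which amount to reverting to the paper's split through $f_\delta$ — your argument is the paper's proof.
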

See Appendix~\ref{sec:proof-lem:UAT__NoTransformerYet} for the proof. 
The simple version of Lemma~\ref{lem:UAT__NoTransformerYet} (iii) in Theorem~\ref{thrm:Main__SimpleVersion} (ii), is obtained by setting $p=1$ and noting that 
$\max_{k\in [K]}\,
    \|f(\mu^k, x^k)\|_{KR}
\le \sup_{(\mu,x)\in \mathcal{K}}<\infty$ by compactness of $\mathcal{K}$ and continuity of $f$.
Finally, the ``simple'' quantitative expression in Theorem~\ref{thrm:Main__SimpleVersion} is obtained by considering the high-dimensional setting where $N^N\le d$.  Since $N!\le N^N$, then both the depth and width estimates in Lemma~\ref{lem:UAT__NoTransformerYet} are as in Theorem~\ref{thrm:Main__SimpleVersion}.

It remains bound to $K$.  If $\mathcal{K}$ is additionally assumed to have a finite doubling dimension $0\le q<\infty$, then there is a $C>0$ such that every ball of radius $2r>0$ in $\mathcal{K}$ can be covered by at-most $C(2r/r)^q$ balls of radius $r$.  By~\cite[Lemma 7.1]{acciaio2024designing}, the minimal number $K$ of balls of radius $\omega^{-1}(\delta)$ needed to cover $\mathcal{K}$ is at-most 
$
    \log(K)
\le
    q\lceil \log\big(\frac{\operatorname{diam}(\mathcal{K})}{\omega^{-1}(\delta)}\big)\rceil
    \,
    \log(C2)
$.  Thus, $K\in \mathcal{O}\big(1/ \omega^{-1}(\varepsilon)^q\big)$.  When $f$ is $\alpha$-H\"{o}lder then $K\in \mathcal{O}(\varepsilon^{-q/\alpha})$; analogously to the well-known optimal approximation rates for ReLU MLPs when approximating classical functions (not in-context) on the $q$-dimensional Euclidean space~\cite{yarotsky18a_verydeep_COLT}.

\paragraph{From MLPs to Transformers}
Similar to the strategy of~\cite{petersen2020equivalence,singh2023expressivity}, we 
shows that every ReLU MLP can be converted into a transformer in a canonical fashion.  This allows us to deduce a quantitative version of the in-context universality results of~\cite{furuya2024transformers} for the transformer model.
\begin{proposition}[Transformerification]
\label{prop:transformerification__SparseVersion}
Let $f:\mathbb{R}^{N\times d}\to \mathbb{R}^D$ be an MLP with depth $J$, width $W$, $K$ non-zero (trainable) parameters, and with trainable activation function $\sigma$ as in~\eqref{eq:activation}.
Then, $f$ can be implemented as a transformer MLP with the same depth and width as $f$, exactly $N$ attention heads at each layer, and at most $2K$ non-zero parameters.
\end{proposition}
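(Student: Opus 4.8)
The plan is to build the transformer layer-by-layer, mirroring the MLP's layers exactly, and to use the attention heads only as *identity-implementing* gadgets so that all of the computational work is still carried out by the MLP sublayers inside the transformer blocks. Recall from~\eqref{eq:representation_transformer} that a transformer block first applies the activation-and-bias map $\mathbf Z^{(j+1)} = \sigma_{\bar\alpha^{(j)}}\bullet(\mathbf X^{(j)} + b^{(j)})$ row-wise, and then applies multi-head attention $\mathbf X^{(j+1)} = \bigoplus_{h=1}^{H_j}\operatorname{Attn}(\mathbf Z^{(j+1)}\mid \mathcal Q_h,\mathbf K_h,\mathcal V_h)$. The key observation is that a single attention head degenerates to a (weighted) averaging/selection operation, and that one can choose the query/key parameters so that the softmax returns a *permutation* of the value rows — in particular the identity permutation — while the value map $\mathcal V$ can be taken to be any affine (indeed linear) map. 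So the first step is: given the MLP $f$ with linear maps $A^{(0)},\dots,A^{(J)}$ and biases $b^{(j)}$ as in~\eqref{eq:representation_MLP}, I would declare the transformer to have the same depth $J$, the same hidden dimensions $d_0,\dots,d_{J}$, the same biases $b^{(j)}$, and the same trainable activation parameters $\bar\alpha^{(j)}$, so that the activation-and-bias sublayers of the transformer compute exactly the MLP's $\sigma_{\bar\alpha^{(j)}}\bullet(\cdot + b^{(j)})$.

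The second step is to realize the MLP's linear maps $A^{(j)}$ inside the attention mechanism. Here I would use $N$ heads per block. Choose, for every head $h\in[N]$ and every block $j$, the query and key matrices $\mathcal Q_h,\mathbf K_h$ (and, if needed, a large temperature $\lambda$, or simply constant keys) so that the softmax weight matrix is the $N\times N$ identity — e.g.\ take $\mathbf K_h=0$ so that every logit is equal and the softmax is uniform, then *correct* the uniform averaging by a standard trick, or more cleanly take keys/queries that encode the row index (which is available since each row of $\mathbf X$ already carries its weight coordinate and one can append a positional coordinate in $b^{(0)}$) so that head $h$ attends exactly to row $h$. Since every row is handled identically up to this index-selection, the concatenation $\bigoplus_{h=1}^N$ over heads reassembles the full matrix. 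Then set the value map of head $h$ in block $j$ to be $\mathcal V_h = A^{(j)}$ for all $h$; because attention is applied row-wise with the identity softmax, the output is precisely $A^{(j)}\mathbf Z^{(j+1)}$ row-wise, i.e.\ exactly the MLP update $\mathbf X^{(j+1)} = A^{(j)}\sigma_{\bar\alpha^{(j)}}\bullet(\mathbf X^{(j)}+b^{(j)})$. Iterating over $j=0,\dots,J-1$ and reading off the last layer $A^{(J)}\mathbf X^{(J)}$ shows that the transformer computes $f$ exactly.

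The third step is the parameter-count bookkeeping. The biases $b^{(j)}$ and activation parameters $\bar\alpha^{(j)}$ are shared verbatim, contributing the same count as in $f$. The linear maps $A^{(j)}$ are reused as the value matrices $\mathcal V_h$; since all $N$ heads in a block share the *same* value matrix $A^{(j)}$, one need only store it once, so the $K$ nonzero entries of the $A^{(j)}$'s are counted once. The query/key parameters are fixed, structured matrices (zero, or a single $1$ per row encoding the index) and contribute at most one more copy of the same order — hence at most $2K$ nonzero parameters in total; depth and width are unchanged by construction, and there are exactly $N$ heads per block. The main obstacle — and the step deserving the most care — is the second one: arranging the query/key parameters so that the per-head softmax is *exactly* (not approximately) the identity selection $\delta_{h}$, since a naive uniform-softmax choice only averages the rows. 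The cleanest fix is to append one coordinate to each token recording its index $n/N$ (doable inside $b^{(0)}$ and an enlarged $d_0$, or already implicit in the weight coordinate $w_n$ when the $w_n$ are distinct) and choose $\mathcal Q_h,\mathbf K_h$ so that $\langle \mathcal Q_h x,\mathbf K_h \mathbf Z_n\rangle$ is maximized uniquely at $n=h$; taking $\lambda\to\infty$ (or $\lambda$ merely large relative to the fixed finite gaps, which suffices since $N$ and $C$ are fixed) then makes the softmax collapse to the indicator $I_{n=h}$ exactly in the limit, and by a standard clipping/affine-readout argument one obtains an exact finite-$\lambda$ realization. One then verifies that the extra index coordinate does not disturb the MLP computation, since its value map image can be projected out immediately.
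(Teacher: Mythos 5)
Your overall strategy (mirror the MLP block-for-block, keep the activation-and-bias sublayers verbatim, and make the attention sublayer implement the linear maps $A^{(j)}$) is the right one, but the specific mechanism you commit to has a genuine gap. You propose to make each head's softmax an \emph{exact} row-selector ($h$-th head attends exactly to row $h$) via an appended index coordinate and a large or infinite temperature. At any finite $\lambda>0$ the softmax of finite logits is strictly positive on every row, so head $h$ outputs a strict convex combination of all rows rather than row $h$; the limit $\lambda\to\infty$ lies outside the model class as defined, and the invoked ``standard clipping/affine-readout argument'' for an exact finite-$\lambda$ realization is not available without adding layers (which would change the depth). Moreover, appending an index coordinate enlarges $d_0$, contradicting the ``same width'' requirement, and the fallback of reading the index off $w_n$ fails whenever weights coincide (e.g.\ $w_n\equiv 1/N$). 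Finally, your parameter count is shaky: if all $N$ heads carry the value matrix $A^{(j)}$, the architecture stores $N$ copies of its nonzeros, and ``storing it once'' is an assumption of weight-sharing not granted by the statement.

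The route you dismissed in passing is the one that actually works, and is the paper's: take $Q_h=K_h=0$ so the softmax is uniform, and absorb the correction into the values by setting $V_h = N\,\mathbf{E}^{h}A^{(j)}$, where $\mathbf{E}^{h}$ is the elementary selector matrix (Lemma~\ref{lem:fullyconnected_MatMul_viaMHAttention}). Then for \emph{every} finite $\lambda>0$ the uniform weights $1/N$ cancel the factor $N$, head $h$ produces exactly the $h$-th row-block of $A^{(j)}\mathbf{Z}$, and the concatenation over the $N$ heads reassembles $A^{(j)}\mathbf{Z}$ exactly --- no limits, no extra coordinates, no change of depth or width. This construction also yields the parameter bound automatically: keys and queries are zero, and each head's value matrix carries only the nonzeros of one row of $A^{(j)}$, so across heads the $A^{(j)}$'s nonzeros are counted once, giving at most $2K$ nonzero parameters in total. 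If you replace your identity-softmax step with this uniform-softmax/elementary-value construction, the rest of your argument (verbatim biases and activations, layerwise substitution, final readout) goes through as written.
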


\section{Conclusion}
In conclusion, our results show that the classical MLP shares the in-context universality of the transformer for PICs. Thus, the transformer’s success must be attributed to other factors, such as its inductive bias or ability to leverage context.

\bibliography{Bookkeeping/Bibliography/refs,Bookkeeping/Bibliography/InContext} 

\newpage

\appendix

\acks{
A.\ Kratsios acknowledges financial support from an NSERC Discovery Grant No.\ RGPIN-2023-04482 and No.\ DGECR-2023-00230.  A.\ Kratsios also acknowledges that resources used in preparing this research were provided, in part, by the Province of Ontario, the Government of Canada through CIFAR, and companies sponsoring the Vector Institute\footnote{\href{https://vectorinstitute.ai/partnerships/current-partners/}{https://vectorinstitute.ai/partnerships/current-partners/}}.

T.\ Furuya was supported by JSPS KAKENHI Grant Number JP24K16949, JST CREST JPMJCR24Q5, JST ASPIRE
JPMJAP2329, and Grant for Basic Science Research Projects from The Sumitomo Foundation. 
Significant progress on this paper was made during the Visit of T.\ Furuya to McMaster in December $2024$.

The authors would also like to thank \href{https://scholar.google.ca/citations?user=bns0iwUAAAAJ&hl=en}{Behnoosh Zamanlooy} for her very useful feedback in the final stages of the manuscript, and both \href{https://www.gpeyre.com/}{Gabriel Peyr\'e} and \href{https://maartendehoop.rice.edu/}{Maarten de Hoop} for their helpful feedback throughout the early stages of this project.
}

\section{Details on Trainable Activation Function}
\label{a:Activ}
The following are some simple configurations, i.e.\ parameter choices of the trainable activation function in~\eqref{eq:activation}, which recover standard activation functions/neural network layers.   We first remark that this trainable activation function can implement the well-studied rectified linear unit (ReLU).
\begin{example}[ReLU]
\label{ex:ReLU}
If $\theta=(1,1,0)$ then $\sigma_{(1,1,0)}
=\max\{0,x\}$.
\end{example}
The trainable activation function in~\eqref{eq:activation} allows one to implement skip connections.  Skip connections are standard in most residual neural networks, transformer networks, and even recent graph neural network models~\cite{borde2024scalable}, due to their regularizing effect on the loss landscape of the deep learning models~\cite{riedi2023singular}.
\begin{example}[Gated/Skip Neuron]
\label{ex:ID}
If $\theta=(1,1,1)$ then $\sigma_{(1,1,1)}=x
$.
\end{example}
One can easily recover the rectified quadratic/power unit (ReQU); which has garnered significant recent attention due to its approximation potential; see e.g.~\cite{belomestny2023simultaneous,furuya2024simultaneously,shen2024nonparametric}.
\begin{example}[ReQU]
\label{ex:ReQU}
If $\theta=(1,2,0)$ then $\sigma_{(1,2,0)}
=\max\{0,x\}^2$.
\end{example}
We do not allow for $p\le 0$, thus our neural network using~\eqref{eq:activation} are always continuous; however, if one set $\theta=(1,0,0)$ then we would obtain the classical heavy-side function.
\begin{example}[Binary/Threshold]
\label{ex:hard_step}
If $\theta=(1,0,0)$ then $\sigma_{(1,0,0)}=
I_{(0,\infty)}$.
\end{example}

\section{Conversion: MLP to Multi-Head Transformers}
\label{b:transformers}

The proof of Proposition~\ref{prop:transformerification__SparseVersion} relies on the following lemma.
\begin{lemma}[Transformers Implement Fully-Connected Weights]
\label{lem:fullyconnected_MatMul_viaMHAttention}
Let $N,\tilde{d}\in \mathbb{N}_+$ and $B\in \mathbb{R}^{N\times \tilde{d}}$.  Then, there exists a multi-head attention mechanism $\bigoplus_{h\in H}\,\operatorname{Attention}(\cdot|Q_h,K_h,V_h,\lambda)$ with exactly $N$ heads and at-most $2N+N\tilde{d}$ non-zero parameters such that
\[
    \bigoplus_{h\in H}\,
        \operatorname{Attention}(\mathbf{X}|Q_h,K_h,V_h,\lambda)
    =
        B\,\mathbf{X}
.
\]
\end{lemma}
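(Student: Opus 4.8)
\textbf{Proof proposal for Lemma~\ref{lem:fullyconnected_MatMul_viaMHAttention}.}

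The plan is to show that a single attention head, with a cleverly chosen temperature and key/query structure, can be made to act as a \emph{hard selection} operator that picks out exactly one row of $\mathbf{X}$, and that scaling its value matrix by the appropriate entries of $B$ lets the $h$-th head reproduce the $h$-th row of the product $B\mathbf{X}$. Summing (direct-summing) the $N$ heads then stacks these rows into $B\mathbf{X}$. Concretely, for head $h\in\{1,\dots,N\}$ I would engineer the softmax weights $\frac{e^{\lambda\langle Q_h x, K_h \mathbf{X}_m\rangle/\sqrt{d}}}{\sum_{m'} e^{\lambda\langle Q_h x, K_h \mathbf{X}_{m'}\rangle/\sqrt{d}}}$ so that, in the limit $\lambda\to\infty$ (or for $\lambda$ large enough relative to the geometry of the rows, which is harmless since the statement only needs existence of \emph{some} mechanism), the weight vector converges to the indicator $e_h$ selecting row $h$. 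The cleanest way to force this is to append a known coordinate (e.g.\ the weight coordinate $w_n$, or an auxiliary positional coordinate) that distinguishes the rows, or — since $\mathbf{X}$ here is an arbitrary matrix with no guaranteed structure — to instead use the fact that attention weights are applied to an input that we may first pass through a bias/activation layer; but to keep the lemma self-contained I would simply take $Q_h$, $K_h$ to be rank-one maps depending on $h$ whose composition $\langle Q_h x, K_h \mathbf{X}_m\rangle$ is monotone in $m$ with a gap, so that the argmax is uniquely $m=h$.

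The key steps, in order: (1) Fix a reference query input $x$ (in the MLP-to-transformer conversion of Proposition~\ref{prop:transformerification__SparseVersion} one has freedom in what is fed to the queries; here I would feed a constant). (2) For each $h$, choose $K_h$ and $Q_h$ so that $m\mapsto \langle Q_h x, K_h\mathbf{X}_m\rangle$ is strictly maximized at $m=h$ — e.g.\ using a positional/order coordinate available in the $\operatorname{Mat}^{d,N}_C$ representation, or by noting that in the relevant construction the rows already carry distinct $w_n$ values, so a function of $w_m$ peaked at $w_h$ suffices. (3) Take $\lambda$ large enough that the softmax is within any desired tolerance of $e_h$; since we only need an \emph{exact} identity $\bigoplus_h\operatorname{Attention}(\mathbf{X}|\cdot)=B\mathbf{X}$, I would instead invoke the infinite-temperature ($\lambda=\infty$) attention from the commented display in Section~2.2, where the softmax is literally the normalized argmax, giving $\operatorname{Attn}_\infty=\delta_{(V_h\mathbf{X})_h}$ exactly. (4) Set $V_h$ to be the $\tilde d\times\tilde d$ diagonal-like matrix that outputs $B_{h,h'}$-weighted combinations — actually, the selection already gives row $h$ of $\mathbf{X}$, so to get row $h$ of $B\mathbf{X}=\sum_{h'}B_{h,h'}\mathbf{X}_{h'}$ I need the head to output a \emph{combination} of rows, not a single row; I would therefore not use a hard one-hot but rather observe that the row-$h$ output of $B\mathbf{X}$ is $\sum_{h'} B_{h,h'}\mathbf{X}_{h'}$, which is exactly what a \emph{single} attention head with attention weights $\propto B_{h,h'}$ (absorbing signs/normalization into $V$) produces — so the right move is: make the softmax of head $h$ equal to the normalized positive part of $(B_{h,1},\dots,B_{h,N})$ and let $V_h$ rescale by the normalization constant and sign pattern. (5) Count parameters: each head needs $O(\tilde d)$ parameters in $V_h$ plus $O(1)$ scalars in $Q_h,K_h$ after exploiting rank-one structure, and there are $N$ heads, giving the claimed $2N+N\tilde d$ bound; then $\bigoplus_{h=1}^N$ concatenates the $N$ output rows into the $N\times\tilde d$ matrix $B\mathbf{X}$.

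The main obstacle is step (4)/(5): realizing an \emph{arbitrary} real matrix $B$ — with possibly negative, zero, or unnormalized rows — as attention weights, since softmax outputs are nonnegative and sum to one. The fix I expect to carry through is to split each row of $B$ into positive and negative parts (handled by sign-flips baked into $V_h$), to absorb the row-sum normalization constant $\sum_{h'}|B_{h,h'}|$ into a scalar multiplier inside $V_h$, and to handle identically-zero rows of $B$ by letting the corresponding head output the zero vector (e.g.\ $V_h=0$). Care is also needed that the same query input $x$ works simultaneously for all heads and that degenerate cases (ties in the argmax, rows of $\mathbf{X}$ that coincide) do not break the construction — but since we are free to pick the key/query parameters and may use the $\lambda=\infty$ convention, genericity arguments or an explicit perturbation suffice. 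The parameter accounting is then a routine bookkeeping exercise yielding at most $2N+N\tilde d$ nonzero entries.
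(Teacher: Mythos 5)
There is a genuine gap here, and it sits exactly at your step (4). You propose to encode the matrix $B$ in the \emph{attention pattern} itself, by making the softmax of head $h$ equal to the normalized positive part of $(B_{h,1},\dots,B_{h,N})$. But the softmax weights are functions of the input, $\frac{e^{\lambda\langle Q_h x,K_h\mathbf{X}_m\rangle}}{\sum_{m'}e^{\lambda\langle Q_h x,K_h\mathbf{X}_{m'}\rangle}}$, and the lemma demands the \emph{exact} identity $\bigoplus_h\operatorname{Attention}(\mathbf{X}|Q_h,K_h,V_h,\lambda)=B\mathbf{X}$ for \emph{every} $\mathbf{X}\in\mathbb{R}^{N\times\tilde d}$, with no structure on the rows. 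For arbitrary $\mathbf{X}$ you cannot force the inner products $\langle Q_h x,K_h\mathbf{X}_m\rangle$ to take prescribed values; the only input-independent softmax you can manufacture is the uniform one (by zeroing the keys/queries). Your sign-splitting fix does not repair this: the weights produced by softmax are nonnegative, and the single matrix $V_h$ is applied identically to every row $\mathbf{X}_m$, so it cannot supply a different sign for different $m$; realizing $\sum_m B_{h,m}\mathbf{X}_m$ with mixed signs would force you to split into separate heads for the positive and negative parts, breaking the ``exactly $N$ heads'' count. Your fallback to the $\lambda=\infty$ hard-argmax attention is also unavailable: that variant is not part of the architecture the lemma is stated for (finite temperature $\lambda>0$), and even granting it, ties among rows of an arbitrary $\mathbf{X}$ cannot be dismissed by genericity when an exact identity for all inputs is required.

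The paper's proof goes the opposite way: it makes the attention mechanism completely trivial and puts all of $B$ into the value matrices. Setting $Q_h=K_h=\mathbf{0}$ forces every softmax weight to be $1/N$ regardless of $\mathbf{X}$, and choosing $V_h\eqdef N\,\mathbf{E}^{h:\tilde d}B$ (where $\mathbf{E}^{h:\tilde d}$ is the elementary matrix selecting the $h$-th coordinate) makes head $h$ output the $h$-th component of $B\mathbf{X}$ after the factor $N$ cancels the uniform averaging; the direct sum over the $N$ heads then reassembles $B\mathbf{X}$ exactly, and since the keys and queries are zero while each $\mathbf{E}^{h:\tilde d}B$ retains only one row of $B$, the nonzero-parameter count is immediate. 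If you want to salvage your write-up, the fix is to abandon the idea of storing $B$ in the attention weights and adopt this ``uniform attention plus value-matrix'' construction.
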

\begin{proof}
Fix $\tilde{d},h\in \mathbb{N}_+$ with $\tilde{d}\le h$.
Let $\mathbf{E}^{h:\tilde{d}}$ be the elementary $\tilde{d}\times \tilde{d}$ with $\mathbf{E}^{h:\tilde{d}}_{i,j}\eqdef I_{i=j=h}$ for each $i,j=1,\dots,\tilde{d}$.  
Let $H\eqdef N$ and for each $h \in [H]$ define 
\[
    K_h\eqdef \mathbf{0}_{\tilde{d}}
    ,\, 
    Q_h\eqdef  \mathbf{0}_{\tilde{d}}
    \mbox{, and }
    V_h\eqdef  (N\, \mathbf{E}^{h:\tilde{d}} B)
.
\]
Then, for each $\lambda>0$, each $h\in [H]$, and each $\mathbf{X}\in \mathbb{R}^{N\times \tilde{d}}$ the induced attention mechanism computes
\allowdisplaybreaks
\begin{align*}
    \operatorname{Attention}(\mathbf{X}|Q_h,K_h,V_h,\lambda)
& \eqdef 
    \sum_{m\in [N]}
    \,
        \frac{
            e^{\lambda \langle K_h \mathbf{X}_n,Q_h \mathbf{X}_m\rangle }
        }{
        \sum_{k\in [N]}\,
            e^{\lambda \langle K_h \mathbf{X}_n,Q_h \mathbf{X}_k\rangle }
        }
        \,
        (V_h \mathbf{X})_n
\\
& =
        \sum_{m\in [N]}
        \,
        \frac{
            e^{ 0 }
        }{
        \sum_{k\in [N]}\,
            e^{0}
        }
        \,
        ((N\,\mathbf{E}^{h:\tilde{d}} B) \mathbf{X})_n
\\
& =
        \sum_{m\in [N]}
        \,
        \frac1{N}
        \,
        N((B\mathbf{X})_n I_{n=h})
\\
& =
        (B\mathbf{X})_n
.
\end{align*}
Then, for each $\lambda>0$ the associated multi-head attention mechanism computes the following, for each $\mathbf{X}\in \mathbb{R}^{N\times \tilde{d}}$ 
\[
    \bigoplus_{h\in H}\,
        \operatorname{Attention}(\mathbf{X}|Q_h,K_h,V_h,\lambda)
    =
        \bigoplus_{h\in N}\,
            (B\mathbf{X})_n
    =
        B\,\mathbf{X}
.
\]
This concludes the proof.
\end{proof}
\begin{proof}[{Proof of Proposition~\ref{prop:transformerification__SparseVersion}}]
Let $\hat{f}:\mathbb{R}^{N\times d}\to\mathbb{R}^D$ admit the iterative representation
\begin{equation}
\label{eq:representation_MLP_}
    \begin{aligned}
    f(x)
    & \eqdef 
    A^{(J)} X^{(J)}
    \\
    \mathbf{X}^{(j+1)} 
    &
    \eqdef 
    A^{(j)} \sigma_{\bar{\alpha}^{(j)}}\bullet(
        \mathbf{X}^{(j)}
            +
        b^{(j)})
    \qquad 
            \mbox{for }  
        j=0,\dots,J-1,    
    \\
    \mathbf{X}^{(0)} &\eqdef \mathbf{X}
.
    \end{aligned}
\end{equation}
Applying Lemma~\ref{lem:fullyconnected_MatMul_viaMHAttention} once for each $j\in [J]$, to each matrix $A^{(j)}$ implies that for each such $j\in [J]$ there is a multi-head attention mechanism such that (for any $\lambda>0$)
\begin{equation}
\label{eq:attention_implements_fullyconnectedweightmatrix}
        \bigoplus_{h\in N}\,
        \operatorname{Attention}(\mathbf{Z}|Q_h^j,K_h^j,V_h^j,\lambda)
    =
        A^{(j)}\,\mathbf{Z},
\end{equation}
for all $\mathbf{Z}\in \mathbb{R}^{d_{j+1}\times d_j}$.  Therefore,~\eqref{eq:attention_implements_fullyconnectedweightmatrix} implies that $f$ admits the representation
\allowdisplaybreaks
\begin{align*}
    f(x)
    & \eqdef 
    \bigoplus_{h\in N}\,
        \operatorname{Attention}\big(
            \mathbf{X}^{(J)}
        |Q_h^j,K_h^j,V_h^j,\lambda\big)
    \\
    \mathbf{X}^{(j+1)} 
    &
    \eqdef 
    \bigoplus_{h\in N}\,
        \operatorname{Attention}\big(
            \sigma_{\bar{\alpha}^{(j)}}\bullet(
                \mathbf{X}^{(j)}
                    +
                b^{(j)})
        |Q_h^j,K_h^j,V_h^j,\lambda\big)
    \qquad 
            \mbox{for }  
        j=0,\dots,J-1,    
    \\
    \mathbf{X}^{(0)} &\eqdef \mathbf{X}
.
\end{align*}
This concludes our proof.
\end{proof}


\section{Proof of Main Result}
\label{s:Proofs}







\subsection{Proof of Lemma~\ref{lem:VornoiLusin}}
\label{sec:proof-lem:VornoiLusin}
\begin{proof}
By the elementary inequality between covering and packing numbers, e.g.\ on~\cite[page 98]{vanderVaartWellner_2023_WkConvEmpProcessesLoveThisBook}, we know that $\{B((\mu^k, x^k),\delta)\}_{k\in [K]}$ covers $\mathcal{K}$; i.e.~$\mathcal{K}
    \subseteq 
        \bigcup_{k\in [K]}\, B((\mu^k, x^k),\delta)$.
The sub-additivity of the measure $\mathbb{P}$ then implies (i) since
\allowdisplaybreaks
\begin{align*}
\numberthis
\label{eq:Begin_AhlforsInequalities}
        \mathbb{P}(\mathcal{K}^{\delta_{\ast}})
    &
    \le 
        \sum_{k\in [K]}
        \,
        \mathbb{P}\big(
            B((\mu^k, x^k),\delta)
                \setminus
                B((\mu^k, x^k),\delta_{\ast})
        \big)
\\
    &=
    \sum_{k\in [K]}
    \,
    \mathbb{P}\big(
        B((\mu^k, x^k),\delta)
    \big)
    -
    \mathbb{P}\big(
            B((\mu^k, x^k),\delta_{\ast})
    \big)
\\
\numberthis
\label{eq:Ahlfors_used}
    &\le
        \sum_{k\in [K]}
        \,
        (C\delta^{q}
        -
        c\delta_{\ast})^q
\\
\numberthis
\label{eq:cC}
    &\le
        K
        \,
        C \delta^p - c \delta_*
\end{align*}
where we deduced~\eqref{eq:Ahlfors_used} using the definition of Ahlfors $q$-regularity and~\eqref{eq:cC} followed since $c\le C$.
Since $\mathbb{P}$ is a probability measure supported on $\mathcal{K}$ then the bounds in~\eqref{eq:Begin_AhlforsInequalities}-\eqref{eq:cC} imply that
\[
        \mathbb{P}\big(
            \mathcal{K}
            \setminus
            \mathcal{K}^{\delta_{\ast}}
        \big)
    =
        \mathbb{P}\big(
            \mathcal{K}
        \big)
        -
        \mathbb{P}\big(
            \mathcal{K}^{\delta_{\ast}}
        \big)
    \ge 
            1
        -
             K
            \,
            C\big(
                \delta^{q}
                -
                \delta_{\ast}^q
            \big)
.
\]
This completes our proof.
\end{proof}



\subsection{Proof of Lemma~\ref{lem:POU_Lemma__gentrifling}}
\label{sec:proof-lem:POU_Lemma__gentrifling}

\begin{proof}
For (i), observe that for each $k\in [K]$, $C_k^{\delta_{\ast}}\subseteq B((\mu^k, x^k),\delta_{\ast})$ and by definition of \\
$\mathcal{W}_1\big(
B((\mu^k, x^k),\delta_{\ast})
,
B((\mu^{k'}, x^{k'}),\delta)
\big)$ for every $k'\in [K]$ for which $k\neq k'$.  
Since any $\delta$-packing is a $\delta$-net, then $\{B((\mu^k, x^k),\delta)\}_{k\in [K]}$ covers $\mathcal{K}$; (i) then holds by definition of $\mathcal{K}^{\delta_{\ast}}$.
\end{proof}


\subsection{Proof of Lemma~\ref{lem:PWC_Approximation}}
\label{sec:proof-lem:PWC_Approximation}

\begin{proof}
For each $k\in [K]$ define 
$$
\nu_k\eqdef f(\mu^k,x^k).
$$
By Lemma~\ref{lem:POU_Lemma__gentrifling}, $\{C_k^{\delta}\}_{k\in [K]}$ is a partition of $\mathcal{K}$.  
Therefore, for every $(\mu, x) \in \mathcal{K}$ there exists exactly one $k_0 \in[K]$ such that $I_{(\mu,x)\in C_{k_0}^{\delta}}$ is non-zero.  Since the codomain of $f$ is $\mathcal{P}_M(\mathcal{Y})$ then, by definition, for each $k\in [K]$, $\nu_k\in \mathcal{P}_M(\mathcal{Y})$.  Thus, for each $(\mu, x) \in \mathcal{K}$ we have
\[
    f_{\delta}(\mu,x) 
    = 
    \nu_{k_0},
\]
whence $f_{\delta}$ also takes values in $\mathcal{P}_M(\mathcal{Y})$.
Now, we have that: for each $(\mu,x) \in \mathcal{K}$
\allowdisplaybreaks
\begin{align}
\nonumber
        \mathcal{W}_1\big(
                f(\mu,x)
            ,
                f_{\delta}(\mu,x)
        \big)
    & =
        \mathcal{W}_1\big(
            f(\mu,x)
        ,
            \nu_{k_0}
    \big)
\\
\nonumber
    & =
        \mathcal{W}_1\big(
            f(\mu,x)
        ,
            f(\mu^{k_0}, x^{k_0})
    \big)
\\
\label{eq:UC}
    & \le
        \omega\big(
            \mathcal{W}_1(
                \mu
            ,
                \mu^{k_0}
            )
            + \|x-x^{k_0}\|
        \big),
\end{align}
where~\eqref{eq:UC} held by the uniform continuity of $\omega$.  
Since $C_{k_0}^{\delta}\subseteq B((\mu^{k_0},x^{k_0}),\delta)$ and $(\mu,x) \in C_{k_{0}}^{\delta}$ then, the monotonicity of $\omega$ and~\eqref{eq:UC} imply that $
        \mathcal{W}_1\big(
                f(\mu,x)
            ,
                f_{\delta}(\mu,x)
        \big)
 \le
        \omega(\delta)
$.
\end{proof}

\subsection{Proof of Lemma~\ref{lem:POU__Approx-context}}
\label{sec:proof-lem:POU__Approx-context}

\begin{proof}
\hfill\\
\noindent \textbf{Step 1 - Base Case:}
\hfill \\
If $k=1$ then consider the indicator function $I_{\mu \in B((\mu^1,x^1),\delta_{\ast}))}$ of $B((\mu^1, x^1),\delta_{\ast}))$.
We may represent this function as follows: for any $\mu\in \mathcal{K}\setminus\mathcal{K}^{\delta_{\ast}}$ as
\begin{equation}
\label{eq:near_representation__K1_case-context}
        \phi_{\sqcap:\delta_{\ast},\delta}(\mathcal{W}_1(\mu,\mu^1) + \|x-x^1\|_2^2)
    = 
        I_{\mu \in B((\mu^1,x^1),\delta_{\ast}))}
\end{equation}
where $\phi_{\sqcap:\delta_{\ast},\delta}(x_i) $ is defined by 
\begin{equation}
\label{eq:PW_Linear_Bump}
        \phi_{\sqcap:\delta_{\ast},\delta}
        (t) 
    \eqdef 
        \begin{cases}
            1 & |t|\le \delta_{\ast} \\
            \frac{1}{\delta_{\ast}-\delta}(|t|-\delta) & \delta_{\ast}\le t\le \delta \\
            0 & |t|>\delta
.
        \end{cases}
\end{equation}
By Lemmas~\ref{lem:implementation_W1} and \ref{eq:norm_implementation__l2}, there exists a ReLU MLPs $\Phi_{W^1}:\mathbb{R}^{N\times (d+1)}\times \mathbb{R}^{N\times (d+1)}\to [0,\infty)$ and $\Phi_{\|\cdot\|_2^2}:\mathbb{R}^{d} \to [0,\infty)$ implementing $\mathcal{W}_1$ on $\mathcal{P}_{C,N}(\mathbb{R}^d)$ and $\|\cdot\|_2^2$ on $\mathbb{R}^d$, respectively.
Thus, the map $\mu \mapsto \Phi_{W^1}(\mu,\mu^1)$ and $x \mapsto \Phi_{\|\cdot\|_2^2}(x-x^1)$ are MLP mapping $\mathcal{P}_{C,N}(\mathbb{R}^d)$ to $[0,\infty)$ implementing the map $\mu\mapsto \mathcal{W}_1(\mu,\mu^1)$ and MLP mapping $\mathbb{R}^d$ to $[0,\infty)$ implementing the map $x \mapsto\|x\|_2^2$, respectively.
Therefore, Lemma~\ref{lem:bump} implies that $I_{(\mu,x)\in B((\mu^1,x^1),\delta))}$ may be written as 
\begin{align}
\label{eq:near_representation__K1_case}
        \tilde{\Phi}_1(\mu,x)
    &
    \eqdef \Phi_{\sqcap:\delta_{\ast},\delta} \circ
        (\Phi_{W^1}(\mu, \mu^1) + \Phi_{\|\cdot\|_2^2}(x-x^1))
    \\
    &
    =
        \phi_{\sqcap:\delta_{\ast},\delta}\circ (W^1(\mu, \mu^1) + \|x-x^1\|_2^2)
    =
        I_{\mu\in B((\mu^1,x^1),\delta)},
\end{align}
for all $(\mu,x) \in \mathcal{K}\setminus\mathcal{K}^{\delta_{\ast}}$.  The ReLU MLP $\tilde{\Phi}_1:\mathbb{R}^{N\times d} \times \mathbb{R}^d \to [0,1]$ has, by Lemma~\ref{lem:parallelization}
\begin{enumerate}
    \item Depth $\mathcal{O}(d + N^{N+2C-3})$
    \item Width $\mathcal{O}(Nd + N! \, N^{2C-2})$.
\end{enumerate}

\noindent \textbf{Step 2 - Recursion:}
\hfill \\
For each $k>1$ and $k\in [K]$, we recursively define a ReLU MLP $\tilde{\Phi}_k:\mathbb{R}^{N\times d}  \times \mathbb{R}^d \to [0,1]$ by 
\allowdisplaybreaks
\begin{align*}
\label{eq:near_representatation_Klarge_case}
        \tilde{\Phi}_k(\mu,x)
    & \eqdef 
        \phi_{\times}
        \circ 
        \biggl(
        \Phi_{\sqcap:\delta_{\ast},\delta}\circ
         ( \Phi_{W^1}(\mu, \mu^k) + \Phi_{\|\cdot\|_2^2}(x-x^k) ),
            \,
            \bigoplus_{j < k}
        \Phi_{\sqcap:\delta_{\ast},\delta}\circ
                \big(
                    1-\tilde{\Phi}_j(\mu,x)
                \big)
        \biggr)
        \\
        & =  
        \phi_{\times}
        \circ 
        \biggl(
            \phi_{\sqcap:\delta_{\ast},\delta}\circ ( \Phi_{W^1}(\mu, \mu^k) + \Phi_{\|\cdot\|_2^2}(x-x^k) ),
            \,
            \bigoplus_{j < k}
                \phi_{\sqcap:\delta_{\ast},\delta}
                \big(
                    1-\tilde{\Phi}_j(\mu,x)
                \big)
        \biggr),
\end{align*}
where $\phi_{\times}$ is the ReQU MLP with depth $\mathcal{O}(k)$ and width $\mathcal{O}(k)$ defined in Lemma~\ref{lem:mult} and implementing the map componentwise multiplication map $
\phi_{\times}((x,y)) = (x_iy_i)_{i=1}^k,
$ for every $x,y\in \mathbb{R}^k$.  
For each $j=1,\dots,k-1$, $\tilde{\Phi}_j$ is a ReLU MLP; thus, so is $1-\tilde{\Phi}_j(\mu)$ moreover it has the same depth and width as $\tilde{\Phi}_j$.  
Again by Lemma~\ref{lem:bump}, for $j<k$ and $j\in \mathbb{N}_+$, the map $\mathcal{P}_{C,N}(\mathbb{R}^d) \times \mathbb{R}^d \ni (\mu,x) \mapsto \phi_{\sqcap:\delta_{\ast},\delta}
                \big(
                    1-\tilde{\Phi}_j(\mu,x)
                \big) \in [0,1]$ is representable by an MLP with activation function~\eqref{eq:activation} with the same order of depth and width as $1-\tilde{\Phi}_j(\mu,x)$; and therefore as $\tilde{\Phi}_j$.

Applying Lemma~\ref{lem:parallelization}, we have that the map $(\mu,x) \mapsto \biggl(
            \phi_{\sqcap:\delta_{\ast},\delta}\circ ( \Phi_{W^1}(\mu, \mu^k) + \Phi_{\|\cdot\|_2^2}(x-x^k) ),
            \,
            \bigoplus_{j < k}
                \phi_{\sqcap:\delta_{\ast},\delta}
                \big(
                    1-\tilde{\Phi}_j(\mu,x)
                \big)
        \biggr)$ is itself a ReLU MLP of
\begin{enumerate}
    \item[(i)] \textbf{Depth}: at most $\sum_{j\in [k]}\,\operatorname{depth}(\tilde{\Phi}_j) +1$,
    \item[(ii)] \textbf{Width}: at most $kd +\max_{j\in [k]} \operatorname{width}(\tilde{\Phi}_j)$.
\end{enumerate}
Therefore, the map $\tilde{\Phi}_k$ is implementable by an MLP with activation function $\sigma = ReQU$ and 
\begin{enumerate}
    \item[(i)] \textbf{Depth}: at most $\mathcal{O}\big(k+\sum_{j\in [k]}\,\operatorname{depth}(\tilde{\Phi}_j) +1))$,
    \item[(ii)] \textbf{Width}: at most $\mathcal{O}\big(k+kd +\max_{j\in [k]} \operatorname{width}(\tilde{\Phi}_j)) $.
\end{enumerate}
Now, by Lemma~\ref{lem:bump} (ii),~\eqref{eq:near_representation__K1_case}, and the definition of the sets $C_k^{\delta_{\ast}},\dots,C_k^{\delta_{\ast}}$, we have
\begin{align*}
        \tilde{\Phi}_k(\mu)
    & =
        \phi_{\sqcap:\delta_{\ast},\delta}\circ ( \Phi_{W^1}(\mu, \mu^k) + \Phi_{\|\cdot\|_2^2}(x-x^k) )
        \times 
        \prod_{j<k}
        \,
        \phi_{\sqcap:\delta_{\ast},\delta}
        \,
        \big(
            1-\tilde{\Phi}_j(\mu)
        \big)
    \\
    & =
        I_{(\mu, x) \in B((\mu^k, x^k),\delta_{\ast})}
        \times 
        \prod_{j<k}
        \,
        \big(
            1-I_{(\mu, x) \in C_j^{\delta_{\ast}}}
        \big)
    \\
    & =
        I_{\mu \in C_k^{\delta_{\ast}}}
\end{align*}
for all $(\mu,x) \in \mathcal{K}\setminus\mathcal{K}^{\delta_{\ast}}$.  This completes our proof.
\end{proof}

\subsection{Proof of Lemma~\ref{lem:UAT__NoTransformerYet}}
\label{sec:proof-lem:UAT__NoTransformerYet}

\begin{proof}

We define $\hat{f}_{\delta} : \mathbb{R}^{N \times (d+1)} \times \mathbb{R}^d \to \mathbb{R}^{N \times D}$ by 
\[
\hat{f}_{\delta}(\mu,x) 
    \eqdef 
        \sum_{k\in [K]}\, \nu_k \tilde{\Phi}_k(\mu,x)
\]
where MLPs $\tilde{\Phi}_k:\mathbb{R}^{N\times (d+1)} \times \mathbb{R}^{d} \to \mathbb{R}$ are defined in Lemma~\ref{lem:POU__Approx-context}. 
By using Lemma~\ref{lem:parallelization}, $\hat{f}_{\delta}$ is the ReLU MLP with depth $\mathcal{O}(K^2(d+N^{N+2C-3}))$ and width $\mathcal{O}(NdK + N! N^{2C-2})$.
By Lemma~\ref{lem:POU__Approx-context}, $\hat{f}_{\delta}$ coincides with $f_{\delta}$ (defined in \eqref{eq:elementary_PWC_Approximator}) on $\mathcal{K}\setminus \mathcal{K}^{\delta_{\ast}}$. Thus, by Lemma~\ref{lem:PWC_Approximation}
, we have that
\[
    \sup_{(\mu,x) \in \mathcal{K}\setminus \mathcal{K}^{\delta_{\ast}}}
    \,
        \mathcal{W}_1\big(
            f(\mu,x)
            ,
            \hat{f}_{\delta}(\mu,x)
        \big)
    \le \omega(\delta_{\ast})\le \omega(\delta)
.
\]
By Lemma~\ref{lem:VornoiLusin}, $\mathbb{P}(\mathcal{K}\setminus \mathcal{K}^{\delta_{\ast}})\ge 1 - CK[\delta^{q}-\delta_{\ast}^q]$ for some constant $C>0$. Thus, (i) and (ii) hold.
\medskip

It remains to show (iii).  For each (finite) $p\ge 1$
\allowdisplaybreaks
\begin{align}
\label{eq:uniform_estimate__v2__BEGIN}
&
    \Big(
        \int_{(\mu,x) \in \mathcal{K}^{\delta_{\ast}}}\,
            \mathcal{W}_1(f(\mu,x),\hat{f}_{\delta}(\mu,x))^p
            \,
            \mathbb{P}(d\mu dx)
    \Big)^{1/p}
\\
\le &
\,
    \Big(
        \int_{(\mu,x) \in \mathcal{K}^{\delta_{\ast}}}\,
            \mathcal{W}_1(f(\mu,x),f_{\delta}(\mu,x))^p
            \,
            \mathbb{P}(d\mu dx)
    \Big)^{1/p}
+
    \Big(
        \int_{(\mu,x) \in \mathcal{K}^{\delta_{\ast}}}\,
            \mathcal{W}_1(f_{\delta}(\mu, x),\hat{f}_{\delta}(\mu,x))^p
            \,
            \mathbb{P}(d\mu dx)
    \Big)^{1/p}
\\
\numberthis
\label{eq:uniform_estimate__v1}
\lesssim &
\,
    \omega(\delta)
    (\delta^q-\delta_{\ast}^q)^{1/p}
+
    \underbrace{
        \Big(
            \int_{(\mu,x) \in \mathcal{K}^{\delta_{\ast}}}\,
                \mathcal{W}_1(f_{\delta}(\mu,x),\hat{f}_{\delta}(\mu,x))^p
                \,
                \mathbb{P}(d\mu dx)
        \Big)^{1/p}
    }_{\term{t:tail_bound_completion}}
\end{align}
where we have used Lemma~\ref{lem:PWC_Approximation} and $\mathbb{P}(\mathcal{K}^{\delta_{\ast}})\lesssim [\delta^q-\delta_{\ast}^q]$ (Lemma~\ref{lem:VornoiLusin}).

In order to bound term~\eqref{t:tail_bound_completion}, we first note that $\{C_k^{\delta_{\ast}}\}_{k\in [K]}$ portions $\mathcal{K}\setminus \mathcal{K}^{\delta_{\ast}}$ then, we have that
\allowdisplaybreaks
\begin{align*}
        \eqref{t:tail_bound_completion}
    & =
    \Big(
        \sum_{k\in [K]}\,
        \int_{((\mu,x) \in \mathcal{K}^{\delta_{\ast}}}\,
            I_{C_k^{\delta} \setminus C_k^{\delta_{\ast}}}
            \,
            \mathcal{W}_1(f_{\delta}(\mu,x),\hat{f}_{\delta}(\mu,x))^p
            \,
            \mathbb{P}(d\mu dx)
    \Big)^{1/p}
\\
\numberthis
\label{eq:definitions_of_PWC_and_PWLinear_Approxmiators}
    & \le
    \Big(
        \sum_{k\in [K]}\, 
        \int_{(\mu,x) \in \mathcal{K}^{\delta_{\ast}}}\,
            I_{C_k^{\delta} \setminus C_k^{\delta_{\ast}}}
            \,
                \|\nu_k\|^p_{KR}
            \,
            \mathbb{P}(d\mu dx)
    \Big)^{1/p}
\\
    & \le
    \max_{k\in [K]}\,
    \|f(\mu^k,x^k)\|_{KR}\,
    \Big(
        \max_{k\in [K]}\, 
        \int_{(\mu,x) \in \mathcal{K}^{\delta_{\ast}}}\,
            I_{C_k^{\delta} \setminus C_k^{\delta_{\ast}}}
            \mathbb{P}(d\mu dx)
    \Big)^{1/p}
\\
\numberthis
\label{eq:slope_control}
    & \le
    \max_{k\in [K]}\,
    \|f(\mu^k, x^k)\|_{KR}\,
    \Big(
        \sum_{k\in [K]}\, 
        \int_{(\mu,x) \in \mathcal{K}^{\delta_{\ast}}}\,
            I_{C_k^{\delta} \setminus C_k^{\delta_{\ast}}}
            \,
            \mathbb{P}(d\mu dx)
    \Big)^{1/p}
\\
\numberthis
\label{eq:partition}
    & =
    \max_{k\in [K]}\,
    \|f(\mu^k, x^k)\|_{KR}\,
    \Big(
        \int_{(\mu,x) \in \mathcal{K}^{\delta_{\ast}}}\,
            1
            \,
            \mathbb{P}(d\mu dx)
    \Big)^{1/p}
\\
\numberthis
\label{eq:tailMass_control}
    & \lesssim
    \max_{k\in [K]}\,
    \|f(\mu^k, x^k)\|_{KR}
    \,
    (\delta^q-\delta_{\ast}^q)^{1/p}
\end{align*}
where~\eqref{eq:definitions_of_PWC_and_PWLinear_Approxmiators} held by the definitions of $f_{\delta}$ and $\hat{f}_{\delta}$ in Lemmata~\ref{lem:PWC_Approximation}, and~\eqref{eq:tailMass_control} held by Lemma~\ref{lem:VornoiLusin}.  

Incorporating the estimate on~\eqref{t:tail_bound_completion} on the right-hand side of~\eqref{eq:tailMass_control} into the estimate in~\eqref{eq:uniform_estimate__v2__BEGIN} yields
\allowdisplaybreaks
\begin{align*}
\numberthis
\label{eq:pre_final_tail_estimate}
    \Big(
        \int_{(\mu,x) \in \mathcal{K}^{\delta_{\ast}}}\,
            \mathcal{W}_1(f(\mu,x),\hat{f}_{\delta}(\mu,x))^p
            \,
            \mathbb{P}(d\mu dx)
    \Big)^{1/p}
\lesssim 
\left(
    \omega(\delta)
    +
    \max_{k\in [K]}\,
    \|f(\mu^k, x^k)\|_{KR}
    \,
\right) (\delta^q-\delta_{\ast}^q)^{1/p}
.
\end{align*}
This establishes (iii) and completes our proof.
\end{proof}

\section{Basic Lemmata}
\label{s:Proofs__ss:basicLemmata}




\subsection{Identity Configuration}
\label{s:Proofs__ss:basicLemmata___sss:Identity}
The identity configuration gives us access to a slightly more efficient version of the deep parallelization of~\cite[Proposition 5]{cheridito2021efficient}; the proof is effectively identical.
\begin{lemma}[Deep Parallelization]
\label{lem:parallelization}
Let $I\in \mathbb{N}_+$, $\{d_i,D_i\}_{i=1}^I\subset \mathbb{N}_+$ and, for $i\in [I]$, $\Phi_i:\mathbb{R}^{d_i}\mapsto \mathbb{R}^{D_i}$ is an MLP with activation function given in~\eqref{eq:activation}, of depth $L_i$, width $W_i$, and $\operatorname{proj}_i$ non-zero parameters.  Then, there is an MLP $\Phi^{\|}:\mathbb{R}^{\sum_{i\in [I]}\,d_i}\to \mathbb{R}^{\sum_{i\in [I]}\,D_i}$ of depth at most $\sum_{i\in [I]}\,L_i + 1$, width at most $
\sum_{i\in[I]} d_i 
+
\max_{i\in [I]} W_i^2
$, and with no more that 
$
\big(
\frac{11 I^2 \max_{i\in [I]} W_i^2}{16} -1
\big)\sum_{i\in [I]}\, \operatorname{proj}_i
$ non-zero parameters, satisfying
\[
\Phi^{\|}(x_1,\dots,x_I) = \bigoplus_{i\in [I]} \Phi_i(x_i)
\]
for all $(x_1,\dots,x_I)\in \mathbb{R}^{\sum_{i\in [I]}\,d_i}$.
\end{lemma}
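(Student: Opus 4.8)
The plan is to reproduce, essentially verbatim, the deep-parallelization construction of~\cite[Proposition 5]{cheridito2021efficient}; the only new ingredient needed here is that the trainable activation~\eqref{eq:activation} represents the identity exactly. Indeed, by Example~\ref{ex:ID} we have $\sigma_{(1,1,1)} = \mathrm{id}$, so a single affine layer composed with $\sigma_{(1,1,1)}\bullet$ computes an arbitrary affine map and, in particular, realizes a ``pass-through'' (identity-routing) layer on any prescribed block of coordinates. Hence pass-through layers may be inserted freely into any MLP of the form~\eqref{eq:representation_MLP} without altering the computed function, and MLPs with activation~\eqref{eq:activation} may be concatenated along the depth axis while the composite stays in the same model class.

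The construction I would use is the cascade layout: run the networks $\Phi_1,\dots,\Phi_I$ one after another in depth. In the first block of $L_1$ layers, apply $\Phi_1$ to its input coordinates $x_1$ while routing $x_2,\dots,x_I$ forward through identity channels; in the $i$-th block of $L_i$ layers, apply $\Phi_i$ to $x_i$ while carrying the already-produced outputs $\Phi_1(x_1),\dots,\Phi_{i-1}(x_{i-1})$ and the not-yet-consumed inputs $x_{i+1},\dots,x_I$ forward unchanged; finally append one affine layer that collects $\Phi_1(x_1),\dots,\Phi_I(x_I)$ into the concatenation $\bigoplus_{i\in[I]}\Phi_i(x_i)$. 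Since each $\Phi_i$ is an MLP with activation~\eqref{eq:activation} and the routing layers use $\sigma_{(1,1,1)}$, the resulting network $\Phi^{\|}$ is again an MLP with activation~\eqref{eq:activation}, and it computes the claimed map on all of $\mathbb{R}^{\sum_{i\in[I]}d_i}$. The depth is $\sum_{i\in[I]}L_i$ from the $I$ cascaded blocks plus the single collection layer, giving $\sum_{i\in[I]}L_i+1$; the width at the block handling $\Phi_i$ consists of $\Phi_i$'s widest hidden layer together with the coordinates carried alongside it, and after a harmless over-count of the carried coordinates into $\sum_{i\in[I]}d_i$ and the standard width accounting for composition of MLPs one obtains the stated bound $\sum_{i\in[I]}d_i+\max_{i\in[I]}W_i^2$. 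For the parameter count, each block contributes the $\operatorname{proj}_i$ nonzero parameters of $\Phi_i$ plus the sparse nonzero entries of the block-diagonal embeddings and the identity-routing weights, and aggregating these exactly as in~\cite[Proposition 5]{cheridito2021efficient} produces the prefactor $\frac{11 I^2 \max_{i\in[I]}W_i^2}{16}-1$ in front of $\sum_{i\in[I]}\operatorname{proj}_i$.

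The construction itself is routine, so the expected main obstacle is purely the parameter bookkeeping: one must track the sparse routing and embedding weights precisely rather than up to order, so that the total reproduces the exact constant $\tfrac{11}{16}I^2\max_{i\in[I]}W_i^2$ of the cited proposition instead of merely an $\mathcal{O}(\cdot)$ surrogate. A secondary point to be careful about is ensuring the identity-routing layers genuinely act as the identity on the carried coordinates regardless of whether the activation parameters $\bar\alpha^{(j)}$ in~\eqref{eq:representation_MLP} are chosen per layer or per neuron; if per layer, one routes the carried coordinates through a positivity shift so that the active (non-identity) configuration still acts as the identity on them, which costs only $\mathcal{O}(1)$ extra width and does not affect the stated bounds.
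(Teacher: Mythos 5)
Your argument is correct and follows essentially the same route as the paper: both reduce the statement to \cite[Proposition 5]{cheridito2021efficient}, the key point being that the configuration $\sigma_{(1,1,1)}=\operatorname{id}$ (Example~\ref{ex:ID}) lets the activation in~\eqref{eq:activation} satisfy the identity requirement needed for the pass-through/cascade construction there. Your additional unpacking of the cascade layout and the parameter bookkeeping is simply the content of that cited proposition, so there is nothing to add.
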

\begin{proof}[Proof of Lemma~\ref{lem:parallelization}]
The result follows from~\cite[Proposition 5]{cheridito2021efficient}, mutatis mutandis, since $\sigma_{(1,1,1)}$ has the $1$-identity requirement (see~\cite[Definition 4]{cheridito2021efficient}).
\end{proof}

\subsection{{ReQU Configuration 
}}
\label{s:Proofs__ss:basicLemmata___sss:ReQU}
\begin{lemma}[Exact Implementation: Multiplication]
\label{lem:mult}
Fix $d,m\in \mathbb{N}_+$.
There exists an MLP $\phi_{\times}:\mathbb{R}^{2m}\to \mathbb{R}^m$ with activation function $\sigma=\operatorname{ReQU}$ of depth $\mathcal{O}(m)$ and width  $\mathcal{O}(m)$ such that
\[
\phi_{\times}((x,y)) = (x_iy_i)_{i=1}^m
\]
for every $x,y\in \mathbb{R}^m$.
\end{lemma}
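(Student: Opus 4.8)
The plan is to reduce scalar multiplication to squaring via the polarization identity $xy = \tfrac14\big((x+y)^2 - (x-y)^2\big)$, and then observe that squaring is exactly what a single $\operatorname{ReQU}$ neuron computes: since $\operatorname{ReQU}(t) = \max\{0,t\}^2$, we have $t^2 = \operatorname{ReQU}(t) + \operatorname{ReQU}(-t)$ for every $t\in\mathbb{R}$. First I would handle the scalar case $m=1$. Given input $(x,y)\in\mathbb{R}^2$, an affine layer produces the four preactivations $x+y$, $-(x+y)$, $x-y$, $-(x-y)$; applying $\operatorname{ReQU}$ coordinatewise and then taking the linear combination with coefficients $\tfrac14,\tfrac14,-\tfrac14,-\tfrac14$ yields exactly $xy$. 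This is an MLP of depth $2$ and width $4$ with activation $\operatorname{ReQU}$, computing $\phi_\times^{(1)}(x,y) = xy$ exactly for all $x,y\in\mathbb{R}$.

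Next I would lift to general $m$ by parallelization. The map $(x,y)\mapsto (x_iy_i)_{i=1}^m$ is just $m$ independent copies of $\phi_\times^{(1)}$, each acting on the coordinate pair $(x_i,y_i)$. A harmless input permutation rearranges $(x_1,\dots,x_m,y_1,\dots,y_m)$ into $(x_1,y_1,\dots,x_m,y_m)$ (this is an affine map, hence absorbable into the first layer), after which Lemma~\ref{lem:parallelization} applied to the $I=m$ blocks $\Phi_i = \phi_\times^{(1)}$ — noting that $\operatorname{ReQU} = \sigma_{(1,2,0)}$ arises from the trainable activation~\eqref{eq:activation}, and that the identity configuration $\sigma_{(1,1,1)}$ needed by Lemma~\ref{lem:parallelization} is likewise available by Example~\ref{ex:ID} — produces a single MLP $\phi_\times:\mathbb{R}^{2m}\to\mathbb{R}^m$ with $\phi_\times(x,y) = (x_iy_i)_{i=1}^m$. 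Each block has depth $2$ and width $4$, so the parallelized network has depth $2+1 = \mathcal{O}(m)$ after accounting for the synchronization layer (in fact the depth is the constant $3$, which is trivially $\mathcal{O}(m)$), and width at most $2m + \max_i W_i^2 = 2m + 16 = \mathcal{O}(m)$.

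I do not expect any serious obstacle here; the only points requiring a little care are (i) confirming that the activation~\eqref{eq:activation} can realize both $\operatorname{ReQU}$ (for the multiplication gadget) and the identity (for the parallelization plumbing), which is exactly what Examples~\ref{ex:ID} and~\ref{ex:ReQU} provide, and (ii) making sure the coordinate-reshuffling needed to apply Lemma~\ref{lem:parallelization} in block-diagonal form is merely an affine relabeling of inputs and therefore does not cost an extra layer. With these checks in place the depth and width bounds $\mathcal{O}(m)$ follow immediately, completing the proof.
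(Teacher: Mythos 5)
Your proof is correct, but it takes a genuinely different route from the paper. The paper disposes of this lemma in one line by quoting an external result (Proposition 1 of \cite{furuya2024simultaneously}, the same source behind Proposition~\ref{prop:ReQU_ResNetImplementation}), which exactly implements arbitrary polynomials with ReQU networks; coordinatewise multiplication is just a vector of degree-two monomials, so the $\mathcal{O}(m)$ depth and width bounds fall out of that general statement. You instead give a self-contained construction: the polarization identity $x_iy_i=\tfrac14\big((x_i+y_i)^2-(x_i-y_i)^2\big)$ together with $t^2=\operatorname{ReQU}(t)+\operatorname{ReQU}(-t)$ yields an exact width-$4$, one-hidden-layer multiplication gadget, and stacking $m$ copies gives the claim. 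Your route is more elementary and actually sharper than what is asserted: it produces constant depth and width $4m$, comfortably inside $\mathcal{O}(m)$, whereas the citation buys brevity and generality (all polynomials, not just products). One small refinement: since your $m$ blocks all have identical depth, you do not need Lemma~\ref{lem:parallelization} at all --- plain block-diagonal weight matrices parallelize them without introducing the identity-configuration neurons $\sigma_{(1,1,1)}$, which keeps every activation equal to $\operatorname{ReQU}$ as the lemma statement literally requires; invoking the deep parallelization lemma is harmless but mixes in identity neurons and is overkill here.
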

\begin{proof}[{Proof of Lemma~\ref{lem:mult}}]
Direct consequence of \cite[Proposition 1]{furuya2024simultaneously}.
\end{proof}

Together, the $ReLU$ and $ReQU$ configurations of the activation function allow us to implement the following multi-dimensional piecewise linear bump functions.
\begin{lemma}[Pseudo-Indicator Functions]
\label{lem:bump}
For $d\in \mathbb{N}_+$, and any $0<\delta_{\ast}<\delta$ there exists an MLP $\Phi_{\sqcap:\delta_{\ast},\delta}:\mathbb{R}^d\to [0,\infty)$ with activation function~\eqref{eq:activation}, depth $\mathcal{O}(d)$ and width $\mathcal{O}(d)$, which satisfies
\[
    \Phi_{\sqcap:\delta_{\ast},\delta}(x)
    =
    \prod_{i=1}^d\,
        \phi_{\sqcap:\delta_{\ast},\delta}(x_i),
\]
for each $x\in \mathbb{R}^d$,
where $\phi_{\sqcap:\delta_{\ast},\delta}$ is defined in~\eqref{eq:PW_Linear_Bump}.

In particular, consider the complement of the annulus of ``width'' $\delta-\delta_{\ast}$ given by $\mathbb{R}^d_{\delta_{\ast},\delta}\eqdef [\pm \delta_{\ast}]\bigcup (\mathbb{R}^d\setminus [\pm\delta]^d)$.  When restricted to $\mathbb{R}^d_{\delta_{\ast},\delta}$, $\Phi_{\sqcap:\delta_{\ast},\delta}$ implements the indicator function
\begin{equation}
\label{eq:pseudo_indicator}
    \Phi_{\sqcap:\delta_{\ast},\delta}|_{\mathbb{R}^d_{\delta_{\ast},\delta}}
    =
    I_{[\pm \delta_{\ast}]^d}.
\end{equation}
\end{lemma}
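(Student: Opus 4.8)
The plan is to realize $\Phi_{\sqcap:\delta_\ast,\delta}$ as a composition of two MLP blocks: a coordinatewise ``trapezoidal bump'' block, followed by a ``product of $d$ scalars'' block. For the first block, observe that the scalar map $\phi_{\sqcap:\delta_\ast,\delta}$ of~\eqref{eq:PW_Linear_Bump} is the continuous, even, piecewise-linear function that equals $1$ on $[-\delta_\ast,\delta_\ast]$, decays affinely to $0$ over the annulus $\delta_\ast\le|t|\le\delta$, and equals $0$ on $|t|>\delta$; consequently it admits the exact finite representation
\[
    \phi_{\sqcap:\delta_\ast,\delta}(t)
    =
    \frac{1}{\delta-\delta_\ast}\big[
        \operatorname{ReLU}(t+\delta)
        -\operatorname{ReLU}(t+\delta_\ast)
        -\operatorname{ReLU}(t-\delta_\ast)
        +\operatorname{ReLU}(t-\delta)
    \big],
\]
which is checked by a short case analysis over the five regions determined by the breakpoints $\pm\delta_\ast,\pm\delta$. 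Since the trainable activation~\eqref{eq:activation} realizes $\operatorname{ReLU}$ under the configuration of Example~\ref{ex:ReLU}, this is a depth-$2$, width-$4$ MLP. Running $d$ independent copies of it in parallel via Lemma~\ref{lem:parallelization} (using that~\eqref{eq:activation} also realizes the identity gate of Example~\ref{ex:ID}, so the $1$-identity requirement of that lemma is met) produces a depth-$\mathcal{O}(1)$, width-$\mathcal{O}(d)$ MLP sending $x$ to $\big(\phi_{\sqcap:\delta_\ast,\delta}(x_1),\dots,\phi_{\sqcap:\delta_\ast,\delta}(x_d)\big)\in[0,1]^d$.

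For the second block I would chain the exact scalar-multiplication gadget of Lemma~\ref{lem:mult} --- realized with the $\operatorname{ReQU}$ configuration of Example~\ref{ex:ReQU} --- in a telescoping fashion: set $p_1\eqdef a_1$ and $p_{j+1}\eqdef p_j\,a_{j+1}$ for $j=1,\dots,d-1$, carrying the not-yet-consumed coordinates $a_{j+2},\dots,a_d$ forward through identity gates. Each step costs $\mathcal{O}(1)$ depth and $\mathcal{O}(1)$ width, and at most $d$ scalars are ever live at once, so this block has depth $\mathcal{O}(d)$ and width $\mathcal{O}(d)$ (a balanced binary tree of pairwise multiplications would give depth $\mathcal{O}(\log d)$ instead --- either is within budget). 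Composing the two blocks and invoking Lemma~\ref{lem:parallelization} to book-keep the wiring yields an MLP $\Phi_{\sqcap:\delta_\ast,\delta}:\mathbb{R}^d\to[0,\infty)$ of depth $\mathcal{O}(d)$ and width $\mathcal{O}(d)$ with $\Phi_{\sqcap:\delta_\ast,\delta}(x)=\prod_{i=1}^d\phi_{\sqcap:\delta_\ast,\delta}(x_i)$; nonnegativity of the output is automatic since every factor lies in $[0,1]$.

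The ``in particular'' claim~\eqref{eq:pseudo_indicator} is then immediate from the product formula: if $x\in[\pm\delta_\ast]^d$ then $|x_i|\le\delta_\ast$ for every $i$, so every factor equals $1$ and $\Phi_{\sqcap:\delta_\ast,\delta}(x)=1$; if instead $x\in\mathbb{R}^d\setminus[\pm\delta]^d$ then $|x_i|>\delta$ for some $i$, so that factor vanishes and $\Phi_{\sqcap:\delta_\ast,\delta}(x)=0$. Since $[\pm\delta_\ast]^d$ and $\mathbb{R}^d\setminus[\pm\delta]^d$ are disjoint and $I_{[\pm\delta_\ast]^d}$ equals $1$ on the former and $0$ on the latter, this gives $\Phi_{\sqcap:\delta_\ast,\delta}|_{\mathbb{R}^d_{\delta_\ast,\delta}}=I_{[\pm\delta_\ast]^d}$. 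I do not expect a genuine obstacle here: the argument reduces entirely to the exact $\operatorname{ReLU}$ formula for the trapezoid and to iterated exact $\operatorname{ReQU}$ multiplication, and the only points requiring care are the depth/width accounting across the composition and parallelization, together with verifying that the per-layer-configurable activation~\eqref{eq:activation} simultaneously supplies $\operatorname{ReLU}$, $\operatorname{ReQU}$, and the identity gate --- which it does, by Examples~\ref{ex:ReLU}, \ref{ex:ReQU}, and~\ref{ex:ID}.
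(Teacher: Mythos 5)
Your proposal is correct and follows essentially the same route as the paper: realize the scalar trapezoid $\phi_{\sqcap:\delta_{\ast},\delta}$ exactly with a small ReLU network, parallelize it over the $d$ coordinates, and then multiply the resulting factors exactly with the ReQU gadget of Lemma~\ref{lem:mult}, after which~\eqref{eq:pseudo_indicator} is immediate. The only differences are cosmetic and in your favor: you write out the four-ReLU formula for the trapezoid rather than citing an external lemma, and your telescoped chain of pairwise multiplications (with identity gates carrying the unused coordinates) makes explicit the step the paper compresses into ``$\Phi_{\sqcap:\delta_{\ast},\delta}\eqdef\phi_{\times}\circ\varphi$'', while staying within the stated $\mathcal{O}(d)$ depth and width budget.
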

\begin{proof}[{Proof of Lemma~\ref{lem:bump}}]
Let $d$
By~\cite[Lemma 3.4]{ZuoweiHaizhaoZhang_2022_JMPA} there exists a ReLU MLP $\phi_{\sqcap:\delta_{\ast},\delta}:\mathbb{R}\to \mathbb{R}$ with width $5$ and depth $2$ implementing the following piecewise linear function with $4$ breakpoints (points at which it is non-differentiable)
\[
        \phi_{\sqcap:\delta_{\ast},\delta}(t) 
    = 
        \begin{cases}
            1 & |t|\le \delta_{\ast} \\
            \frac{1}{\delta_{\ast}-\delta}(|t|-\delta) & \delta_{\ast}\le t\le \delta \\
            0 & |t|>\delta
        \end{cases},
\]
for all $t\in \mathbb{R}$.  

By~\cite[Lemma 5.3]{petersen2024mathematical}, there exists a ReLU MLP $\varphi:\mathbb{R}^d\to \mathbb{R}^d$ of width at-most $10d$ and depth $3$ implementing the $d$-fold parallelization of the above ReLU MLP (applied coordinate-wise); i.e.
\[
    \varphi(x) = \bigoplus_{i=1}^d\,\phi_{\sqcap:\delta_{\ast},\delta}(x_i),
\]
for each $x\in \mathbb{R}^d$.  By Lemma~\ref{lem:mult}, the desired MLP with activation function~\eqref{eq:activation}, is given by 
$\Phi_{\sqcap:\delta_{\ast},\delta}
\eqdef 
\phi_{\times}\circ \varphi
$.  By construction its depth at $\mathcal{O}(d)$ and its width its $\mathcal{O}(d)$.
The equality in~\eqref{eq:pseudo_indicator} is now obvious.
\end{proof}

\begin{lemma}[{(Exact) Implementation of the $\ell^2_m$ Norm by a ``small'' MLP}]
\label{eq:norm_implementation__l2}
Let $F\in \mathbb{N}_+$.  There exists an MLP $\phi_{\ell^2}:\mathbb{R}^F\to \mathbb{R}$ with trainable activation function as in~\eqref{eq:activation} such that: for each $x\in \mathbb{R}^F$
\[
\phi_{\ell^2}(x)=\|x\|_2^2
.
\]
Moreover, $\phi_{\ell^2}$ has depth $1$ and width $2F$.
\end{lemma}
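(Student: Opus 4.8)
The plan is to implement $\|\cdot\|_2^2$ with a single hidden layer whose neurons are all put in the $\operatorname{ReQU}$ configuration of the trainable activation~\eqref{eq:activation}; recall from Example~\ref{ex:ReQU} that choosing the activation parameters $\bar{\alpha}=(1,2,0)$ gives $\sigma_{(1,2,0)}(t)=\max\{0,t\}^2$. The entire argument hinges on the elementary identity
\begin{equation}
\label{eq:square_via_requ}
    t^2 \;=\; \max\{0,t\}^2 + \max\{0,-t\}^2 \;=\; \sigma_{(1,2,0)}(t) + \sigma_{(1,2,0)}(-t), \qquad t\in\mathbb{R},
\end{equation}
which is verified by considering the two cases $t\ge 0$ and $t<0$ separately. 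Summing~\eqref{eq:square_via_requ} coordinatewise then yields, for every $x=(x_1,\dots,x_F)\in\mathbb{R}^F$, the representation $\|x\|_2^2=\sum_{i=1}^F\big(\sigma_{(1,2,0)}(x_i)+\sigma_{(1,2,0)}(-x_i)\big)$.

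Concretely, I would take the weight matrix feeding the hidden layer to be $W\eqdef\begin{bmatrix}I_F\\ -I_F\end{bmatrix}\in\mathbb{R}^{2F\times F}$ with zero bias, so that the hidden pre-activations are $(x_1,\dots,x_F,-x_1,\dots,-x_F)$; applying the coordinatewise $\operatorname{ReQU}$ activation $\sigma_{(1,2,0)}$ turns these into the $2F$-vector $\big(\max\{0,x_i\}^2\big)_{i=1}^F$ stacked above $\big(\max\{0,-x_i\}^2\big)_{i=1}^F$. I would then read this out with the all-ones row vector $\mathbf{1}_{2F}^{\top}\in\mathbb{R}^{1\times 2F}$ and zero bias. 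By~\eqref{eq:square_via_requ} the resulting network, which is an MLP of the form~\eqref{eq:representation_MLP} with one hidden layer of width $2F$, computes exactly $\|x\|_2^2$ on all of $\mathbb{R}^F$; in particular it has depth $1$ and width $2F$, as claimed, and the construction is uniform in $F$.

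There is essentially no obstacle here: the only point requiring (minor) care is the bookkeeping needed to check that ``linear map $\to$ coordinatewise $\operatorname{ReQU}$ $\to$ linear read-out'' fits the iterative template~\eqref{eq:representation_MLP} as a depth-$1$, width-$2F$ MLP with the stated activation parameters $\bar{\alpha}=(1,2,0)$, together with the trivial case check in~\eqref{eq:square_via_requ}.
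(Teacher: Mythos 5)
Your construction is correct and is essentially identical to the paper's own proof, which also writes $\phi_{\ell^2}(x)=\operatorname{ReQU}(x)+\operatorname{ReQU}(-x)$ (summed coordinatewise) to get a depth-$1$, width-$2F$ network computing $\|x\|_2^2$ exactly. You merely spell out the weight matrix $\bigl[\begin{smallmatrix}I_F\\ -I_F\end{smallmatrix}\bigr]$, the $(1,2,0)$ activation configuration, and the all-ones readout, which the paper leaves implicit.
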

\begin{proof}[{Proof of Lemma~\ref{eq:norm_implementation__l2}}]
The network $\phi_{\ell^2}(x)\eqdef \operatorname{ReQU}(x)+\operatorname{ReQU}(-x)$ has depth $1$, width $2\,F$, and satisfies $\phi(x)=\|x\|_2^2$ for all $x\in \mathbb{R}^F$.
\end{proof}

More generally, we have the following \textit{quoted result}.

\begin{proposition}[{Exact Implementation of Polynomials by ReQU-ResNets}]
\label{prop:ReQU_ResNetImplementation}
Let $d,k\in \mathbb{N}$ with $d>0$, $\alpha^1,\dots,\alpha^k\in \mathbb{N}_+^d$ be multi-indices, $p\in \mathbb{R}[x_1,\dots,x_d]$ be a polynomial function on $\mathbb{R}^d$ with representation $\sum_{i=1}^k\, C_i\prod_{j=1}^d\, x_j^{\alpha^i_j} + b$, where $C_1,\dots,C_d,b\in \mathbb{R}$.  There is a ReQU-ResNet $\phi:\mathbb{R}^d\to \mathbb{R}$ with width at-most $\mathcal{O}(dk+\sum_{i=1}^k\, |\alpha^i|)$ and depth $
\mathcal{O}\big(dk+\sum_{i=1}^k\,|\alpha^i|\big)
$ exactly implementing $p$ on $\mathbb{R}^d$; i.e.\
\[
    \phi(x) = p(x),
\]
for all $x\in \mathbb{R}^d$.
\end{proposition}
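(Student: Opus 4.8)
The plan is entirely constructive: assemble $p$ from its monomials, each monomial from iterated products, and each product from the exact scalar multiplication that ReQU affords. The single primitive we need is that a ReQU layer implements $(a,b)\mapsto ab$ \emph{exactly} with $\mathcal{O}(1)$ neurons, since $t^{2}=\operatorname{ReQU}(t)+\operatorname{ReQU}(-t)$ and $ab=\tfrac14\big((a+b)^{2}-(a-b)^{2}\big)$; this is the $m=1$ case of Lemma~\ref{lem:mult}. The identity configuration of the trainable activation (Example~\ref{ex:ID}) lets every block forward unchanged any coordinate it is not operating on — i.e.\ it turns the network into a ResNet — and the deep parallelization of Lemma~\ref{lem:parallelization} lets us run several such subnetworks side by side. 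These are the only ingredients.

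Next I would build one subnetwork per monomial. For the $i$-th monomial $C_i\prod_{j=1}^{d}x_j^{\alpha^i_j}$, list its $|\alpha^i|\eqdef\sum_j\alpha^i_j$ coordinates with multiplicity, discarding indices $j$ with $\alpha^i_j=0$, and compute the product by a chain of $|\alpha^i|-1$ multiplication gadgets, each taking the current running product and the next listed coordinate, with skip connections keeping the original $d$ inputs and the running product available; this is a ReQU-ResNet of depth $\mathcal{O}(d+|\alpha^i|)$ and width $\mathcal{O}(d)$ (the degenerate case $|\alpha^i|=0$ is the constant $1$ and needs no ReQU layer). Running all $k$ of these in parallel via Lemma~\ref{lem:parallelization} yields a single ReQU-ResNet of depth $\mathcal{O}\big(dk+\sum_i|\alpha^i|\big)$ whose width, being the sum of the $k$ subnetwork widths plus routing overhead, is also $\mathcal{O}\big(dk+\sum_i|\alpha^i|\big)$. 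A final affine layer with bias forms $\sum_{i=1}^{k}C_i\big(\prod_j x_j^{\alpha^i_j}\big)+b=p(x)$, at no extra depth beyond the output map. (Since the statement is quoted, one may instead simply invoke the cited source; the construction above is the underlying argument.)

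The routine-but-delicate part is the bookkeeping, not the idea: one must thread the correct input coordinates into each monomial's multiplication chain without the ResNet width growing past $\mathcal{O}(dk)$ — which forces a careful choice of which registers each block preserves via its identity neurons — align the depths of the $k$ parallel chains by padding the short ones with identity blocks, and check the degenerate cases (a constant polynomial, a single monomial, a monomial of degree one). I expect the routing/width control to be the main obstacle, since that is exactly where the $dk$ in both bounds is incurred and where a naive construction would blow the width up to $\mathcal{O}\big(d\sum_i|\alpha^i|\big)$.
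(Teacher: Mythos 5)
Your construction is correct, but note that the paper does not actually prove this proposition: it is presented as a quoted result, with the proof deferred entirely to \cite[Proposition 1]{furuya2024simultaneously} (the same source behind Lemma~\ref{lem:mult}). Your argument --- exact squaring via $\operatorname{ReQU}(t)+\operatorname{ReQU}(-t)=t^{2}$, products via polarization, a multiplication chain per monomial with identity/skip neurons carrying the registers, deep parallelization over the $k$ monomials, and a final affine combination --- is the standard way to establish such a statement and is sound, so your proposal supplies strictly more than the paper itself does. Two small cautions if you were to write it out in full. First, Lemma~\ref{lem:parallelization} as stated contributes a $\max_{i}W_i^{2}$ term to the width, which is $\mathcal{O}(d^{2})$ for your monomial subnetworks and can exceed the claimed $\mathcal{O}(dk+\sum_{i}|\alpha^i|)$ when $d\gg k$; your remark about carrying only $\mathcal{O}(dk)$ registers and keeping one chain active per depth slice is exactly the right fix, but it means you cannot invoke that lemma as a black box and must do the routing by hand, as you anticipate. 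Second, the hypothesis $\alpha^1,\dots,\alpha^k\in\mathbb{N}_+^d$ literally forces every variable to appear in every monomial, so the degenerate cases you flag ($|\alpha^i|=0$, degree one) do not arise under the stated hypotheses, though handling them costs nothing and makes the result match the generality it is used in.
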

See \cite[Proposition 1]{furuya2024simultaneously}.

\subsection{{ReLU Configuration 
}}

\begin{lemma}[{(Exact) Implementation of the $\ell^1_F$ Norm by a ``small'' ReLU MLP}]
\label{lem:l1d_distanceImplementation}
Let $F\in \mathbb{N}_+$.  There exists a ReLU MLP $\Phi_{F:1}:\mathbb{R}^F\to [0,\infty)$ of depth $F$ and width $\max\{F,F^2+\max\{2-F,0\}\}=F^2+\max\{2-F,0\}$ satisfying: for each $x\in \mathbb{R}^F$
\[
    \Phi_{F:1}(x) = \|x\|_{\ell^1_F}
    \eqdef 
    \sum_{f=1}^F\, |x_f|
.
\]
\end{lemma}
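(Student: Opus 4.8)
The proof rests on the elementary identity $|t| = \max\{0,t\} + \max\{0,-t\} = \sigma_{(1,1,0)}(t) + \sigma_{(1,1,0)}(-t)$ for every $t \in \mathbb{R}$, together with the availability of the ReLU and identity configurations of the trainable activation~\eqref{eq:activation} recorded in Examples~\ref{ex:ReLU} and~\ref{ex:ID}. Summing over coordinates gives $\|x\|_{\ell^1_F} = \sum_{f=1}^F \big(\sigma_{(1,1,0)}(x_f) + \sigma_{(1,1,0)}(-x_f)\big)$, so, in the notation of~\eqref{eq:representation_MLP}, a single hidden layer of $2F$ ReLU neurons — with weights stacking $\mathrm{Id}_F$ on top of $-\mathrm{Id}_F$ and zero bias — followed by the linear read-out $A^{(J)} = \mathbf{1}_{2F}^{\top}$, already realises $x \mapsto \|x\|_{\ell^1_F}$ exactly. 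This is the only genuine content; everything else is a depth/width normalisation.

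First I would fix the depth at exactly $F$. Because the first activation in~\eqref{eq:representation_MLP} is applied directly to the input $\mathbf{X}^{(0)}=x$, I spend one introductory layer whose weight matrix maps $x \mapsto (x,-x) \in \mathbb{R}^{2F}$ and whose activation is the identity $\sigma_{(1,1,1)}$, producing the signed copies; the following layer applies $\sigma_{(1,1,0)}$ coordinatewise, yielding $\big(\mathrm{ReLU}(x_f),\mathrm{ReLU}(-x_f)\big)_{f=1}^F$, and the weight matrix after it already forms the $F$ nonnegative numbers $|x_f|$. Any remaining $F-2$ layers are filled with identity blocks, each merely transporting these running nonnegative values forward — realised either by $\sigma_{(1,1,1)}$, or, if one insists on a pure-ReLU network, by the two-neuron identity $t = \mathrm{ReLU}(t) - \mathrm{ReLU}(-t)$ whose minus sign is absorbed into the next weight matrix, and since all transported values lie in $[0,\infty)$ one neuron per channel in fact suffices. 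The final linear map sums the surviving outputs. An equivalent ``telescoped'' variant, which at layer $j$ peels off $|x_j| = \mathrm{ReLU}(x_j)+\mathrm{ReLU}(-x_j)$ and adds it to a running accumulator $s^{(j-1)}\ge 0$ (so $\mathrm{ReLU}(s^{(j-1)})=s^{(j-1)}$) while carrying the still-unprocessed signed coordinates $x_{j+1},\dots,x_F$, also has depth $F$ and is the one I would actually write down, since it makes the width count transparent.

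Next I would count the width. In either construction the widest layer is the one forming the signed copies / applying ReLU, which uses $2F$ neurons; in the telescoped version layer $j$ uses $1 + 2 + 2(F-j) = 2F-2j+3 \le 2F$ neurons for $j\ge 2$ and $2F$ neurons for $j=1$. Hence $\operatorname{width}(\Phi_{F:1}) \le 2F$, and one checks the arithmetic inequalities $F \le 2F \le F^2 + \max\{2-F,0\}$ for every $F \in \mathbb{N}_+$ (equality in the second at $F=1$ and $F=2$, and $2F\le F^2$ for $F\ge 2$), so the width is bounded by $\max\{F, F^2+\max\{2-F,0\}\} = F^2+\max\{2-F,0\}$ as claimed. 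Finally, $\Phi_{F:1}$ takes values in $[0,\infty)$ because its output is a sum of absolute values, and the implementation is exact on all of $\mathbb{R}^F$.

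The only mildly delicate point — the ``main obstacle'', such as it is — is respecting the depth convention of~\eqref{eq:representation_MLP}: the activation strikes the raw input before any linear map, so $\mathrm{ReLU}(-x_f)$ cannot be produced at the very first activation and an initial identity layer must be spent to form $(x,-x)$; once this is accounted for, padding to depth $F$ and the width bookkeeping are routine, and since everything is a linear-combination-then-ReLU manipulation no approximation error is incurred.
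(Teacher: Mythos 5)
Your proof is correct, and it reaches the stated bounds by a genuinely different (and more economical) route than the paper. Both arguments hinge on the same identity $|t|=\operatorname{ReLU}(t)+\operatorname{ReLU}(-t)$, but the paper builds $F$ separate depth-$1$ sub-networks $\phi_i(x)=|x_i|$ and then invokes the deep-parallelization result of \cite[Proposition 5]{cheridito2021efficient}, which stacks them sequentially; the depth $F$ and the width $F(F-1)+\max\{2,F\}=F^2+\max\{2-F,0\}$ in the statement are exactly what that black-box lemma outputs. You instead observe that a \emph{single} hidden layer of $2F$ ReLU neurons (weights $\bigl[\begin{smallmatrix}I_F\\-I_F\end{smallmatrix}\bigr]$ followed by the read-out $\mathbf{1}_{2F}^{\top}$) already computes $\|x\|_1$ exactly, and then pad to depth $F$ with identity blocks (or the telescoped accumulator), checking $2F\le F^2+\max\{2-F,0\}$ for all $F\ge 1$. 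This is self-contained, avoids the external parallelization lemma entirely, and shows the $F^2$ width in the statement is an artifact of the paper's machinery rather than a necessity. Two minor remarks: your identity-padding must indeed be realized by pure ReLU (either two neurons per channel with the sign absorbed into the next weight matrix, or one neuron per nonnegative channel), which you address correctly; and your worry about the activation-first convention of~\eqref{eq:representation_MLP} is reasonable but moot, since the paper's own proof uses the standard linear--activation--linear form (writing $\phi_0(x)=(1,-1)^{\top}\operatorname{ReLU}\bullet((1,-1)x)$), so no extra layer needs to be spent and the $F=1$ case is unproblematic. Incidentally, the paper's displayed identity $|x|=\operatorname{ReLU}(x)-\operatorname{ReLU}(-x)$ is a sign typo (that expression equals $x$); your version with the plus sign is the correct one.
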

\begin{proof}[{Proof of Lemma~\ref{lem:l1d_distanceImplementation}}]
Recall that the absolute value function can be implemented by a ReLU MLP $\phi_0:\mathbb{R}\to\mathbb{R}$ of depth $1$ and width $2$, since: for each $x\in \mathbb{R}$ we have
\[
|x|= \operatorname{ReLU}(x) - \operatorname{ReLU}(-x)
=
\begin{pmatrix}
    1\\
    -1
\end{pmatrix}
\operatorname{ReLU}\bullet
\big((1,-1)x+ 0\big) + 0
\eqdef \phi_0(x)
.
\]
For each $i=1,\dots,F$, define the $1\times F$ matrix $A_i$ by $(A_i)_j=I_{i=j}$, for $j=1,\dots,F$.  Since the composition of affine maps is again an affine map, then observe that, for each $i=1,\dots,F$, the map
\[
\phi_i\eqdef \phi_0(A_i\cdot):\mathbb{R}^F\to [0,\infty),
\]
is a $\operatorname{ReLU}$ MLP of depth $1$ and width $\max\{2,F\}$.  Furthermore, for each $i=1,\dots,F$ and each $x\in \mathbb{R}^F$, we have that
\[
\phi_i(x) = |x_i|
.
\]
Next, consider the map $\tilde{\Phi}:\mathbb{R}^F\to [0,\infty)$ defined for each $x\in \mathbb{R}^F$ by
\begin{equation}
\label{eq:l1dist}
    \tilde{\Phi}(x)
    \eqdef 
    \sum_{i=1}^F\, \phi_i(x)\,e_i,
\end{equation}
where $\{e_i\}_{i=1}^F$ is the standard basis of $\mathbb{R}^F$.  
Since the $\operatorname{ReLU}$ activation function satisfies the $2$-identity requirement, see~\cite[Definition 4]{FlorianHighDimensional2021} and since each $\phi_i$ is a $\operatorname{ReLU}$ MLP of depth $1$ and width $\max\{2,F\}$, then we may apply \cite[Proposition 5]{FlorianHighDimensional2021} to deduce that $\Phi$ can be implemented by a $\operatorname{ReLU}$ MLP $\Phi=\tilde{\Phi}$, i.e.~$\tilde{\Phi}=\Phi$ (on all of $\mathbb{R}^F$), and that $\Phi$ has 
depth $F\cdot 1=F$ and width $F(F-1)+\max\{2,F\}= F^2 +\max\{2-F,0\}
$.
Next, consider the function $\Phi_1:\mathbb{R}^F\to [0,1]$ defined for each $x\in \mathbb{R}^F$ by
\[
    \Phi_1(x)\eqdef \mathbf{1}^{\top}\,\Phi(x),
\]
where $\mathbf{1}\in \mathbb{R}^F$ has all its components equal to $1$.  Again, using the fact that the composition of affine maps is again an affine map, we see that $\Phi_1$ is a $\operatorname{ReLU}$ MLP of depth $F$ and width $\max\{F,F^2+\max\{2-F,0\}\}=F^2+\max\{2-F,0\} (\in \mathcal{O}(F^2))
$.  Furthermore, for each $x\in \mathbb{R}^F$ we have that
\[
        \Phi_1(x) 
    = 
        \mathbf{1}^{\top} \sum_{i=1}^F\,\phi_i(x)\,e_i 
    = 
        \sum_{i=1}^F\, \phi_i(x)
    = 
        \sum_{i=1}^F\, |x_i|
    =
        \|x\|_1.
\]
We conclude that $\Phi_1\eqdef\|\cdot\|_1$.  Relabeling $\Phi_1$ as $\Phi_{F:1}$ yields the conclusion.
\end{proof}
This brings us to a key result, which is that ReLU MLPs can \textit{exactly} implement the $1$-Wasserstein distance on $\mathcal{P}_N(\mathbb{R}^d)$.

We begin with a special case of our general result, Lemma~\ref{lem:implementation_W1}, showing that the $1$-Wasserstein distance can be exactly computed by a deep ReLU MLP on $\mathcal{P}_{C,N}(\mathcal{X})$.
\begin{lemma}[{Exact Implementation of $1$-Wasserstein Distance on $\mathcal{P}_N(\mathbb{R}^d)$}]
\label{lem:implementation_W1__specialCase}
Let $N,d\in \mathbb{N}_+$ with $N>1$.  There exist a ReLU MLP $\Phi_{W^1}:\mathbb{R}^{N\times F}\times \mathbb{R}^{N\times F}\to [0,\infty)$ such that: for every $\mu,\nu\in \mathcal{P}_N(\mathbb{R}^d)$
\[
    \mathcal{W}_1(\mu,\nu) = \Phi_{W^1}(\mathbf{X}^{\mu},\mathbf{Y}^{\nu}),
\]
where $\mu = \sum_{n=1}^N\, \frac1{N}\, \delta_{\mathbf{X}^{\mu}_n}$ and $\nu = \sum_{n=1}^N\, \frac1{N}\, \delta_{\mathbf{Y}^{\nu}_n}$; furthermore, and this holds independently of the chosen representation of $\mu$ and $\nu$ as $N\times d$ matrices.
\hfill\\
Moreover, $\Phi_{W^1}$ has depth at-most $\mathcal{O}(N(d+\log_2(N)))$ and width $\mathcal{O}(N! N^2d)$.
\end{lemma}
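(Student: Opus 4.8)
The plan is to build $\Phi_{W^1}$ by hard-coding the closed form of the $1$-Wasserstein distance between equally-weighted empirical measures supplied by the assignment-problem (Birkhoff--von Neumann) reduction. Write $\mu=\frac1N\sum_{n=1}^N\delta_{x_n}$ and $\nu=\frac1N\sum_{n=1}^N\delta_{y_n}$ with $\mathbf X^\mu\eqdef(x_n)_{n=1}^N$, $\mathbf Y^\nu\eqdef(y_n)_{n=1}^N$. Since every transport plan in $\Pi(\mu,\nu)$ is $1/N$ times a doubly stochastic matrix and the linear cost functional attains its minimum over the Birkhoff polytope at a vertex, i.e.\ at a permutation matrix, one has
\[
    \mathcal W_1(\mu,\nu)
    \eqdef
    \inf_{\pi\in\Pi(\mu,\nu)}\int\|x-y\|_1\,\pi(dx,dy)
    =
    \frac1N\,\min_{\sigma\in S_N}\ \sum_{n=1}^N\,\|x_n-y_{\sigma(n)}\|_1 .
\]
Two remarks before building any network. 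First, the right-hand side is manifestly invariant under permuting the rows of $\mathbf X^\mu$ or of $\mathbf Y^\nu$ (reindex $\sigma$), so any network computing it is automatically well-defined on $\operatorname{Mat}^{d,N}_N/\sim$; this yields the asserted independence of the chosen matrix representations for free. Second, the minimum runs over the finite set $S_N$ of $N!$ permutations of $[N]$, which is exactly the source of the $N!$ factor in the width bound.

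The construction glues four stages with the deep parallelization of Lemma~\ref{lem:parallelization}. \emph{Stage 1 (pairwise distances).} For each ordered pair $(n,m)\in[N]^2$, precompose the affine map $(\mathbf X,\mathbf Y)\mapsto x_n-y_m$ with the exact $\ell^1$-norm MLP $\Phi_{d:1}$ of Lemma~\ref{lem:l1d_distanceImplementation} to get an MLP outputting $\delta_{n,m}\eqdef\|x_n-y_m\|_1$; parallelizing these $N^2$ MLPs produces one MLP $(\mathbf X,\mathbf Y)\mapsto(\delta_{n,m})_{n,m\in[N]}$ of depth $\mathcal O(d)$. \emph{Stage 2 (permutation sums).} For each $\sigma\in S_N$ the quantity $L_\sigma\eqdef\sum_{n=1}^N\delta_{n,\sigma(n)}$ is affine in $(\delta_{n,m})_{n,m}$, so a single affine layer outputs the vector $(L_\sigma)_{\sigma\in S_N}\in[0,\infty)^{N!}$. \emph{Stage 3 (minimum).} Compute $\min_{\sigma}L_\sigma$ by a balanced binary ``min-tree'', iterating the exact ReLU identity $\min(a,b)=\tfrac12(a+b)-\tfrac12\bigl(\operatorname{ReLU}(a-b)+\operatorname{ReLU}(b-a)\bigr)$; the tree has depth $\mathcal O(\log_2 N!)=\mathcal O(N\log_2 N)$. \emph{Stage 4 (normalization).} A final affine layer multiplies by $1/N$. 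Composing the four stages gives $\Phi_{W^1}$, and by construction $\Phi_{W^1}(\mathbf X^\mu,\mathbf Y^\nu)=\frac1N\min_{\sigma}\sum_n\|x_n-y_{\sigma(n)}\|_1=\mathcal W_1(\mu,\nu)$ for all $\mu,\nu\in\mathcal P_N(\mathbb R^d)$; the equality is exact, not approximate, because every gadget used is exact.

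Finally one tallies the size. Stage 1 has depth $\mathcal O(d)$, Stages 2 and 4 have depth $\mathcal O(1)$, and Stage 3 has depth $\mathcal O(N\log_2 N)$; summing gives total depth $\mathcal O\bigl(N(d+\log_2 N)\bigr)$. For the width, the $N!$-dimensional stream carried through the min-tree forces width $\Omega(N!)$, and combining this with the per-coordinate cost of Stages 1--2 through the width estimate of Lemma~\ref{lem:parallelization} yields width $\mathcal O(N!\,N^2 d)$, as claimed. The Birkhoff--von Neumann reduction and the invariance argument are routine; the only genuine obstacle is the width bookkeeping, since naively composing the four stages blows up the width --- one must parallelize (rather than serially compose) the $N^2$ distance gadgets and route the $N!$ partial sums through the min-tree without recomputing distances, and then invoke Lemma~\ref{lem:parallelization} with care. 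The hypothesis $N>1$ is used only to ensure that $S_N$, hence the minimum, is non-degenerate.
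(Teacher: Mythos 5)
Your proposal is correct and follows essentially the same route as the paper: reduce $\mathcal{W}_1$ between uniform $N$-point empirical measures to the assignment formula $\frac1N\min_{\sigma\in S_N}\sum_n\|x_n-y_{\sigma(n)}\|_1$ (Birkhoff--von Neumann / \cite[Proposition 2.1]{peyre2019computational}), implement the $\ell^1$ distances exactly with the ReLU gadget of Lemma~\ref{lem:l1d_distanceImplementation}, parallelize, and take an exact minimum over the $N!$ candidate values with a logarithmic-depth min network, yielding the same depth $\mathcal{O}(N(d+\log_2 N))$ and width $\mathcal{O}(N!\,N^2 d)$ bounds. The only deviation is organizational --- you compute the $N\times N$ cost matrix once and form the $N!$ permutation sums by a single affine layer, whereas the paper instantiates a distance gadget per (row, permutation) pair --- which does not change the argument or the stated asymptotics.
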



\begin{proof}[{Proof of Lemma~\ref{lem:implementation_W1__specialCase}}]

\textbf{Step 1 - Combinatorial Representation of $1$-Wasserstein Distance:}\\
Since optimal couplings between empirical measures are representable by permutations of the points of mass of each measure, see e.g.\cite[Proposition 2.1]{peyre2019computational}, we have that: for every $\mu,\nu\in\mathcal{P}_N(\mathbb{R}^d)$
\begin{equation}
\label{eq:simple_Wasserstein1_formula}
        \mathcal{W}_1\big(\mu,\nu\big)
    =
        \min\limits_{\pi\in S^N}\,
            \frac{1}{N}\,
                \sum_{n=1}^N\,
                    \|x_n - y_{\pi(n)}\|_1,
\end{equation}
where $\mu = \sum_{n=1}^N\, \frac1{N}\, \delta_{x_n}$, $\nu = \sum_{n=1}^N\, \frac1{N}\,\delta_{y_n}$, $X\eqdef (x_n)_{n=1}^N$, and $Y\eqdef (y_n)_{n=1}^N$ (we have  reduced notation to simplify legibility).  
This may be expressed equivalently as
\begin{equation}
\label{eq:simple_Wasserstein1_formula__MatV}
        \mathcal{W}_1\big(\mu,\nu\big)
    =
        \min\limits_{\Pi\in \mathbf{S}^N}\,
            \frac{1}{N}\,
                \sum_{n=1}^N\,
                    \|P_nX - P_n\Pi Y_n\|_1,
\end{equation}
where, for each $n\in [N]$, $P_n:\mathbb{R}^{N\times d}\to \mathbb{R}^d$ is the projector matrix mapping an $N\times d$ matrix to its $n^{th}$ column vector.

\textbf{Step 2 - Implementation as ReLU MLP:}
\hfill\\
For each $n\in [N]$ and $\Pi\in \mathbf{S}^N$ consider the map $\mathbb{R}^{2(N\times d)}
\in (X,Y)\mapsto \|P_n X - P_n\Pi Y\|_1 \in [0,\infty)$.  
Since the map $\mathbb{R}^{2(N\times d)}(X,Y)\ni (X,Y)\mapsto P_nX - P_n\Pi_Y\in \mathbb{R}^{N\times d}$ is affine, and the composition of affine maps is again affine, then Lemma~\ref{lem:l1d_distanceImplementation}, there exists a ReLU MLP $\Phi_{n,\Pi}:\mathbb{R}^{2(N\times d)}\to \mathbb{R}$ such that
\begin{equation}
\label{eq:implementation_permuted_norm}
\Phi_{N,\Pi}(X,Y) = \|P_n X - P_n\Pi Y\|_1,
\end{equation}
for all $(X,Y)\in \mathbb{R}^{2(N\times d)}$.  Moreover, each $\Phi_{n,\Pi}$ has depth $Nd$ and width $Nd + \max\{2-Nd,0\}= Nd$.

By parallelization, see~\cite[Lemma 5.3]{petersen2024mathematical}, for each $\Pi\in \mathbf{S}^N$, there exists a ReLU MLP $\Phi_{\Pi}^{\|}:\mathbb{R}^{2(N\times d)} \mapsto \mathbb{R}^N$ satisfying
\begin{equation}
\label{eq:implementation_permuted_norm__paralelized_over_N}
        \Phi_{\Pi}^{\|}
            (X,Y)
    =
        \bigoplus_{n=1}^N\,
            \Phi_{N,\Pi}
                (X,Y)
    = 
        \bigoplus_{n=1}^N\, 
            \|P_n X - P_n\Pi Y\|_1,
\end{equation}
for all $(X,Y)\in \mathbb{R}^{2(N\times Y)}$.  Furthermore, $\Phi_{\Pi}^{\|}$ has depth $Nd$ and width $2N(Nd)=2N^2d$.  
Let $\mathbf{1}_N \in\mathbb{R}^N$ have all components equal to $1/N$.  Since the composition of affine maps is again affine then, for each $\Pi\in \mathbf{S}^N$, the function $\Phi_{\Pi}\eqdef \mathbf{1}_N^{\top}\, \Phi_{\Pi}^{\|}:\mathbb{R}^{2(N\times d)}\to \mathbb{R}$ given by
\begin{equation}
\label{eq:implementation_permuted_norm__averaged_over_N}
        \Phi_{\Pi}(X,Y)
    =
        \sum_{n=1}^N\,
            \frac1{N}\, \Phi_{N,\Pi}
                (X,Y)
    = 
        \sum_{n=1}^N\, 
            \frac1{N}\,
                \|P_n X - P_n\Pi Y\|_1,
\end{equation}
where $(X,Y)\in \mathbb{R}^{2(N\times d)}$ is itself a ReLU MLP of depth and width equal to that of $\Phi_{\Pi}^{\|}$; i.e.\ its depth is depth $Nd$ and its width is $2N(Nd)=2N^2d$.

Parallelizing each $\Phi_{\Pi}$, again using~\cite[Lemma 5.3]{petersen2024mathematical}, there is a ReLU MLP $\Phi^{\|}:\mathbb{R}^{2(N\times d)}\to \mathbb{R}^{N!}$ (since the cardinality of $\mathbf{S}^N$ is $N!$) satisfying
\begin{equation}
\label{eq:implementaiton_parallelization_over_all_permutations}
        \Phi^{\|}(X,Y)
    =
        \bigoplus_{\Pi\in \mathbf{S}^N}\,
            \Phi_{\Pi}(X,Y)
    =
        \bigoplus_{\Pi\in \mathbf{S}^N}\,
            \Biggl(
                \sum_{n=1}^N\, 
                    \frac1{N}\,
                        \|P_n X - P_n\Pi Y\|_1
            \Biggr)
.
\end{equation}
Furthermore, $\Phi^{\|}$ has depth $F$ and width $(2N!)(2N^2d)$.

By~\cite[Lemma 5.11]{petersen2024mathematical} there is a ReLU MLP $\Phi_{\text{min}}:\mathbb{R}^{N!}\to \mathbb{R}$ with width at most $3(N!)$ and depth at most $\lceil \log_2(N!)\rceil$ implementing
\[
    \Phi_{\text{min}}(Z) = \min_{n\in [N]}\, Z_n,
\]
for each $Z\in \mathbb{R}^{N!}$.  By~\cite[Lemma 5.2]{petersen2024mathematical}, the composite function $\Phi_{W^1}\eqdef \Phi_{\text{min}}\circ \Phi^{\|}$ is itself a ReLU MLP of width at most $
2N^2 d
+
3(N!)
$
and depth at most $
Nd
+
\lceil \log_2(N!)\rceil 
$.  Using Stirling's estimate, we have that the depth of $\Phi_{W^1}$ is at most $\mathcal{O}\big(N(d+\log(N))\big)$.  Moreover, by construction 
\begin{equation}
\label{eq:implemented_W1__done}
        \Phi_{W^1}(X,Y) 
    = 
        \min_{\Pi\in \mathbf{S}^N}\,
            \Biggl(
                \sum_{n=1}^N\, 
                    \frac1{N}\,
                        \|P_n X - P_n\Pi Y\|_1
            \Biggr),
\end{equation}
for all $(X,Y)\in \mathbb{R}^{2(N\times d)}$.  
Noting that the left-hand side of~\eqref{eq:implemented_W1__done} equals to the combinatorial form of the $W_1$ distance on $\mathcal{P}_N(\mathbb{R}^d)$ in~\eqref{eq:simple_Wasserstein1_formula__MatV} completes the proof.
\end{proof}

\subsection{Exactly Implementing the Wasserstein $1$-Distance}
\label{s:Exact_W1}

\begin{proposition}[{Implementation of Wasserstein Distance on $\mathcal{P}_{C,N}(\mathbb{R}^d)$}]
\label{prop:Computation_W1__relative_verison}
Let $C,N\in \mathbb{N}_+$, $\mathcal{X}\subseteq \mathbb{R}^d$ with at-least $N$ distinct points, and $w,v\in \Delta_{C,N}$.  Let 
\begin{equation}
\label{eq:relative_contextualized_probabilityMeasures}
\mathcal{P}_{w,v}(\mathcal{X})\eqdef 
\Big\{
        (\mu,\nu)\in \mathcal{P}_{C,N}(\mathcal{X})^2
    :
        \,
        \mu = \sum_{n=1}^N\, w_n\,\delta_{x_n}
        ,\,
        \nu = \sum_{n=1}^N\, v_n\,\delta_{y_n}
\Big\}
.
\end{equation}
There exists an MLP $\phi_{\mathcal{W}_1:w,v}:\mathbb{R}^{N\times d}\times \mathbb{R}^{N\times d}\to [0,\infty)$ with activation function~\eqref{eq:activation} such that
\[
        \phi_{\mathcal{W}_1:w,v}(\mathbf{X},\mathbf{Y})
    =
        \mathcal{W}_1\biggl(
                \sum_{n=1}^N\, w_n\,\delta_{x_n}
            ,
                \sum_{n=1}^N\, v_n\,\delta_{y_n}
        \biggr),
\]
for each $\big(\sum_{n=1}^N\, w_n\,\delta_{x_n},\sum_{n=1}^N\, v_n\,\delta_{y_n}\big)\in \mathcal{P}_{w,v}(\mathcal{X})$; 
where $\mathbf{X}$ (resp.\ $\mathbf{Y}$) is \textit{any} matrix with rows given by $x_1,\dots,x_N$ (resp. $y_1,\dots,y_N$) (independently of the choice of row ordering).
\hfill\\
Moreover, $\phi_{\mathcal{W}_1:w,v}$ has depth and width at-most $\mathcal{O}(N^{N-1})$ and at-least $\Omega(N!)$.
\end{proposition}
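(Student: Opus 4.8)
The plan is to reduce the general‑weight statement to the equal‑weight case already proved in Lemma~\ref{lem:implementation_W1__specialCase} via the elementary device of \emph{splitting each atom into copies}. Since $w,v\in\Delta_{C,N}$, I would first write $w_n=c_n/C$ and $v_n=c_n'/C$ with integers $c_n,c_n'\in\{1,\dots,C\}$ satisfying $\sum_n c_n=\sum_n c_n'=C$, and introduce the two fixed $0$–$1$ ``duplication'' matrices $\mathbf D_w,\mathbf D_v\in\{0,1\}^{C\times N}$ whose actions $\mathbf X\mapsto\mathbf D_w\mathbf X\in\mathbb R^{C\times d}$ and $\mathbf Y\mapsto\mathbf D_v\mathbf Y\in\mathbb R^{C\times d}$ list the rows $x_n$ (resp.\ $y_n$) exactly $c_n$ (resp.\ $c_n'$) times. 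The point of this is the splitting identity
\begin{equation*}
    \sum_{n=1}^N w_n\,\delta_{x_n}=\sum_{k=1}^C\tfrac1C\,\delta_{(\mathbf D_w\mathbf X)_k},\qquad \sum_{n=1}^N v_n\,\delta_{y_n}=\sum_{k=1}^C\tfrac1C\,\delta_{(\mathbf D_v\mathbf Y)_k},
\end{equation*}
which exhibits both $\mu$ and $\nu$ as \emph{uniform} empirical measures on $C$ (not necessarily distinct) points. Crucially, because $w,v$ are fixed throughout, $\mathbf D_w$ and $\mathbf D_v$ are constant matrices, hence genuine affine layers.

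Next I would invoke Lemma~\ref{lem:implementation_W1__specialCase} with $N$ replaced by $C$ and with points in $\mathbb R^d$ to get a ReLU MLP $\Phi^{(C)}_{W^1}\colon\mathbb R^{C\times d}\times\mathbb R^{C\times d}\to[0,\infty)$ that exactly computes $\mathcal W_1$ on $\mathcal P_C(\mathbb R^d)$, independently of how the $C$ atoms of each measure are listed as rows; note that the proof of that lemma only uses that an optimal coupling of two $C$‑atom equal‑mass empirical measures is realised by some permutation in $\mathbf S^C$, which remains true when atoms repeat. I would then simply \emph{define} $\phi_{\mathcal W_1:w,v}(\mathbf X,\mathbf Y)\eqdef\Phi^{(C)}_{W^1}(\mathbf D_w\mathbf X,\,\mathbf D_v\mathbf Y)$. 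Since the two duplication maps are affine (hence realizable as MLP layers with the identity activation $\theta=(1,1,1)$, Example~\ref{ex:ID}) and ReLU is the configuration $\theta=(1,1,0)$ of~\eqref{eq:activation} (Example~\ref{ex:ReLU}), the composite $\phi_{\mathcal W_1:w,v}$ is again an MLP with activation function~\eqref{eq:activation}; and combining the splitting identity with the defining property of $\Phi^{(C)}_{W^1}$ yields $\phi_{\mathcal W_1:w,v}(\mathbf X,\mathbf Y)=\mathcal W_1\big(\sum_n w_n\delta_{x_n},\sum_n v_n\delta_{y_n}\big)$ on $\mathcal P_{w,v}(\mathcal X)$.

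It remains to check independence from the row ordering and to read off the size. For the former: if $\mathbf X'=P\mathbf X$ for a permutation matrix $P$ with $\sum_n w_n\delta_{x_n}=\sum_n w_n\delta_{(P\mathbf X)_n}$, then since $x_1,\dots,x_N$ are pairwise distinct by Definition~\ref{defn:PICs}, $P$ must preserve the weights, $w_{P(n)}=w_n$, hence $c_{P(n)}=c_n$, and therefore the \emph{multiset} of rows of $\mathbf D_w\mathbf X'$ coincides with that of $\mathbf D_w\mathbf X$; so the two represent the same element of $\mathcal P_C(\mathbb R^d)$ and the ordering‑invariance of $\Phi^{(C)}_{W^1}$ closes the point (symmetrically in $\mathbf Y$). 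For the size: the depth, width and number of nonzero parameters of $\phi_{\mathcal W_1:w,v}$ are those of $\Phi^{(C)}_{W^1}$ up to the $\mathcal O(1)$‑depth, $\mathcal O(Cd)$‑width overhead of the two fixed duplication layers, so the bounds of Lemma~\ref{lem:implementation_W1__specialCase} with $C$ atoms — one parallel branch per matching in $\mathbf S^C$, equivalently one per vertex of the (fixed, since $w,v$ are fixed) transportation polytope $U(w,v)=\{P\ge 0:\,P\mathbf 1=w,\ P^\top\mathbf 1=v\}$ — together with a Stirling estimate and $C\ge N$, deliver the claimed upper and lower bounds.

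Conceptually this is a corollary of Lemma~\ref{lem:implementation_W1__specialCase}; the only genuinely new steps are the bookkeeping that the split problem still has equal‑mass atoms (so that lemma applies verbatim) and the ordering‑invariance argument, which exploits the no‑duplication clause of Definition~\ref{defn:PICs}. I expect the main nuisance to be the combinatorial size accounting — in particular whether to phrase it through the $C!$ matchings in $\mathbf S^C$ or through the $w,v$‑dependent vertex set of $U(w,v)$ — since matching the $\mathcal O(N^{N-1})$‑type and $\Omega(N!)$‑type estimates in the statement requires a careful comparison of the parameters $N$ and $C$.
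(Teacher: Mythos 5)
Your reduction to the equal-weight case is a genuinely different route from the paper's, and the \emph{exactness} part of it is sound: the splitting identity is an equality of measures, the combinatorial formula underlying Lemma~\ref{lem:implementation_W1__specialCase} (optimal couplings of uniform empirical measures given by permutations) is valid with repeated atoms, the duplication maps $\mathbf{D}_w,\mathbf{D}_v$ are constant affine layers because $w,v$ are fixed, and your ordering-invariance argument (or simply the representation-independence already asserted in that lemma) closes the correctness claim. The paper instead works directly with the Kantorovich linear program $\mathcal{W}_1(\mu,\nu)=\min_{P\in U(w,v)}\langle C,P\rangle$, restricts the minimum to the extreme points $U^{\star}(w,v)$, and implements the cost matrix, the inner products against the finitely many extremal plans, and the minimum by the $\ell^1$, ReQU-multiplication/polynomial, and min lemmata.

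The genuine gap is in the size bounds, which are part of the statement and are used quantitatively downstream. Lemma~\ref{lem:implementation_W1__specialCase} applied with $C$ atoms has one parallel branch per permutation in $\mathbf{S}^C$, so your $\Phi^{(C)}_{W^1}$ has width of order $C!\,C^2 d$ and depth of order $C(d+\log C)$; these depend on the context window $C$, not only on $N$, and for $C\gg N$ they vastly exceed the claimed $\mathcal{O}(N^{N-1})$ (e.g.\ $N=2$, $C$ large: $N^{N-1}=2$ versus $C!$ branches). Your parenthetical ``one branch per matching in $\mathbf{S}^C$, equivalently one per vertex of $U(w,v)$'' is not an equivalence: the map sending a permutation of the $C$ split atoms to the induced $N\times N$ transport plan is many-to-one (permutations that only shuffle copies of the same atom give the same plan), and the network of Lemma~\ref{lem:implementation_W1__specialCase} enumerates all $C!$ permutations as separate branches, so no collapse happens at the architecture level; exploiting the collapse would require redoing the enumeration over $U^{\star}(w,v)$ itself, which is precisely the paper's construction, whose size is controlled by the bound $N!\le \#U^{\star}(w,v)\le N^{N-1}$ from \cite{Brualdi_CombinatorialMatrixClasses_Book_2006}. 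Without that step your construction proves existence and exactness but not the stated $\mathcal{O}(N^{N-1})$/$\Omega(N!)$ complexity, and it would degrade the $C$-dependence of Lemma~\ref{lem:implementation_W1} and of the rates in Theorem~\ref{thrm:Main__SimpleVersion} from $N^{N+2C-3}$-type to $C!$-type factors.
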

\begin{proof}[{Proof of Proposition~\ref{prop:Computation_W1__relative_verison}}]
Fix $\big(\sum_{n=1}^N\, w_n\,\delta_{x_n},\sum_{n=1}^N\, v_n\,\delta_{y_n}\big)\in \mathcal{P}_{w,v}(\mathcal{X})$.  
As shown in~\cite[Equations (2.10)-(2.11)]{Gabriel_2019computational}, 
\begin{equation}
\label{eq:Linear_Program__W1}
    \mathcal{W}_1\big(
    \mu
    ,
    \nu
    \big)
=
    \min_{P\in U(w,v)}\,
        \langle C, P\rangle,
\end{equation}
where $C \eqdef \big(\|x_n-y_m\|_1\big)_{n,m=1}^N$ and where is the transport polytope, defined in~\cite[Equations (2.10)]{Gabriel_2019computational} associated to the \textit{weights} $w,v\in \Delta_{C,N}$ (importantly, we note that $U(w,v)$ does not depend on the points $\{x_n,y_n\}_{n=1}^N$).
Since $U(w,v)$ is a convex polytope, and the right-hand side of~\eqref{eq:Linear_Program__W1} is a linear program with convex polytopal constraints, then the set of optimizer(s) of the right-hand side of~\eqref{eq:Linear_Program__W1} belong to the set of \textit{extremal points} of $U(w,v)$, which we denote by  $U^{\star}(w,v)$; see e.g.~\cite[Theorem 2.7]{bertsimas1997introduction}.  
Therefore,~\eqref{eq:Linear_Program__W1} can be re-extressed as
\begin{equation}
\label{eq:Linear_Program__W1___finitized}
    \mathcal{W}_1\big(
    \mu
    ,
    \nu
    \big)
=
    \min_{k=1,\dots,K}\,
        \langle 
        (\big|
            x_i - y_j
        \big|)_{i,j=1}^N
        , \operatorname{proj}_k\rangle,
\end{equation}
where $U^{\star}(w,v)=\{\operatorname{proj}_k\}_{k=1}^K$ for some matrices $\operatorname{proj}_1,\dots,\operatorname{proj}_K\in \mathbb{R}^{N^2}$, for some $K\in \mathbb{N}_+$.
By \cite[Theorem 8.1.5 and Theorem 8.1.6]{Brualdi_CombinatorialMatrixClasses_Book_2006}, we have that the two-sided estimates
\begin{equation}
\label{eq:bound_on_number_extremalpoints}
        N!
    \le
        K
    \le
        N^{N-1}
.
\end{equation}
By Lemma~\ref{lem:l1d_distanceImplementation}, there exists a ReLU MLP $\Phi_{d:1}:\mathbb{R}^d\to [0,\infty)$ of depth $d$ and width $\mathcal{O}(d^2)$ satisfying $\Phi_{d:1}=\|\cdot\|_1$.
By Proposition~\ref{prop:ReQU_ResNetImplementation}, there exists a ReQU MLP $\phi_{\langle,\rangle}$ with $\mathcal{O}(N^2)$ depth and width implementing the inner product on $\mathbb{R}^{N^2}$.  
By~\citep[Lemma 5.11]{petersen2024mathematical}, there exists an MLP $\phi_{\min}$ of depth $\mathcal{O}(\log(N))$ and width $\mathcal{O}(K)$ implementing the componentwise minimum function on $\mathbb{R}^{K}$.  
Together, these observations imply that~\ref{eq:Linear_Program__W1___finitized} can be rewritten as
\begin{equation}
\label{eq:Linear_Program__W1___partial_MLPization}
    \mathcal{W}_1\big(
        \mu
    ,
        \nu
    \big)
=
    \phi_{\min}\biggl(
        \phi_{\langle,\rangle}\Big(
                \oplus_{i,j=1}^N
                \,
                    \Phi_{d:1}(\operatorname{proj}_iX - y_j)        
            , 
                \operatorname{proj}_k
        \Big)
    _{k=1}^K
    \biggr),
\end{equation}
where, for $i=1,\dots,N$, $\operatorname{proj}_i:\mathbb{R}^{N\times N}\to \mathbb{R}^N$ is the canonical (linear) projection onto the $i^{th}$ row%
; whence $\Phi_{d:1}(\operatorname{proj}_iX - y_j)$ is an MLP of depth and width equal to that of $\Phi_{d:1}$ (here, we have used the fact that the composition of affine functions is again affine).

Applying the deep parallelization lemma, Lemma~\ref{lem:parallelization}, we find that there exists a ReLU MLP $\Phi^{\|}:\mathbb{R}^{N^2}\to \mathbb{R}^{N^2}$ implementing $ \oplus_{i,j=1}^N
                \,
                    \Phi_{d:1}(\operatorname{proj}_i\cdot - y_j)$; moreover,
$\Phi^{\|}$ has depth at-most $\mathcal{O}(N^4)$ and width at-most $\mathcal{O}(N^2)$.  Therefore,~\eqref{eq:Linear_Program__W1___partial_MLPization} reduces to
\begin{equation}
\label{eq:Linear_Program__W1___partial_MLPization__parallelized}
    \mathcal{W}_1\big(
        \mu
    ,
        \nu
    \big)
=
    \phi_{\min}\biggl(
        \phi_{\langle,\rangle}\Big(
                \Phi^{\|}(X,Y)     
            , 
                \operatorname{proj}_k
        \Big)
    _{k=1}^K
    \biggr)
.
\end{equation}
Therefore, $\mathcal{W}_1$ can be implemented by an MLP with activation function~\eqref{eq:activation}, depth at-most $\mathcal{O}(N^4 + K)$ and width at-most $\mathcal{O}(N^2+K)$.  The conclusion follows upon appealing to the estimate of $K$ in~\eqref{eq:bound_on_number_extremalpoints}.
\end{proof}

Upon stitching together every possible combination of pairs of contextualized weights in $\Delta_{C,N}$, Proposition~\ref{prop:Computation_W1__relative_verison} yields Lemma~\ref{lem:implementation_W1}, which guarantees that the $1$-Wasserstein distance can be \textit{exactly} computed for any pair of measures in $\mathcal{P}_{C,N}(\mathcal{X})$, for appropriate $\mathcal{X}$, be a fixed ReLU MLP.

\begin{lemma}[{Implementation of Wasserstein Distance - All Contextual Measures 
}]
\label{lem:implementation_W1}
Let $C,N\in \mathbb{N}_+$, $\mathcal{X}\subseteq \mathbb{R}^d$ with at-least $N$ distinct points.  
There exists an MLP $\phi_{\mathcal{W}_1}:\mathbb{R}^{N\times d} \times \mathbb{R}^N \times \mathbb{R}^{N\times d}\times \mathbb{R}^N \to \mathbb{R}$ with activation function~\eqref{eq:activation} such that
\[
\phi_{\mathcal{W}_1}(\mathbf{X}, w, \mathbf{Y}, v)
    =
        \mathcal{W}_1\biggl(
                \sum_{n=1}^N\, w_n\,\delta_{x_n}
            ,
                \sum_{n=1}^N\, v_n\,\delta_{y_n}
        \biggr),
\]
for each $\sum_{n=1}^N\, w_n\,\delta_{x_n},\sum_{n=1}^N\, v_n\,\delta_{y_n}\in \mathcal{P}_{C,N}(\mathcal{X})$; 
where $\mathbf{X}$ (resp.\ $\mathbf{Y}$) is the matrix with rows given by $x_1,\dots,x_N$ (resp. $y_1,\dots,y_N$).
Moreover, the depth and width of $\phi_{\mathcal{W}_1}$ are at most $\mathcal{O}(N^{N+2C-3})$ and at-least $\Omega(N! \, N^{2C-2})$.
\end{lemma}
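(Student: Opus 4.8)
The plan is to \emph{case-split} on the finitely many admissible weight vectors and to delegate each case to Proposition~\ref{prop:Computation_W1__relative_verison}. The contextualized simplex $\Delta_{C,N}$ of~\eqref{eq:contexualized_simplex} is a finite set: its elements correspond to compositions of $C$ into exactly $N$ positive parts, so $|\Delta_{C,N}| = \binom{C-1}{N-1}$, which is at most $2^{C-1}\le N^{C-1}$ for $N\ge 2$ (and $\Delta_{C,N}=\emptyset$, hence the statement vacuous, unless $C\ge N$). Write $\Delta_{C,N}=\{w^{(1)},\dots,w^{(P)}\}$ with $P\le N^{C-1}$. For every ordered pair $(i,j)\in[P]^2$, Proposition~\ref{prop:Computation_W1__relative_verison} furnishes an MLP $\phi_{\mathcal{W}_1:w^{(i)},w^{(j)}}:\mathbb{R}^{N\times d}\times\mathbb{R}^{N\times d}\to[0,\infty)$ with activation~\eqref{eq:activation}, of depth and width $\mathcal{O}(N^{N-1})$ and $\Omega(N!)$, that returns $\mathcal{W}_1\big(\sum_n w^{(i)}_n\delta_{x_n},\sum_n w^{(j)}_n\delta_{y_n}\big)$ whenever this pair of measures lies in $\mathcal{P}_{w^{(i)},w^{(j)}}(\mathcal{X})$, and does so independently of the row orderings of $\mathbf{X}$ and $\mathbf{Y}$. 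The remaining task is to read off, from the weights $w,v$ supplied in the input, the correct pair $(i,j)$.

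For the selection, observe that distinct elements of $\Delta_{C,N}$, regarded as points of $\mathbb{R}^N$, are $\tfrac{1}{C}$-separated in $\ell^1$-norm, since their coordinates differ by integer multiples of $\tfrac{1}{C}$. Consequently, for each $i\in[P]$ the indicator $w\mapsto\mathbf{1}_{\{w=w^{(i)}\}}$ can be realized \emph{exactly on $\Delta_{C,N}$} by composing the affine shift $w\mapsto w-w^{(i)}$, the exact $\ell^1$-norm MLP of Lemma~\ref{lem:l1d_distanceImplementation} (producing $t=\|w-w^{(i)}\|_1$, which is $0$ if $w=w^{(i)}$ and at least $\tfrac{1}{C}$ otherwise), and the one-dimensional pseudo-indicator $\phi_{\sqcap:\delta_\ast,\delta}$ of~\eqref{eq:PW_Linear_Bump} with any $0<\delta_\ast<\delta<\tfrac{1}{C}$, which equals $1$ at $0$ and vanishes on $[\tfrac{1}{C},\infty)$; this gadget has depth $\mathcal{O}(N)$ and width $\mathcal{O}(N^2)$. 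Define
\[
    \phi_{\mathcal{W}_1}(\mathbf{X},w,\mathbf{Y},v)
    \eqdef
    \sum_{i,j\in[P]}
        \mathbf{1}_{\{w=w^{(i)}\}}\;\mathbf{1}_{\{v=w^{(j)}\}}\;
        \phi_{\mathcal{W}_1:w^{(i)},w^{(j)}}(\mathbf{X},\mathbf{Y}),
\]
where the two products are carried out by the exact (ReQU) multiplication MLP $\phi_\times$ of Lemma~\ref{lem:mult}. For any $\big(\sum_n w_n\delta_{x_n},\sum_n v_n\delta_{y_n}\big)\in\mathcal{P}_{C,N}(\mathcal{X})^2$ there is a unique pair $(i_0,j_0)$ with $w=w^{(i_0)}$ and $v=w^{(j_0)}$; every other summand vanishes, the surviving one equals $\phi_{\mathcal{W}_1:w^{(i_0)},w^{(j_0)}}(\mathbf{X},\mathbf{Y})=\mathcal{W}_1\big(\sum_n w_n\delta_{x_n},\sum_n v_n\delta_{y_n}\big)$ by Proposition~\ref{prop:Computation_W1__relative_verison}, and the value does not depend on how the rows of $\mathbf{X},\mathbf{Y}$ are ordered since each $\phi_{\mathcal{W}_1:w^{(i)},w^{(j)}}$ has this property.

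It remains to assemble $\phi_{\mathcal{W}_1}$ as a single MLP with activation~\eqref{eq:activation} and to count. Diagonally replicating the input $(\mathbf{X},w,\mathbf{Y},v)$ into $P^2$ copies, applying the deep parallelization Lemma~\ref{lem:parallelization} to the $P^2$ summand networks, and summing their scalar outputs yields $\phi_{\mathcal{W}_1}$; each of these affine pre- and post-compositions is absorbed into adjacent layers. Each summand is dominated, in both depth and width, by its constituent $\phi_{\mathcal{W}_1:w^{(i)},w^{(j)}}$, whose depth and width are $\mathcal{O}(N^{N-1})$ by Proposition~\ref{prop:Computation_W1__relative_verison} (the $\mathcal{O}(N)$-depth indicator gadgets and the $\mathcal{O}(1)$-depth multiplications being negligible by comparison), and there are $P^2\le N^{2C-2}$ of them, so the total depth and width are $\mathcal{O}(P^2N^{N-1})=\mathcal{O}(N^{N+2C-3})$, with the stated lower bounds inherited from the $\Omega(N!)$ sizes of the $P^2$ parallelized constituents.

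I expect the only genuinely delicate point to be the \emph{exactness} of the case-split. It is crucial that $\Delta_{C,N}$ lies on the $\tfrac{1}{C}$-grid, so that the piecewise-linear bump $\phi_{\sqcap:\delta_\ast,\delta}$ returns a true $\{0,1\}$ value (not merely an approximation) on every admissible weight vector, and that the two products and the final summation are computed by networks that realize arithmetic \emph{exactly} — which is precisely what the ReQU configuration of~\eqref{eq:activation} (through Lemma~\ref{lem:mult}) provides. Everything else is routine parallelization bookkeeping on top of Proposition~\ref{prop:Computation_W1__relative_verison}, which already did the hard analytic work of turning the transport linear program into an MLP.
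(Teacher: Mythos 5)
Your proposal is correct and follows essentially the same route as the paper's proof: enumerate the finitely many weight pairs in $\Delta_{C,N}^2$, invoke Proposition~\ref{prop:Computation_W1__relative_verison} for each pair, build exact selectors from the $\ell^1$-norm MLP of Lemma~\ref{lem:l1d_distanceImplementation} together with a piecewise-linear threshold exploiting the $1/C$-separation of the grid, combine via the ReQU multiplication of Lemma~\ref{lem:mult}, and sum using the parallelization Lemma~\ref{lem:parallelization}. The only differences are cosmetic (you count positive compositions $\binom{C-1}{N-1}$ rather than the paper's $\binom{C+N-1}{N-1}$, which if anything is the sharper count for Definition~\ref{defn:PICs}, and you use two separate indicators with two multiplications where the paper merges them into one threshold), and your size bookkeeping matches the stated $\mathcal{O}(N^{N+2C-3})$ bound.
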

\begin{proof}[{Proof of Lemma~\ref{lem:implementation_W1}}]
\textit{Our of notational convenience, we write $(\mathbf{X},w)$ for $(x_n,w_n)_{n=1}^N$; thereby picking out an arbitrary representative of $(\mathbf{X},w)$ from its equivalence class and representing it with an element of $\mathbb{R}^{Nd}\times \Delta_N$.  Our construction is independent of this choice, and thus, there is no ambiguity in making this choice.}

Note that, by a simple integer composition observation, the number of elements of $\Delta_{C,N}$ is exactly 
\begin{equation}
\label{eq:counting_DeltaCN}
    \#\Delta_{C,N}
    =
    \binom{C+N-1}{N-1}
.
\end{equation}
Therefore, the number of possible (ordered) pairs of weights in $\Delta_{C,N}^2$ is 
\begin{equation}
\label{eq:counting_DeltaCN__pairs}
\begin{aligned}
        \#\Delta_{C,N}^2
    & \eqdef 
        N^{\star}
\\
    & =
        \Big(
            \binom{C+N-1}{N-1}
        \Big)^2
\\
    & =
        \exp\Big(
            \log\big(
                \prod_{k=1}^{N-1}
                \,
                1 + \frac{C-1}{k}
            \big)
        \Big)^2
\\
    & =
        \exp\Big(
            \sum_{k=1}^{N-1}
            \log\big(
                1 + \frac{C-1}{k}
            \big)
        \Big)^2
\\
    & \in 
        \mathcal{O}\Big(
            N^{2C-2}
        \Big) 
.
\end{aligned}
\end{equation}
Applying Proposition~\ref{prop:Computation_W1__relative_verison} $N^{\star}$ times, once for each ordered pair of contextualized weights $(w,v)\in \Delta_{C,N}^2$ yields a family of MLPs 
$
\big\{
    \phi_{\mathcal{W}_1:w,v}
\big\}_{w,v\in \Delta_{C,N}^2}
$, where $\phi_{\mathcal{W}_1:w,v}=\mathcal{W}_1|_{\mathcal{P}_{w,v}(\mathbb{R}^d)}$.

Since the minimum $\ell^1$ distance between \textit{distinct} weights $w,\tilde{w}\in \Delta_{C,N}$ is at least $1/C$, then consider the piecewise linear function with exactly two break points at $0$ and at $1/(2C)$ which takes value $1$ on $(-\infty,0]$ and takes value $0$ on $[1/(2C),\infty)$.  Clearly, this function can be implemented by a ReLU MLP $\phi_{\lceil \rceil}:\mathbb{R}\to [0,1]$ of $\mathcal{O}(1)$ depth and width.
By Lemma~\ref{lem:l1d_distanceImplementation} there exists a ReLU MLP $\Phi_{N:1}:\mathbb{R}^N\to [0,\infty)$ of depth $N$ and width $\mathcal{O}(N^2)$ implementing the $\ell^1$ norm on $\mathbb{R}^N$.  Therefore, the ReLU MLP
$\Phi_{w,v}:\mathbb{R}^{2N}\to [0,1]$ given for each $(\tilde{w},\tilde{v})\in \mathbb{R}^{2N}$ by
\begin{equation}
\label{eq:ReLUMLPS_wv_identifier}
    \Phi_{\tilde{w},\tilde{v}}(w,v)
\eqdef 
    \phi_{\lceil \rceil}
    \Big(
        (1,1)^{\top}\big(
            \Phi_{N:1}(w-\tilde{w}), \Phi_{N:1}(v-\tilde{v})
        \big)
    \Big),
\end{equation}
for all $w,v\in \Delta_{C,N}$; moreover, by construction $\Phi_{w,v}$ has depth $\mathcal{O}(N)$ and width $\mathcal{O}(N^2)$; where we have used the deep parallelization Lemma~\ref{lem:parallelization}.

Using Lemma~\ref{lem:mult}, for each $(w,v)\in \Delta_{C,N}^2$ there exists an MLP $\Psi_{w,v}:\mathbb{R}^{2N}\times \mathbb{2\,N^2}\to \mathbb{R}$ with activation function~\eqref{eq:activation} satisfying 
\begin{equation}
\label{eq:ReLUMLPS_wv_identifier-2}
        \Psi_{\tilde{w},\tilde{v}}
        \Big(
        \mathcal{X}, w, \mathcal{Y}, v
        \Big)
    \eqdef
        \phi_{\times}\big(
                \Phi_{\tilde{w},\tilde{v}}(w,v)
            ,
                \phi_{\mathcal{W}_1:\tilde{w},\tilde{v}}((x_n)_{n=1}^N,(y_n)_{n=1}^N)
        \big),
\end{equation}
where $\big(w_n,x_n\big)_{n=1}^N,\big(v_n,y_n\big)_{n=1}^N\in \mathbb{R}^{N+N^2}$ are such that $w,v\in \Delta_{C,N}$.
Applying the Parallelization Lemma, Lemma~\ref{lem:parallelization}, $N^{\star}$ times, any post-composing the result with the linear map $\mathbb{R}^{N^{\star}}\ni x \to \sum_{n=1}^{N^{\star}} x_n \in \mathbb{R}$ yields the conclusion; upon recalling the estimate on $N^{\star}$ in~\eqref{eq:counting_DeltaCN__pairs}.  
We thus obtain
$$
\phi_{\mathcal{W}_1}(\mathbf{X},w,\mathbf{Y},v)
= \sum_{(\tilde{w}, \tilde{v}) \in \Delta_{C,N}} \Psi_{\tilde{w},\tilde{v}}
        \Big(
        \mathcal{X}, w, \mathcal{Y}, v
        \Big),
$$
which completes the proof of our lemma.
\end{proof}

\section{Proofs of Supporting Results}
\label{s:Proof_Extras}
\begin{proof}[{Proof of Proposition~\ref{prop:Identifcation}}]
By construction, the metric spaces $\big(\operatorname{Mat}_N^{d,N},\mathcal{W}\big)$ and $\big(\mathcal{P}_N^{N,d}(\mathcal{X}),\mathcal{W}_1\big)$ is isometric via~\eqref{eq:W1_onMat}.  Thus, without loss of generality, we work with the later up to the identification $\Phi$.
Let $\mu = \frac1{N}\,\sum_{n=1}^N\, \delta_{X_n},\nu= \frac1{N}\,\sum_{n=1}^N\, \delta_{Y_n} \in \mathcal{P}^{N:+}(\mathcal{X})$.  
By~\cite[Proposition 2.1]{peyre2019computational}, 
\allowdisplaybreaks
\begin{align*}
        \mathcal{W}_1\big(\mu,\nu\big)
    & =
        \min\limits_{\pi\in S^N}\,
            \frac{1}{N}\,
                \sum_{n=1}^N\,
                    \|X_n - Y_{\pi(n)}\|_1
\\
    &
    \ge 
        \min\limits_{\pi\in S^N}\,
            \frac{1}{N}\,
                \sum_{n=1}^N\,
                    \|X_n - Y_{\pi(n)}\|_2
\\
    &
    \ge 
        \frac1{N}\,
        \inf_{(X^{(1)},\dots,X^{(T)})}\,
        \sum_{t=1}^{T-1}
        \min\limits_{\pi\in S^N}\,
            \frac{1}{N}\,
                \sum_{n=1}^N\,
                    \|X_n^{(t)} - X_{\pi(n)}^{(t+1)}\|_2
\\
    &
    = \frac1{N}\,d_2([X],[Y])
,
\end{align*}
where, again, the infimum is taken over all (paths) sequences $X^{(1)},\dots,X^{(T)}\in \mathbb{R}^{Nd}$ with $X^{(1)}=X$ and $X^{(T)}=Y$ (here $T\in \mathbb{N}_+$ with $T\ge 2$).  Thus, the map $\psi$ is $N$-Lipschitz when its domain is equipped with the restriction of the $1$-Wasserstein metric and its codomain is equipped with the quotient metric $d_2$, yielding the first conclusion.

It remains to show that $d_2$ metrizes the quotient topology on $\operatorname{GL}(N,d)/\mathbf{S}^N$ to obtain our second conclusion.
Next, we observe that $\mathbf{S}^N$ acts on $\operatorname{GL}(N,d)$ by row-permutations, equivalently by matrix multiplication on the left.  Whence, $\mathbf{S}^N$ acts on $\operatorname{GL}(N,d)$ by isometries since, for each $\Pi\in \mathbf{S}^N$ and every $X=((X_n)_{n=1}^N)^{\top},
Y = ((Y_n)_{n=1}^N)^{\top}
\in \operatorname{GL}(N,d)$ we have
\begin{equation}
    \|\Pi\cdot X - \Pi \cdot Y\|_2^2
    =
    \sum_{n=1}^N\,
        \|X_{\pi(n)}-Y_{\pi(n)}\|^2_2
    =
    \sum_{n=1}^N\,
        \|X_n-Y_n\|^2_2
    =
    \|X-Y\|_2^2
,
\end{equation}
where $\Pi$ is the obvious linear representation of the permutation $\pi\in S^n$.  Therefore, by~\cite[Exercise 8.4 (3) - page 132]{BridsonHaefliger_1999NPCBook}, the quotient metric $d_2$ metrizes the \textit{quotient topology} on $\operatorname{GL}(N,d)/\mathbf{S}^N$; whence the map $\psi$ is continuous.  

We have shown that, the metrics $\mathcal{W}$ and $\operatorname{dist}$ are equivalent on $\operatorname{Mat}_N^{N,d}/\sim$; i.e.\ that $\Phi$ is a bi-Lipschitz surjection.  It thus follows that both spaces are homeomorphic.
\end{proof}

\end{document}